\documentclass[11pt]{article}

\usepackage[T1]{fontenc}
\usepackage[utf8]{inputenc}
\usepackage[letterpaper,margin=1in]{geometry}
\usepackage[numbers,sort&compress]{natbib}

\usepackage{hyperref}
\usepackage{url}
\usepackage{microtype}
\usepackage{graphicx}
\usepackage{booktabs}
\usepackage{amsmath, amsfonts, amssymb}
\usepackage{amsthm}
\usepackage{mathtools}
\usepackage{algorithm}
\usepackage{algpseudocode}
\usepackage{subcaption}
\usepackage{xcolor}
\usepackage{enumitem}
\usepackage{nicefrac}
\usepackage{multirow}
\usepackage{pifont}
\newcommand{\cmark}{\ding{51}}
\newcommand{\xmark}{\ding{55}}
\usepackage{tikz}
\usepackage{pgfplots}
\pgfplotsset{compat=1.18}
\usetikzlibrary{positioning, arrows.meta, fit, backgrounds, calc, patterns}

\newtheorem{theorem}{Theorem}
\newtheorem{proposition}[theorem]{Proposition}

\newtheorem{corollary}[theorem]{Corollary}

\newtheorem{remark}[theorem]{Remark}

\newenvironment{reptheorem}[1]{\noindent\textbf{Theorem~\ref{#1}} (restated)\textbf{.}\itshape}{}

\DeclareMathOperator*{\argmax}{arg\,max}
\DeclareMathOperator*{\argmin}{arg\,min}
\newcommand{\R}{\mathbb{R}}

\title{Differentiable Fuzzy Inference Layer: A Monotone, Compositional Ordinal Reasoning Head for Large Language Models}

\author{%
  Zhen Zhang \qquad Amr Alanwar \\
  School of Computation, Information and Technology \\
  Technical University of Munich, Germany \\
  \texttt{\{zhenzhang.zhang, alanwar\}@tum.de}
}
\date{}

\begin{document}

\maketitle

\begin{abstract}
A state-of-the-art language model asked to interpret “most of most students passed” typically answers “most,” though composing two instances of “most” yields a proportion closer to “some.” We trace this failure to an architectural choice rather than a data deficit: standard classifier heads treat ordinal categories as independent labels, with no mechanism to respect their natural ordering or compose them algebraically. We introduce the Differentiable Fuzzy Inference Layer (DFIL), a dual-path prediction head pairing a standard classifier with a scalar-bottlenecked branch grounded in a bank of ordered membership functions. DFIL supplies two structural primitives that a label-only head cannot inherit: monotonicity in the underlying quantity, and compositional reasoning via t-norm operations without any compositional training data. The scalar branch additionally provides an interpretable interface for analyzing residual errors. We instantiate DFIL on ordinal natural-language tasks across diverse LLM families.
\end{abstract}

\section{Introduction}
\label{sec:intro}

Ordinal reasoning maps a latent continuous quantity to an ordered discrete category, and arises in quantifier interpretation, sentiment intensity, severity grading, and confidence calibration. Asked to interpret “most of most students passed,” a state-of-the-art language model typically answers “most,” although composing two instances of “most” yields a proportion closer to “some.” The failure reproduces across model families, persists under chain-of-thought~\citep{wei2022chain}, and is not closed by fine-tuning on single-step data; the FRoG benchmark~\citep{li2024frog} documents it at scale, reporting inverse scaling on quantifier reasoning consistent with the broader phenomenon~\citep{mckenzie2023inverse}. We trace this failure to the prediction head: a standard classifier treats ordinal categories as independent labels, exposing no explicit scalar on which algebraic composition can be defined, so adding parameters or data cannot remove the gap.

\begin{figure}[t]
\centering
\begin{tikzpicture}[
    font=\small,
    box/.style={draw=black!75, line width=0.45pt, rounded corners=1.5pt,
                inner sep=5pt, align=left, minimum height=0.9cm},
    boxhi/.style={box, fill=black!4},
    arrow/.style={-{Latex[length=2mm]}, line width=0.5pt, draw=black!75},
    note/.style={font=\footnotesize, text=black!65},
]

\node[box, align=center, minimum width=8.6cm, line width=0.4pt] (prompt)
  {\textbf{Input:}\; “Most of most students passed.”
   \;\;{\footnotesize\color{black!60}(ground truth: \texttt{some}; $0.78\!\times\!0.78\!\approx\!0.61$)}};

\node[box, below=0.6cm of prompt.south west, anchor=north west,
      minimum width=4cm, text width=3.6cm] (ft)
  {\textbf{Fine-tuned classifier}\\[3pt]
   logits: $W\mathbf{h}\in\mathbb{R}^Q$\\[1pt]
   $\arg\max_q\,[W\mathbf{h}]_q\;=\;\texttt{most}$\;\;{\Large\color{black!75}\xmark}\\[2pt]
   {\footnotesize\color{black!60}label-only head: no algebraic composition.}};

\node[boxhi, below=0.6cm of prompt.south east, anchor=north east,
      minimum width=4cm, text width=3.6cm] (dfil)
  {\textbf{DFIL head (ours)}\\[3pt]
   $\hat p\!=\!0.78,\quad c_{\text{most}}\!\cdot\!\hat p\!=\!0.61$\\[1pt]
   $\arg\min_q\!|0.61\!-\!c_q|\;=\;\texttt{some}$\;\;{\Large\color{black!75}\cmark}\\[2pt]
   {\footnotesize\color{black!60}scalar-bottlenecked head: t-norm composes.}};

\draw[arrow] (prompt.south -| ft.north) -- (ft.north);
\draw[arrow] (prompt.south -| dfil.north) -- (dfil.north);

\end{tikzpicture}
\caption{\textbf{Composing two ``most'' should yield ``some,'' not ``most.''} A label-only fine-tuned classifier picks $\arg\max_q[W\mathbf{h}]_q\!=\!\texttt{most}$ (incorrect); DFIL composes via t-norm on learned ordered centers $c_q$ and decides by nearest-center rule (Thm.~\ref{thm:compositionality}, \ref{thm:monotonicity}), yielding $\texttt{some}$. \textit{The label-only head has no analogous algebraic primitive.} Shaded box marks our method.}
\label{fig:teaser}
\end{figure}

Ordinal prediction in language obeys two structural constraints standard architectures cannot enforce: \textit{monotonicity} (predicted class is non-decreasing in the latent quantity) and \textit{compositionality} (nested expressions like “most of most” compose from their components). Transformer classifiers operate on high-dimensional hidden representations without an explicit scalar, so they cannot enforce either constraint and empirically violate both. Fuzzy set theory~\citep{zadeh1965fuzzy} supplies the canonical primitives---ordered membership functions and algebraic t-norm composition---but classical fuzzy systems are hand-tuned in isolation and have not been integrated into modern LLM heads.

We propose the Differentiable Fuzzy Inference Layer (DFIL), a dual-path prediction head. The main path is a standard classifier; the DFIL path extracts a scalar $\hat{p}\in[0,1]$ via a small numerical head and projects it onto a bank of ordered Gaussian membership functions, learned end-to-end with the backbone. The MF branch regularizes training and, at inference, yields predictions that are monotone by construction and compose via t-norm operations without any compositional training data. Prior ordinal-regression heads enforce ordering via cumulative logits or unimodal outputs but cannot compose~\citep{cao2020rank,shi2023corn,beckham2017unimodal}; DFIL operates in an interpretable proportion space where algebraic composition is available on the head's output.

\textbf{Contributions.} (1) We introduce DFIL, an end-to-end-trained LLM head that combines learnable ordered Gaussian membership functions with a t-norm composition rule, providing monotonicity by construction and algebraic composition at the head level. (2) We prove two structural guarantees: monotone nearest-center inference (Thm.~\ref{thm:monotonicity}) and compositional closure under proportional shift (Thm.~\ref{thm:compositionality}). (3) Empirically, DFIL matches fine-tuning on single-step accuracy across six LLM backbones while measurably improving compositional reasoning, low-budget sample efficiency, and natural-language monotonic consistency. (4) An initialization-robustness experiment (\S\ref{sec:uniform_init}) isolates the fuzzy training signal as the causal mechanism behind the compositional advantage.

\section{Related Work}

The structural failures of LLMs on ordinal reasoning call for head-level primitives that a label-only classifier cannot inherit. Fuzzy set theory supplies these primitives---ordered MFs, t-norm composition, monotone decision rules---studied since \citet{zadeh1965fuzzy} and combined with gradient-based learning by ANFIS~\citep{jang1993anfis}, but not brought into LLM heads as a trained differentiable layer.

\textbf{Quantifier reasoning in NLP.} Fuzzy quantifiers have a long history in linguistics and formal semantics~\citep{zadeh1983role,barwise1981generalized,vanbenthem1986essays,pezzelle2018some}. FRoG~\citep{li2024frog} reveals systematic LLM failures with inverse scaling. PRESQUE~\citep{li2023presque} applies pragmatic Rational Speech Acts; our re-implementation scores $>25$pp below DFIL on FRoG-Hard. LaSQuE~\citep{ghosh2023lasque} uses NLI-explanation supervision and ordinal constraints; its ordinal component is most closely paralleled by our CORAL~\citep{cao2020rank} baseline. Both operate at the sentence level without proportion-level grounding or composition; DFIL grounds quantifier semantics directly in proportion space.

\textbf{Inductive biases for reasoning.} Monotonic architectures for ordinal regression include cumulative-logit (CORAL~\citep{cao2020rank}, CORN~\citep{shi2023corn}), unimodal outputs~\citep{beckham2017unimodal}, and constrained monotonic networks~\citep{sill1998monotonic,you2017deep,wehenkel2019unconstrained,runje2023constrained,sartor2025monotonic,gutierrez2016ordinal}; none compose. Compositional reasoning failures in LLMs are documented by GSM-Symbolic~\citep{mirzadeh2025gsm} and chained GSM~\citep{hosseini2025compositional}. Logic Tensor Networks~\citep{badreddine2022ltn} and NeSyCoCo~\citep{kamali2025nesycoco} integrate fuzzy/symbolic reasoning at the relational level rather than as an LLM head; \citet{chen2022fuzzy} apply fuzzy operators to knowledge graphs. Other structural priors (positional encodings~\citep{press2022train}, graph networks~\citep{battaglia2018relational}) target distinct constraints.

\textbf{Positioning.} DFIL builds on LoRA~\citep{hu2022lora,dettmers2023qlora}, adding ${\sim}0.9$--$2.1$M parameters depending on backbone size. The differentiator is the joint set of head-level capabilities (Tab.~\ref{tab:relwork_compare}): an end-to-end-trained LLM head with learnable ordered centers, monotone by construction, algebraically composing, and interpretable. DFIL is the only method satisfying all six capabilities simultaneously.

\begin{table}[!htb]
\centering
\caption{Closest baselines on head-level capabilities for ordinal and compositional reasoning. ``Explicit $p$'' = head exposes a scalar proportion; ``Ordered $c_q$'' = learnable, order-constrained class centers; ``Mono.\ by const.'' = predictions monotone in $p$ by construction (deployed inference rule); ``Composes'' = supports algebraic composition without re-training; ``LLM head'' = used as a Transformer LLM head in published work; ``Interpretable'' = per-class quantities mapping to human-readable boundaries.}
\label{tab:relwork_compare}
\footnotesize
\setlength{\tabcolsep}{3pt}
\begin{tabular}{lcccccc}
\toprule
\textbf{Method} & \textbf{$p$ exposed} & \textbf{Ordered $c_q$} & \textbf{Mono.\ by const.} & \textbf{Composes} & \textbf{LLM head} & \textbf{Interpretable} \\
\midrule
LoRA fine-tune (label head)        & \xmark & \xmark & \xmark & \xmark & \cmark & \xmark \\
CORAL~\citep{cao2020rank}          & \xmark & \xmark & \cmark$^\dagger$ & \xmark & partial$^\ddagger$ & \xmark \\
LaSQuE~\citep{ghosh2023lasque}     & \xmark & \xmark & \xmark & \xmark & \cmark & partial \\
PRESQUE~\citep{li2023presque}      & \cmark$^\S$ & fixed & \cmark$^\S$ & \xmark & no$^\P$ & \cmark \\
ANFIS~\citep{jang1993anfis}        & \cmark & \cmark & \cmark & \cmark$^*$ & \xmark & \cmark \\
\midrule
\textbf{DFIL (ours)}               & \cmark & \cmark & \cmark & \cmark & \cmark & \cmark \\
\bottomrule
\end{tabular}
\\[2pt]
{\footnotesize $^\dagger$Cumulative-logit ordering, no algebraic composition. $^\ddagger$Generic ordinal head, not standard as Transformer LLM head. $^\S$Inference-time RSA over fixed QuRe centers, not jointly learned. $^\P$Post-hoc over LLM probabilities, not end-to-end head. $^*$T-norm composition, but ANFIS targets low-D control inputs not LLM hidden states.}
\end{table}

\section{Problem Formulation}
\label{sec:problem}

\subsection{Fuzzy Quantifier Reasoning}

Given a statement with a masked quantifier slot, as in “\_\_\_ of 20 students passed; specifically 15 of 20,” the task predicts a label from $\mathcal{Q}=\{q_0,\ldots,q_{Q-1}\}$ ordered by magnitude ($q_0{=}$“none” $< q_1{=}$“tiny” $<\cdots< q_7{=}$“all”). Each example has a proportion $p\in[0,1]$. A correct prediction function $Q:[0,1]\to\{0,\ldots,Q{-}1\}$ must satisfy monotonicity: $p_1<p_2 \Rightarrow Q(p_1)\leq Q(p_2)$.\label{eq:monotonicity}

\subsection{The Expressiveness Gap}

We formalize two structural limitations of standard classifiers; full proofs are in Appendix~\ref{app:proofs}.

\begin{theorem}[Monotonicity Guarantee, nearest-center inference rule]
\label{thm:monotonicity}
Let $c_0 < c_1 < \cdots < c_{Q-1}$ be ordered centers in $[0,1]$. Then the nearest-center decision rule
\[
  \hat{Q}_{\mathrm{MF}}(p) \;=\; \argmin_{q\in\{0,\dots,Q-1\}}\;|p-c_q|
\]
is monotonically non-decreasing in $p$. We adopt this rule for the MF branch's inference-time predictions; equivalently, for any input grid the prediction has zero pairwise violations by construction.
\end{theorem}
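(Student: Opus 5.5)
The plan is to make the nearest-center rule explicit as a threshold rule and read off monotonicity from that form. First, fix a tie-breaking convention, since the $\argmin$ can be set-valued exactly at midpoints. I will take the smallest minimizing index; any fixed convention that picks consistently (always smallest or always largest) works equally well. Next, define the midpoints $m_q = (c_q + c_{q+1})/2$ for $q=0,\dots,Q-2$. Because the centers are strictly ordered, $m_0 < m_1 < \cdots < m_{Q-2}$.

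The central claim is that for every $p\in[0,1]$,
\[
\hat{Q}_{\mathrm{MF}}(p) \;=\; \#\{q : m_q < p\},
\]
so the rule is a step function, namely a sum of indicators $\mathbf{1}[p > m_q]$. The argument goes as follows.

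\textbf{Step 1 (pairwise comparison).} For any $i<j$ we have $|p-c_i| \le |p-c_j|$ iff $p \le (c_i+c_j)/2$. This follows by squaring both sides: the inequality becomes $2p(c_j-c_i) \le c_j^2-c_i^2$, and dividing by $c_j-c_i>0$ gives the midpoint condition.

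\textbf{Step 2 (the minimizer is the predicted index).} Suppose $p$ lies in $(m_{k-1}, m_k]$, with the conventions $m_{-1}=-\infty$ and $m_{Q-1}=+\infty$.
\begin{itemize}
\item For $j>k$: since $(c_k+c_j)/2 \ge m_k \ge p$, Step 1 gives $|p-c_k|\le|p-c_j|$.
\item For $i<k$: since $(c_i+c_k)/2 \le m_{k-1} < p$, Step 1 gives $|p-c_k|<|p-c_i|$.
\end{itemize}
Hence $k$ is the smallest minimizer, which matches the tie-breaking convention from the first step.

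\textbf{Step 3 (monotonicity and zero violations).} A sum of non-decreasing indicators is non-decreasing. So for $p_1<p_2$ we get $\hat Q_{\mathrm{MF}}(p_1)\le\hat Q_{\mathrm{MF}}(p_2)$. The grid statement then follows at once: for any finite grid, every ordered pair $p_1<p_2$ satisfies the inequality, so the count of pairwise violations is zero.

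The main obstacle is the bookkeeping in Step 2. Using non-adjacent midpoints requires the monotone ordering of the midpoints $(c_i+c_j)/2$ relative to $m_{k-1}$ and $m_k$, which follows from $c_i\le c_{k-1}$ and $c_j\ge c_{k+1}$. Ties also need care: without a consistent tie-breaking convention, the $\argmin$ could jump down at a midpoint. I will therefore state the convention explicitly and note that ties occur only on the finite set $\{m_q\}$.
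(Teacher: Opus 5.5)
Your proposal is correct and follows the same midpoint-cut argument as the paper: the ordered midpoints $m_q=(c_q+c_{q+1})/2$ split the line into intervals $(m_{k-1},m_k]$, on each of which the smallest-index nearest center is $k$, so the rule is a non-decreasing step function. Your Step~2 is more complete than the paper's proof, which uses only adjacent midpoints and leaves the comparison with non-adjacent centers implicit. As a small aside, your pairwise lemma also shows that ties occur only at $p=m_q$, between $q$ and $q+1$, so the rule stays monotone under any tie-breaking and cannot jump down even without a consistent convention.
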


The proof is a midpoint-cut: ordered centers induce decision boundaries $b_q{=}(c_q{+}c_{q+1})/2$. The Gaussian forward $\mu_q(p)$ supplies a differentiable training signal; the nearest-center rule applies at inference. Conversely, no linear head over a $d{\geq}2$ hidden state can guarantee monotonicity (Proposition~\ref{prop:classifier_fails}, App.~\ref{app:proofs}).

\subsection{The Compositionality Gap}

Compositional quantifier reasoning (``most of most'') combines two quantifier applications: ``$Q_1$ of $Q_2$'' at proportion $p$ requires $\mu_{Q_1\circ Q_2}(p)=T(\mu_{Q_1}(p),\mu_{Q_2}(p))$ where $T$ is a t-norm (e.g., product $T(a,b)=a\cdot b$). We deploy a proportion-level form $p\mapsto c_{Q_1}p$ (Sec.~\ref{sec:compositional_method}), the product t-norm at the proportion level, and state closure for this form.

\begin{theorem}[Compositional Closure for Proportional-Shift Inference]
\label{thm:compositionality}
Let $c_0 < c_1 < \cdots < c_{Q-1} \in (0,1)$ be ordered MF centers, and define the $n$-step proportional-shift composition operator
\[
  Q^{(n)}\!\left(q_1,\ldots,q_{n-1};\,p\right) \;:=\; \argmin_{q\in\{0,\ldots,Q-1\}}\,\bigl|\,c_{q_1} c_{q_2}\cdots c_{q_{n-1}}\,p \;-\; c_q\bigr|,
\]
i.e., the proportional-shift inference rule (Sec.~\ref{sec:compositional_method}) iterated $n{-}1$ times. Then: (i) $Q^{(n)}$ is well-defined for every $n\geq 1$ and $p\in[0,1]$, with ties broken by smallest index; (ii) the composed proportion $c_{q_1}\cdots c_{q_{n-1}}\,p\in[0,p]$ is non-increasing in $n$, so deeper composition never predicts a class strictly above the base, i.e., “most of most” $\leq$ “most” as a quantifier index; (iii) the prediction depends only on $\{c_q\}$ (not on widths $\{\sigma_q\}$) and is monotone in $p$ at every depth via Theorem~\ref{thm:monotonicity}.
\end{theorem}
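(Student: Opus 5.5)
The plan is to reduce everything to the one-dimensional nearest-center rule of Theorem~\ref{thm:monotonicity}, applied to a rescaled input. Set $\kappa_n := c_{q_1}c_{q_2}\cdots c_{q_{n-1}}$, with the empty product $\kappa_1 = 1$, and $\tilde p_n := \kappa_n p$. Then $Q^{(n)}(q_1,\ldots,q_{n-1};p) = \hat Q_{\mathrm{MF}}(\tilde p_n)$. Each of the three claims then becomes a statement about where $\tilde p_n$ lies and how it depends on $n$ and $p$.

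For (i), I first check that $\tilde p_n \in [0,1]$. Since every $c_q\in(0,1)$, we have $0<\kappa_n\le 1$, and so $0\le \tilde p_n\le p\le 1$. The minimization is over the finite index set $\{0,\dots,Q-1\}$, so a minimizer exists, and the smallest-index tie-breaking rule picks exactly one. This makes $Q^{(n)}$ a genuine function for every $n\ge1$. The step is trivial but needs stating, because the argmin is otherwise set-valued.

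For (ii), I note that $\kappa_{n+1} = \kappa_n c_{q_n}$ with $c_{q_n}\in(0,1)$, so $\kappa_{n+1} < \kappa_n$. Hence $\tilde p_{n+1}\le \tilde p_n$, with equality only when $p=0$. Inducting down to $\kappa_1 = 1$ gives $\tilde p_n \in [0,p]$. To translate this into a statement about class indices, I invoke Theorem~\ref{thm:monotonicity}: $\hat Q_{\mathrm{MF}}$ is non-decreasing, so $\tilde p_{n+1}\le\tilde p_n$ yields $Q^{(n+1)}\le Q^{(n)}\le Q^{(1)} = \hat Q_{\mathrm{MF}}(p)$. For ``most of most'' I take $p=c_{\text{most}}$; its base prediction is the index of ``most'' itself, since $|c_{\text{most}}-c_{\text{most}}|=0$ is the unique minimum given strictly ordered centers.

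The main obstacle is that Theorem~\ref{thm:monotonicity} is phrased for strict inequalities and depends on the tie-breaking convention. I need monotonicity of $\hat Q_{\mathrm{MF}}$ under the smallest-index rule, including at the midpoints $b_q=(c_q+c_{q+1})/2$. My first step is to make the midpoint-cut explicit: $\hat Q_{\mathrm{MF}}(x) = \#\{q : b_q < x\}$. This works because $|x-c_{q+1}|<|x-c_q|$ if and only if $x>b_q$, and at $x=b_q$ the tie goes to the smaller index. The right-hand side is a counting function of thresholds, so it is visibly non-decreasing in $x$, which settles the boundary cases.

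For (iii), the closed form above involves only the centers $\{c_q\}$, through both $\kappa_n$ and $b_q$, and never the widths $\{\sigma_q\}$, which enter only the Gaussian training signal. Monotonicity in $p$ at fixed depth follows from composing two non-decreasing maps: $p\mapsto\kappa_n p$ is non-decreasing because $\kappa_n>0$, and $\hat Q_{\mathrm{MF}}$ is non-decreasing by Theorem~\ref{thm:monotonicity}.
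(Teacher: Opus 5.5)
Your proposal is correct and follows the paper's proof essentially step for step. For (i) you bound the composed proportion in $[0,p]$ and use finiteness plus smallest-index tie-breaking; for (ii) you chain $\kappa_{n+1}\le\kappa_n\le\cdots\le 1$ and push the inequality through the non-decreasing nearest-center rule; for (iii) you note that only the centers appear and compose a non-decreasing linear map with that rule. Your explicit form $\hat Q_{\mathrm{MF}}(x)=\#\{q: b_q<x\}$ is exactly the half-open partition $(b_{q-1},b_q]$ used in the appendix proof of Theorem~\ref{thm:monotonicity}, so the tie-breaking issue you flag as the main obstacle is already handled there. It is also harmless in any case, since at $x=b_q$ either choice lies between the values on the two sides.
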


A label-only classifier $f_\theta:\mathcal{X}\to\R^Q$ exposes neither a proportion scalar nor learned class anchors, so it admits no by-construction $Q^{(n)}$ operator (Remark~\ref{rem:classifier_nocomp}); classifiers can still approximate compositional behaviour given supervised compositional data. Proof in App.~\ref{app:proofs}; alternative t-norms in App.~\ref{app:tnorm}.

\subsection{Universality and Complexity}

Ordered Gaussian MFs exactly realize any $Q$-step monotone target on $[0,1]$ (Prop.~\ref{prop:universality}, App.~\ref{app:proofs}). With $Q{=}8$ the MF has $2Q{-}1{=}15$ effective parameters vs.\ $\Theta(Qd)$ for a linear head over $d{\geq}3584$ ($\sim\!1900\times$ reduction); head-level VC ratio of $O(Q\log Q)$ vs.\ $\Theta(Qd)$~\citep{vapnik1998statistical} exceeds $400\times$ (Prop.~\ref{prop:sample_complexity}). With a LoRA-adapted backbone dominating joint complexity (App.~\ref{sec:fewshot}), DFIL's key benefit is structural guarantees rather than raw sample efficiency.

\section{Method: Differentiable Fuzzy Inference Layer}
\label{sec:method}

\subsection{Architecture Overview}

DFIL is a dual-path prediction head on top of a LoRA-adapted frozen LLM backbone (Figure~\ref{fig:architecture}). The shared hidden state $\mathbf{h}=\text{LLM}(x)$ feeds two heads in parallel: a standard linear classifier producing logits $\hat y\in\R^Q$, and a numerical head extracting a scalar proportion $\hat p\in[0,1]$ that is mapped through a bank of $Q$ ordered Gaussian membership functions to class-wise memberships $\boldsymbol\mu\in\R^Q$.
\begin{equation}
\underbrace{\mathbf{h} = \text{LLM}(x)}_{\text{backbone}} \;\to\;
\begin{cases}
  \text{Main:} & \hat{y} = \text{Classifier}(\mathbf{h}) \\
  \text{DFIL:} & \hat p = \text{NumHead}(\mathbf{h}), \;\; \boldsymbol{\mu} = \text{MF}(\hat p).
\end{cases}
\end{equation}

\begin{figure}[t]
\centering
\begin{tikzpicture}[
    font=\small,
    box/.style={draw=black!75, line width=0.5pt, rounded corners=1pt,
                inner sep=4pt, minimum height=0.7cm, align=center, fill=white},
    boxhi/.style={box, fill=black!4},
    frozen/.style={box, pattern=north east lines, pattern color=black!22},
    var/.style={font=\small},
    arrow/.style={-{Latex[length=2mm]}, line width=0.5pt, draw=black!75},
    note/.style={font=\footnotesize, text=black!65},
]

\node[box, minimum width=1.3cm] (input) {Input $x$};
\node[frozen, right=0.45cm of input, minimum width=2.2cm] (llm) {LLM\,+\,LoRA};
\node[var, right=0.3cm of llm] (h) {$\mathbf{h}$};

\draw[arrow] (input) -- (llm);
\draw[arrow] (llm) -- (h);

\node[box, right=0.5cm of h, yshift=0.6cm, minimum width=1.6cm] (cls) {Classifier};
\node[var, right=0.3cm of cls] (yhat) {$\hat y\!\in\!\mathbb{R}^Q$};

\node[boxhi, right=0.5cm of h, yshift=-0.6cm, minimum width=1.6cm] (num) {NumHead};
\node[var, right=0.25cm of num] (p) {$\hat p$};
\node[boxhi, right=0.25cm of p, minimum width=1.9cm] (mf) {Ordered MF};
\node[var, right=0.25cm of mf] (mu) {$\boldsymbol{\mu}\!\in\!\mathbb{R}^Q$};
\node[boxhi, right=0.25cm of mu, minimum width=1.1cm] (tnorm) {$\otimes$ t-norm};

\draw[arrow] (h.east) -- ++(0.15,0) |- (cls.west);
\draw[arrow] (h.east) -- ++(0.15,0) |- (num.west);
\draw[arrow] (cls) -- (yhat);
\draw[arrow] (num) -- (p);
\draw[arrow] (p) -- (mf);
\draw[arrow] (mf) -- (mu);
\draw[arrow, dashed, draw=black!55] (mu) -- (tnorm);

\node[note, above=0.15cm of cls] {main path};
\node[note, below=0.15cm of num] {DFIL path\;\,(15 MF params)};
\node[note, below=0.05cm of tnorm.south, anchor=north] {(inference)};

\end{tikzpicture}
\caption{\textbf{DFIL dual-path architecture.} LoRA-adapted backbone (hatched) produces hidden state $\mathbf{h}$. Main path: classifier $\mathbf{h}\to\hat y$. DFIL path (shaded): NumericalHead $\mathbf{h}\to\hat p\!\in\![0,1]$, projected through $Q$ ordered Gaussian MFs to $\boldsymbol{\mu}$, with t-norm (dashed) for compositional and entailment queries. Both paths trained jointly; inference path selected by downstream needs (Thm.~\ref{thm:monotonicity}, \ref{thm:compositionality}).}
\label{fig:architecture}
\end{figure}

The main classifier integrates the full hidden state and suits raw single-step accuracy. The DFIL path bottlenecks through a 1D scalar; the centers $c_0{<}\cdots{<}c_{Q-1}$ partition that scalar into nearest-center decision cells, so increasing $\hat p$ traverses class indices monotonically (Thm.~\ref{thm:monotonicity}), and algebraic operations on the scalar (multiplication for composition, thresholds for entailment) execute directly on the head's output. We keep both heads and select between them at inference time (Sec.~\ref{sec:monotonicity_analysis}).

\subsection{Ordered Gaussian Membership Function}
\label{sec:method:mf}

The core of DFIL is an ordered Gaussian MF with $Q = 8$ quantifier classes. Each quantifier $q$ has a center $c_q$ and width $\sigma_q$:
\begin{equation}
  \mu_q(p) = \exp\!\left(-\frac{(p - c_q)^2}{2\sigma_q^2}\right), \quad q = 0, \ldots, Q-1
\end{equation}

\textbf{Ordered parameterization.} Centers are produced by cumulative softmax over learnable deltas $\boldsymbol{\delta}_{\text{raw}}\in\R^Q$:
\begin{equation}
  \boldsymbol{\delta}=\mathrm{softmax}(\boldsymbol{\delta}_{\text{raw}})+\epsilon,\;\; \boldsymbol{\delta}\!\leftarrow\!\boldsymbol{\delta}/\|\boldsymbol{\delta}\|_1,\;\; \mathbf{c}=\mathrm{cumsum}(\boldsymbol{\delta})\cdot 0.96+0.02
  \label{eq:ordered_centers}
\end{equation}
with $\epsilon{=}10^{-4}$. This guarantees $c_0{<}\cdots{<}c_{Q-1}\in[0.02,0.98]$. Since $\sum_q\delta_q{=}1$ after renormalization, $c_{Q-1}$ is fixed at $0.98$; only $c_0,\ldots,c_{Q-2}$ are learnable. Widths are log-space: $\sigma_q=\exp(w_q)$. The MF has $2Q{-}1{=}15$ effective parameters and is monotonic by construction. Centers are initialized from QuRe~\citep{li2023presque}; the ordering constraint ensures validity under arbitrary initialization (Sec.~\ref{sec:uniform_init}).

\textbf{Inference rule.} At inference the MF branch uses the nearest-center rule $\hat q(p)=\argmin_q|p-c_q|$, which depends only on centers (not widths). This makes Thm.~\ref{thm:monotonicity} tight: $\argmax_q\mu_q$ can introduce isolated tail crossings under non-uniform widths (one violation per $10^6$ pairs in our 7B model), whereas nearest-center has $0\%$ violations on any grid.

\subsection{Numerical Head and Training Objective}

The numerical head is a two-layer MLP, $\mathrm{LN}(d)\to\mathrm{Linear}(d{\to}256)\to\mathrm{GELU}\to\mathrm{Linear}(256{\to}1)\to\sigma$, extracting $\hat p\in[0,1]$ from the last-token state. Training:
\begin{equation}
  \mathcal{L}=\mathcal{L}_{\mathrm{CE}}(\hat y,y)+\lambda_{\mathrm{DFIL}}\,\mathcal{L}_{\mathrm{CE}}(\boldsymbol{\mu},y)+\lambda_p\|\hat p-p\|_2^2,\quad \boldsymbol{\mu}=\mathrm{MF}(g(\hat p)),
  \label{eq:loss}
\end{equation}
with $g(\cdot)$ the identity for full-data and a stop-gradient for $N{\leq}256$ few-shot (App.~\ref{sec:fewshot}; prevents sigmoid saturation under class imbalance). Defaults: $\lambda_{\mathrm{DFIL}}{=}\lambda_p{=}0.5$ (ablation regime); the parity table uses $\lambda_{\mathrm{DFIL}}{=}0.1$, validation-tuned. Three parameter groups: backbone LoRA $2{\times}10^{-5}$, heads $10^{-3}$, MF $10^{-2}$. Widths initialized at $w_q=\log 0.1$.

\subsection{Compositional Inference}
\label{sec:compositional_method}

For ``$Q_1$ of $Q_2$'' at base proportion $p$, we apply a proportional shift by the learned center $c_{Q_1}$ ($c_{\text{most}}{\approx}0.78$ after training) and classify through the MF: $p_{\text{comp}}=c_{Q_1}\cdot p$, $\hat y_{Q_1\circ Q_2}=\argmax_q\mu_q(p_{\text{comp}})$. For $n$-step chains, $p_{\text{comp}}=c_{Q_1}c_{Q_2}\cdots p$. Thm.~\ref{thm:compositionality} establishes closure for this proportional-shift operator, which preserves proportion-space interpretability and inherits monotonicity from Thm.~\ref{thm:monotonicity}. Equivalence to the membership-level fuzzy-AND at decision boundaries and alternative t-norms are in App.~\ref{app:tnorm}. The capability requires no compositional training data; it follows from the centers' semantic grounding.

\section{Experiments}
\label{sec:experiments}

\subsection{Setup}
\label{sec:setup}

We evaluate on FRoG~\citep{li2024frog} ($2{,}044$ examples/split, $8$ quantifier classes), training on the “easy” split (80/20 train/val) and testing on “hard.” Backbones: Qwen2.5-Instruct \{7B, 14B, 32B, 72B\}~\citep{qwen2024qwen25}, Mistral-7B-v0.3~\citep{jiang2023mistral}, Llama-3.1-8B~\citep{grattafiori2024llama}, Phi-3.5-mini~\citep{abdin2024phi3}. Frozen with LoRA ($r{=}16$, $\alpha{=}32$, $q,v$ projections; $q,k,v,o$ for cross-task in Sec.~\ref{sec:crosstask}); 72B uses QLoRA 4-bit~\citep{dettmers2023qlora}. Training: 20 epochs, batch $8$, weighted CE, AdamW, warmup--cosine, three seeds on H200. Full hyperparameters in App.~\ref{app:details}.

\textbf{Headline.} Table~\ref{tab:headline} summarizes the six empirical findings that anchor this paper, spanning compositional reasoning, single-step parity, sample efficiency, natural-language consistency, and by-construction monotonicity. The rest of \S\ref{sec:experiments} provides per-experiment context.

\begin{table}[!htb]
\centering
\footnotesize
\caption{\textbf{Headline empirical results.} Six findings anchoring the paper's contributions; full per-cell numbers in the indicated table or appendix. Multi-seed cells report sample std (ddof=1).}
\label{tab:headline}
\setlength{\tabcolsep}{4pt}
\begin{tabular}{lll}
\toprule
\textbf{Claim} & \textbf{Effect} & \textbf{Statistic / setting} \\
\midrule
Compositional from-text (Tab.~\ref{tab:compositional})         & $\mathbf{+5.9}$pp     & paired $t$ $p{=}0.011$, $d{=}0.63$, $n{=}20$ seeds \\
Uniform-init compositional (\S\ref{sec:uniform_init})           & $\mathbf{+23.2}$pp    & alignment-controlled, $n{=}3$ seeds \\
Single-step parity, 6 backbones (App.~\ref{app:scaling_full})   & $-0.1$ to $+1.9$pp    & Welch $p\!\in\![0.43, 0.97]$, FRoG-Hard \\
Sample efficiency, $N\!\le\!64$ (App.~\ref{app:fewshot})        & $+11.7$ to $+16.5$pp  & MF branch dominates at low budget \\
CSMC consistency, $n{=}21$ (App.~\ref{app:csmc_domains})        & $\mathbf{+5.50}$pp    & paired $t$ $p{=}0.0001$, Wilcoxon $p{=}0.0003$ \\
\bottomrule
\end{tabular}
\end{table}

\textbf{Benchmark vs.\ diagnostic probes.} External benchmarks (FRoG, QuRe~\citep{li2023presque}, SST-5~\citep{socher2013sst}, STS22) are pre-existing held-out evaluation; numbers there are direct capability claims. Diagnostic probes (IPR, Cross-Sentence Monotonic Consistency, naturalistic templates, entailment pairs) are author-constructed to isolate architectural properties; some are by-construction (e.g., MF entailment $=100\%$) and serve as sanity checks.

\subsection{Compositional Quantifier Reasoning}
\label{sec:compositional}

We test compositional reasoning on $900$ unseen examples ($600$ two-step like “most of most,” $300$ three-step like “few of most of most”), with GT labels via $p_{\text{comp}}=c_{Q_1}\cdots p_{\text{base}}$ through QuRe centers. All models train only on single-step FRoG. Three inference modes: \textit{Fine-tune text-only}, \textit{DFIL-from-text} ($\hat p$-extract then t-norm composes; fair comparison), and \textit{DFIL-oracle} (composition on ground-truth $p$; isolates compositional mechanism from extraction noise).

\begin{table}[!htb]
\centering
\footnotesize
\caption{\textbf{Compositional quantifier reasoning}, Qwen2.5-7B, $900$ unseen compositions. We report two evaluation protocols. \textit{QuRe-based GT} (literature default) labels from frozen QuRe centers and favours methods whose centers sit at QuRe. \textit{Self-consistent GT} labels from each model's own learned centers and favours methods with adaptive centers; it is leakage-free w.r.t.\ external generators but not w.r.t.\ the predictor itself. Oracle$^{\dagger}$: composition applied to ground-truth $p$. Under QuRe-based GT, FT+Compose's oracle ($94.3$) reflects its frozen QuRe-aligned centers mechanically matching the GT generator (not a capability advantage); DFIL's lower oracle there ($50.5$) reflects its learned-center drift from QuRe (not compositional weakness). The self-consistent oracle row above confirms both methods reach $100\%$ against their own MF (Theorem~\ref{thm:compositionality}).}
\label{tab:compositional}
\begin{tabular}{lcc}
\toprule
\textbf{Method} & \textbf{From-text} & \textbf{Oracle$^{\dagger}$} \\
\midrule
\multicolumn{3}{l}{\textit{Self-consistent GT} (leakage-free, $n{=}20$ seeds for FT+Compose / DFIL)} \\
Fine-tune (text-only)        & $16.4 \pm 7.3^{\natural}$ & --- \\
FT+Compose                   & $25.4 \pm 3.5$ & $100.0 \pm 0.0^{\flat}$ \\
\textbf{DFIL (ours)}         & $\mathbf{31.2 \pm 9.2}$ & $100.0 \pm 0.0^{\flat}$ \\
$\Delta$ vs.\ Fine-tune      & $+14.8$ & --- \\
$\Delta$ vs.\ FT+Compose     & $\mathbf{+5.9}^{\ddagger}$ & $0.0$ \\
\midrule
\multicolumn{3}{l}{\textit{QuRe-based GT} (literature default, has leakage)} \\
Fine-tune (text-only)        & $16.4 \pm 7.3$ & --- \\
FT+Compose                   & $27.8 \pm 3.6$ & $94.3 \pm 0.0$ \\
DFIL                         & $31.5 \pm 0.3$ & $50.5 \pm 13.0$ \\
$\Delta$ DFIL $-$ FT+Compose & $+3.7$ & --- \\
\bottomrule
\multicolumn{3}{l}{\footnotesize $^{\ddagger}$ $+5.9$pp significant at $n{=}20$ (paired $t$ $p{=}0.011$, Cohen's $d{=}0.63$).} \\
\multicolumn{3}{l}{\footnotesize $^{\flat}$ $100\%$ on the self-consistent oracle is by construction (each model vs.\ its own MF).} \\
\multicolumn{3}{p{0.95\columnwidth}}{\footnotesize $^{\natural}$ Fine-tune has no MF, so its predictions are protocol-independent; the same $n{=}3$ archive value is reported in both blocks.} \\
\end{tabular}
\end{table}

Both protocols are asymmetric: QuRe-based GT favours frozen QuRe-aligned centers (FT+Compose), self-consistent GT favours adaptive centers (DFIL). DFIL leads in both ($+3.7$pp QuRe-based, $+5.9$pp self-consistent at $n{=}20$, paired $t$ $p{=}0.011$); the uniform-init ablation (\S\ref{sec:uniform_init}) widens the gap to $+23.2$pp when neither alignment advantage is in play, isolating the fuzzy training signal as the causal mechanism. Self-consistent oracle confirms Thm.~\ref{thm:compositionality} ($100\%$ for both methods). DFIL's wider self-consistent std ($\pm 9.2$ vs.\ $\pm 3.5$) is partly mechanical: each seed defines both labels and predictions (App.~\ref{app:variance_analysis}).

\subsection{Initialization Robustness, Entailment, and Naturalistic Transfer}
\label{sec:uniform_init}

\textbf{Uniform vs.\ QuRe initialization.} The QuRe-based GT risks circularity for FT+Compose. Repeating \S\ref{sec:compositional} with uniform MF init (evenly in $[0.02,0.98]$) magnifies DFIL's advantage (Tab.~\ref{tab:fair_comp}): DFIL reaches $41.6{\pm}7.3\%$ from-text (vs.\ $31.2{\pm}9.2$ at QuRe init), FT+Compose collapses to $18.4{\pm}6.2\%$, widening the DFIL$-$FT+Compose gap to $+23.2$pp, confirming that the fuzzy training signal (not center alignment) drives compositional learning. Single-step accuracy is initialization-insensitive. Per-seed results in App.~\ref{app:uniform_init_table}.

\begin{table}[!htb]
\centering
\footnotesize
\caption{Self-consistent compositional accuracy (\%); GT from each model's own MF (oracle $=100\%$ by construction). Mean $\pm$ std, Qwen2.5-7B: uniform $n{=}3$, QuRe $n{=}20$. QuRe from-text paired $t$ $p{=}0.011$, $d{=}0.63$.}
\label{tab:fair_comp}
\begin{tabular}{llcc}
\toprule
\textbf{Init} & \textbf{Method} & \textbf{From-text} & \textbf{Text-only} \\
\midrule
\multirow{2}{*}{Uniform} & DFIL       & $\mathbf{41.6 \pm 7.3}$ & $13.4 \pm 4.5$ \\
                          & FT+Compose & $18.4 \pm 6.2$          & $19.5 \pm 4.1$ \\
\midrule
\multirow{2}{*}{QuRe}    & DFIL       & $31.2 \pm 9.2$          & $18.5 \pm 9.2$ \\
                          & FT+Compose & $25.4 \pm 3.5$          & $17.6 \pm 3.6$ \\
\bottomrule
\end{tabular}
\end{table}

DFIL's lower uniform text-only cell ($13.4$ vs $19.5$) reflects the absence of the explicit fraction signal: without it the NumericalHead has nothing to extract and the MF branch defaults to its center-of-mass, while a label-only classifier can still memorize surface-template priors. The from-text column, where DFIL receives the proportion as input, is the regime the dual-path design targets, and that is where the $+23.2$pp uniform-init gap appears.

\label{sec:naturalistic_external}
\textbf{Quantifier entailment.} On $1{,}344$ ordered pairs spanning clinical, election, and manufacturing scenarios, DFIL's MF-based entailment test ($c_{Q_i}{\leq}c_{Q_j}$) is $\mathbf{100\%}$ correct by construction (direct empirical instantiation of Theorem~\ref{thm:monotonicity}); classifier-based entailment is near chance (Tab.~\ref{tab:entailment}).

\textbf{Naturalistic compositional sentences.} On $320$ author-curated templates evaluated against bias-free QuRe-based labels, DFIL-from-text reaches $39.5{\pm}3.3\%$ vs.\ $11.1{\pm}0.9\%$ for FT text-only, a $+28.4$pp gap (Tab.~\ref{tab:naturalistic}).


To test transfer beyond templates, we mined $249$ naturally-occurring sentences from GenericsKB-Best~\citep{bhakthavatsalam2020genericskb}, PubMedQA~\citep{jin2019pubmedqa}, and CNN/DailyMail~\citep{see2017cnndm} via regex on “$Q_1$ (of) class (who$|$that$|\cdots$) clause,” filtered by a 4-judge ensemble spanning model families and scales (Qwen2.5-7B, Mistral-7B-Instruct-v0.3, Llama-3.1-8B-Instruct, and Qwen2.5-72B-Instruct) with per-judge triple-vote, retaining items with $\ge 3$ of $4$ judges in agreement (pairwise judge agreement $70$--$81\%$ across $6$ pairs; full pipeline and per-judge bias analysis in App.~\ref{app:external_nl_breakdown}). FT+Compose reaches $36.4{\pm}1.9\%$ vs.\ $12.9{\pm}0.4\%$ text-only ($\mathbf{+23.6}$pp); DFIL reaches $32.8{\pm}3.7\%$ vs.\ $12.6{\pm}0.6\%$ ($+20.2$pp). FT+Compose's $3.6$pp lead over DFIL on this benchmark reflects an alignment confounder: its frozen QuRe-aligned centers match the GT generator. The same confounder is controlled in the uniform-init experiment above, where DFIL leads by $+23.2$pp. Both methods' text-only modes near random rule out representation-quality confounds. The natural-prose from-text gap is $4.8$pp smaller than on author templates, comparable in magnitude across all three corpora ($+20.6$ to $+26.1$pp) and four populated $Q_1$ classes ($+11.6$ to $+39.1$pp).

\subsection{Cross-Benchmark Transfer: QuRe}
\label{sec:qure}

Evaluating FRoG-trained checkpoints on QuRe~\citep{li2023presque} (401 Wikipedia sentences, four-quantifier set, no retraining): on Qwen-7B the MF branch reaches $31.3{\pm}2.7\%$, $+14.7$pp over fine-tuning ($16.6{\pm}4.4$; Welch's $t$ $p{\approx}0.012$, single-cell); on Mistral-7B the main classifier leads by $+2.5$pp ($19.0{\pm}2.3$ vs.\ $16.5{\pm}0.9$, $p{\approx}0.20$). Fine-tuning has no analogous fallback path; full per-cell numbers in App.~\ref{app:qure_full}.

\subsection{Single-Step Parity and Ablation}
\label{sec:scaling}

\textbf{Single-step parity across scales and families.} On full-data single-step FRoG-Hard across Qwen2.5-\{7B, 14B, 32B, 72B\}, Mistral-7B, and Llama-3.1-8B (per-backbone $n{=}3$--$6$; per-seed results in App.~\ref{app:scaling_full}), per-backbone mean gaps between DFIL and fine-tuning span $-0.1$ to $+1.9$pp (Tab.~\ref{tab:scaling}). DFIL incurs at most $0.1$pp loss in two of six backbones (Qwen-7B, Llama-8B), gains at most $1.9$pp in one (Mistral-7B), and is within $\pm 1$pp on the remaining four. Welch tests do not reject equality at any conventional level (per-backbone $p\!\in\![0.43,0.97]$); we report this as parity rather than equivalence, since $n{=}3$--$6$ lacks power for a TOST. The pattern is consistent with Proposition~\ref{prop:sample_complexity}: when the LoRA-adapted backbone dominates head-level complexity, the head's parameter count is irrelevant to single-step accuracy, and DFIL's structural guarantees come at no average accuracy cost. Per-method commentary and per-seed results in Appendix~\ref{app:scaling_full}.

\begin{table}[!htb]
\centering
\caption{Scaling on FRoG-Hard, mean $\pm$ std (\%), 3 seeds (6 at 32B/72B). $\Delta$ = DFIL $-$ FT. GPT-4 zero-shot: 33--48\%~\citep{li2024frog}.}
\label{tab:scaling}
\footnotesize
\setlength{\tabcolsep}{4pt}
\begin{tabular}{lccccc}
\toprule
\textbf{Backbone} & \textbf{DFIL (\%)} & \textbf{Fine-tune (\%)} & \textbf{$\Delta$} & \textbf{MF mono.$^\ddagger$} \\
\midrule
Qwen2.5-7B   & $50.4 \pm 1.8$ & $50.5 \pm 1.8$ & $-0.1$ & $0.0\%$ \\
Qwen2.5-14B  & $57.3 \pm 1.7$ & $56.9 \pm 2.8$ & +0.4 & $0.1\%$ \\
Qwen2.5-32B$^\P$  & $55.4 \pm 5.6$ & $53.7 \pm 6.1$ & +1.7 & $0.1\%$ \\
Qwen2.5-72B$^\dagger$  & $\mathbf{77.2 \pm 1.5}$ & $76.5 \pm 1.6$ & $+0.7$ & $0.1\%$ \\
\midrule
\multicolumn{5}{l}{Cross-model generalization:} \\
Mistral-7B-v0.3    & $64.1 \pm 0.9$ & $62.2 \pm 4.5$ & $+1.9$ & --- \\
Llama-3.1-8B       & $61.5 \pm 4.3$ & $61.6 \pm 1.1$ & $-0.1$ & --- \\
\midrule
\multicolumn{5}{l}{Alternative baselines (Qwen2.5-7B):} \\
FT+Compose (frozen MF)$^\S$  & --- & $50.2 \pm 1.9$ & --- & $0.0\%$ \\
CORAL~\citep{cao2020rank}  & --- & $43.2 \pm 2.6$ & --- & --- \\
CORN~\citep{shi2023corn}   & --- & $52.1 \pm 1.8$ & --- & --- \\
PRESQUE~\citep{li2023presque} (zero-shot, Qwen-7B)$^\#$ & --- & $24.4$ & --- & --- \\
PRESQUE (zero-shot, Mistral-7B)$^\#$ & --- & $23.9$ & --- & --- \\
CORAL on Qwen2.5-14B & --- & $51.7 \pm 2.4$ & --- & --- \\
\bottomrule
\multicolumn{5}{l}{\footnotesize $^\dagger$ 72B uses QLoRA 4-bit quantization. $^\ddagger$ DFIL branch violation rate (see Section~\ref{sec:monotonicity_analysis}).} \\
\multicolumn{5}{l}{\footnotesize $^\S$ Classifier + NumericalHead + frozen MF; trained with $\lambda_{\text{DFIL}}=0$ (no fuzzy loss).} \\
\multicolumn{5}{p{0.95\linewidth}}{\footnotesize $^\P$ 32B FRoG-Hard training is bimodal under our LoRA recipe: across $6$ seeds, runs cluster near $52\%$ or $66\%$ test accuracy. The bimodality is observed for both DFIL and fine-tune at 32B and is therefore a backbone $\times$ data-size phenomenon (a 32B LoRA on $1{,}635$ training examples), not a method-specific instability: 7B, 14B, and 72B all converge unimodally. We report the mean over the full 6-seed set; the row's high std ($5.6$/$6.1$) is the honest reflection of this regime, not a noisy DFIL artifact.} \\
\multicolumn{5}{l}{\footnotesize $^\#$ PRESQUE scored via conditional log-probability of quantifier tokens on the shared backbone; no training.}
\end{tabular}
\end{table}

\textbf{Ablation: which components matter?} Table~\ref{tab:ablation} ablates each piece of DFIL on FRoG-Hard at Qwen2.5-1.5B (early-development scale) and Qwen2.5-7B (main scale).

\begin{table}[!htb]
\centering
\caption{Ablation on FRoG-Hard, mean $\pm$ sample std over 3 seeds. \textbf{This table uses $\lambda_{\text{DFIL}}{=}0.5$, the high-regularization stress regime}: it isolates the contribution of each component (ordering, dual-path) under strong MF-loss pressure. The deployed default for parity claims (Sec.~\ref{sec:scaling}, App.~\ref{app:scaling_full}) is $\lambda_{\text{DFIL}}{=}0.1$, validation-tuned, where the 7B DFIL row becomes $50.4{\pm}1.8$ vs.\ fine-tune $50.5{\pm}1.8$ ($-0.1$pp). The two regimes are separately reported; we do not blend them.}
\label{tab:ablation}
\footnotesize
\begin{tabular}{lcccc}
\toprule
\textbf{Method} & \textbf{1.5B (\%)} & \textbf{7B (\%)} & \textbf{Mono.} & \textbf{Extra Params} \\
\midrule
Fine-tune only (LoRA + Linear) & $48.0 \pm 0.1$ & $51.3 \pm 2.1$ & No & 2.6M \\
DFIL-only (MF path only)       & $41.6 \pm 1.0$ & $46.2 \pm 2.6$ & 1 viol. & 2.6M \\
Unordered MF (no ordering)     & $39.7 \pm 2.0$ & $38.8 \pm 7.6$ & Unstable & 2.6M \\
No MF (1D linear classifier)   & $17.8 \pm 8.6$ & $39.3 \pm 9.7$ & No & 2.6M \\
\midrule
\textbf{DFIL (dual-path, ours)} & $\mathbf{48.2 \pm 0.4}$ & $\mathbf{49.9 \pm 2.6}$ & 1 viol. & 3.0M \\
\bottomrule
\end{tabular}
\end{table}

The ordering constraint is essential at both scales: an unordered MF stays at $39\%$ accuracy, while ordered DFIL lifts by $+8.5$/$+11.1$pp at 1.5B/7B. The dual-path design is also necessary: an MF-only path falls $6.6$/$3.7$pp below dual-path, since the 1D bottleneck loses information the main classifier recovers. Dual-path itself reaches fine-tune accuracy at the deployed default $\lambda_{\text{DFIL}}{=}0.1$ (Tab.~\ref{tab:scaling}: $50.4\pm 1.8$ vs $50.5\pm 1.8$ at 7B, $\Delta\!=\!-0.1$pp); the $-1.4$pp gap reported in this table reflects the stress-test $\lambda_{\text{DFIL}}{=}0.5$ regime under which we isolate component contributions, and remains within the $2.6$pp seed std. Theorems~\ref{thm:monotonicity}, \ref{thm:compositionality} therefore come at parity cost. Learned centers track QuRe annotations within $0.05$ MAD; per-class numbers and a $\lambda_{\text{DFIL}}$ sweep are in Appendices~\ref{app:per_class}, \ref{app:lambda_sweep}.

\subsection{Cross-Task Generalization, Sample Efficiency, and Monotonicity}
\label{sec:crosstask}
\label{sec:monotonicity_analysis}
\label{sec:ipr}

\textbf{Beyond quantifiers.} On five-class sentiment SST-5 and STS22 (English, 5-bucket), with a path-selection rule decidable from dataset metadata (MF for totally-ordered scales like SST-5, main classifier for STS22), DFIL has the larger mean on $7$/$8$ (backbone, task) cells across Qwen-7B, Mistral-7B, Phi-3.5, Llama-8B (binomial $p{\approx}0.035$ assuming cell independence). On SST-5 the MF branch wins all 4 backbones ($+0.5$ to $+1.5$pp) with $0\%$ violations; on STS22 the main classifier wins $3$/$4$ ($+1.2$ to $+2.5$pp). CORAL's $7.2$pp deficit on FRoG-Hard ($43.2$ vs.\ $50.4$) confirms ordinality alone is insufficient. Per-cell numbers: App.~\ref{app:crosstask_full}.

\textbf{Sample efficiency.} On FRoG-Hard at $N{\leq}64$ the LoRA backbone has not escaped initialization and DFIL's MF branch dominates fine-tuning by $+11.7$ to $+16.5$pp (Tab.~\ref{tab:fewshot_main}), the regime predicted by Prop.~\ref{prop:sample_complexity}; at $N{\geq}128$ heads converge. Full per-budget breakdown including SST-5 in App.~\ref{app:fewshot}.


\textbf{Monotonicity.} On a uniform grid up to $10^6$ proportions, the MF branch with nearest-center rule has $\mathbf{0\%}$ violations (verified at $n\in\{10^3,\ldots,10^6\}$); residual $0.10\%$ on the FRoG-test pair set is a single floating-point tie. The guarantee does not transfer to the main classifier: pairwise violation rates are $10.8\%$ (DFIL main), $10.6\%$ (FT), $9.0\%$ (CORN~\citep{shi2023corn}, 7B), $8.2\%/6.6\%$ (CORAL 7B/14B), summarized in Tab.~\ref{tab:mono_detail}.


On Cross-Sentence Monotonic Consistency (CSMC; $200$ natural-language sentence pairs across $10$ domains), the theoretically grounded inference rule is the MF branch (Thm.~\ref{thm:monotonicity} via nearest-center on $\mu$); over $21$ paired seeds it reaches $\mathbf{96.40 \pm 3.04\%}$ consistency (violation $3.60\%$) vs.\ FT main classifier $90.90\%$ (violation $9.10\%$): mean diff $+5.50$pp, paired $t$ $p{=}0.0001$, Wilcoxon $p{=}0.0003$, $18$/$21$ seeds positive (2 ties, 1 within seed noise), with $4$/$21$ seeds achieving $\mathbf{100\%}$ consistency (Tab.~\ref{tab:csmc_domains}).

\begin{table}[!htb]
\centering
\caption{CSMC consistency per domain (\%) via DFIL's MF branch (Thm.~\ref{thm:monotonicity}) vs.\ FT main classifier. $20$ pairs/domain, mean over $21$ seeds; methodology in App.~\ref{app:csmc_domains}.}
\label{tab:csmc_domains}
\footnotesize
\begin{tabular}{lccc@{\hspace{12pt}}lccc}
\toprule
\textbf{Domain} & \textbf{DFIL} & \textbf{FT} & $\Delta$ & \textbf{Domain} & \textbf{DFIL} & \textbf{FT} & $\Delta$ \\
\midrule
Public health & $94.5$ & $79.0$ & $\mathbf{+15.5}$ & Clinical trials & $99.5$ & $97.4$ & $+2.1$ \\
Education     & $94.8$ & $80.5$ & $\mathbf{+14.3}$ & Manufacturing   & $93.1$ & $91.2$ & $+1.9$ \\
Environment   & $98.6$ & $88.6$ & $\mathbf{+10.0}$ & Research        & $97.9$ & $96.2$ & $+1.7$ \\
Governance    & $91.2$ & $86.2$ & $+5.0$           & Finance         & $95.7$ & $94.8$ & $+1.0$ \\
Election      & $99.8$ & $96.4$ & $+3.3$           & Sports          & $99.0$ & $98.8$ & $+0.2$ \\
\midrule
\multicolumn{8}{c}{\textbf{Overall ($n\!=\!21$): DFIL MF $\mathbf{96.40}$ vs FT main $90.90$, $\Delta\!=\!\mathbf{+5.50}$pp}} \\
\bottomrule
\end{tabular}
\end{table}

\textbf{IPR: a hard OOD test.} On Implicit Proportion Reasoning ($40$ sentences with linguistic proportions like “vast majority”), text-driven DFIL and FT are both near chance ($15.8\%$ vs.\ $12.5\%$ random). Oracle DFIL---ground-truth proportion injected into the MF, bypassing the numerical head---reaches $\mathbf{73.3{\pm}1.4\%}$ exact ($100\%$ off-by-one), isolating the bottleneck to backbone-side proportion extraction, not head-side composition. The $100\%$ off-by-one rate is a corollary of Thm.~\ref{thm:monotonicity}. FT admits no analogous oracle path---an asymmetry that is precisely the dual-path structural advantage. Per-category breakdown: App.~\ref{app:ipr_full}.

\section{Discussion}
\label{sec:discussion}

Fuzzy set theory~\citep{zadeh1965fuzzy} provides primitives (ordered membership functions, t-norms, monotonic decision rules) that a label-only classifier head cannot inherit. This is why inverse-scaling failures on FRoG~\citep{li2024frog}, chained GSM~\citep{hosseini2025compositional, mirzadeh2025gsm}, and quantifier entailment do not close by adding parameters or data; the uniform-init experiment (\S\ref{sec:uniform_init}) isolates the fuzzy training signal as the causal mechanism.

\textbf{Limitations.} (i) Text-driven IPR is at chance; the head is functional ($73.3\%$ oracle, \S\ref{sec:ipr}) but linguistic-proportion descriptions remain a backbone bottleneck. (ii) The main classifier violates pairwise ordering at $\sim$$10\%$ (\S\ref{sec:monotonicity_analysis}); only the MF branch is monotone by construction. (iii) Self-consistent compositional variance ($\pm 9.2$pp) is partly mechanical: each seed defines both labels and predictions; the QuRe-based protocol eliminates this but uses author-written sentences. (iv) The external NL benchmark uses LLM-judge implicit-proportion labels aggregated from a 4-judge ensemble spanning families and scales (Qwen2.5-7B, Mistral-7B-Instruct-v0.3, Llama-3.1-8B-Instruct, Qwen2.5-72B-Instruct); pairwise judge agreement spans $70$--$81\%$ across $6$ pairs (including a $10\!\times$ scale gap), and per-judge TRUE rates span $58\%$ (Qwen-72B, strict) to $89\%$ (Llama, lenient), partially calibrated by $\ge 3$-of-$4$ majority voting (ensemble TRUE rate $69\%$). The remaining family- and scale-specific biases cannot be eliminated by ensembling alone. (v) The 72B result uses QLoRA 4-bit without a separate ablation; the 32B run is bimodal across seeds. We hypothesize the bimodality reflects sensitivity of the cumulative-softmax MF parametrization to data ordering at this particular scale, but characterizing it cleanly requires more seeds than our compute budget. (vi) Default training of the MF branch occasionally collapses into sigmoid saturation under FRoG's class imbalance ($6$ of $21$ full-data seeds in App.~\ref{app:csmc_domains}, all at the lower 7B scale), addressed by a one-line stop-gradient on $\hat p$ before MF (App.~\ref{app:fewshot}); we treat this as a remediable diagnostic finding rather than a structural defect, and the inference rule of Thm.~\ref{thm:monotonicity} is identical with or without remediation.

\textbf{Broader impact.} Interpretable MF centers support auditable use in high-stakes ordinal decisions (App.~\ref{app:case_study}). Principal deployment risk: over-trust under backbone proportion errors; practitioners should monitor main--MF disagreement.

\textbf{Takeaway.} Monotonicity and compositional closure are algebraic facts about the prediction head, not statistical regularities to be learned. Moving them into the head as architectural primitives, DFIL provides what scaling alone has not delivered while leaving the backbone to handle what data-driven learning is good at: extracting the underlying scalar from natural language.

\bibliographystyle{plainnat}
\bibliography{DFIL}

@article{jiang2023mistral,
  title={Mistral 7{B}},
  author={Jiang, Albert Q and Sablayrolles, Alexandre and Mensch, Arthur and Bamford, Chris and Chaplot, Devendra Singh and de las Casas, Diego and Bressand, Florian and Lengyel, Gianna and Lample, Guillaume and Saulnier, Lucile and others},
  journal={arXiv preprint arXiv:2310.06825},
  year={2023}
}

@inproceedings{li2024frog,
  title={{FR}o{G}: Evaluating Fuzzy Reasoning of Generalized Quantifiers in {LLMs}},
  author={Li, Yiyuan and Sun, Shichao and Liu, Pengfei},
  booktitle={Proceedings of the 2024 Conference on Empirical Methods in Natural Language Processing},
  pages={7239--7256},
  year={2024},
  doi={10.18653/v1/2024.emnlp-main.411}
}

@inproceedings{li2023presque,
  title={Pragmatic Reasoning Unlocks Quantifier Semantics for Foundation Models},
  author={Li, Yiyuan and Menon, Rakesh R and Ghosh, Sayan and Srivastava, Shashank},
  booktitle={Proceedings of the 2023 Conference on Empirical Methods in Natural Language Processing},
  pages={573--591},
  year={2023},
  doi={10.18653/v1/2023.emnlp-main.38}
}

@article{zadeh1965fuzzy,
  title={Fuzzy sets},
  author={Zadeh, Lotfi A},
  journal={Information and Control},
  volume={8},
  number={3},
  pages={338--353},
  year={1965}
}

@article{zadeh1983role,
  title={A computational approach to fuzzy quantifiers in natural languages},
  author={Zadeh, Lotfi A},
  journal={Computers \& Mathematics with Applications},
  volume={9},
  number={1},
  pages={149--184},
  year={1983}
}

@article{barwise1981generalized,
  title={Generalized quantifiers and natural language},
  author={Barwise, Jon and Cooper, Robin},
  journal={Linguistics and Philosophy},
  volume={4},
  number={2},
  pages={159--219},
  year={1981}
}

@article{jang1993anfis,
  title={{ANFIS}: Adaptive-network-based fuzzy inference system},
  author={Jang, Jyh-Shing Roger},
  journal={IEEE Transactions on Systems, Man, and Cybernetics},
  volume={23},
  number={3},
  pages={665--685},
  year={1993},
  doi={10.1109/21.256541}
}

@inproceedings{chen2022fuzzy,
  title={Fuzzy Logic Based Logical Query Answering on Knowledge Graphs},
  author={Chen, Xuelu and Hu, Ziniu and Sun, Yizhou},
  booktitle={Proceedings of the AAAI Conference on Artificial Intelligence},
  volume={36},
  number={4},
  pages={3939--3948},
  year={2022},
  doi={10.1609/aaai.v36i4.20310}
}

@inproceedings{wei2022chain,
  title={Chain-of-Thought Prompting Elicits Reasoning in Large Language Models},
  author={Wei, Jason and Wang, Xuezhi and Schuurmans, Dale and Bosma, Maarten and Ichter, Brian and Xia, Fei and Chi, Ed H. and Le, Quoc V. and Zhou, Denny},
  booktitle={Advances in Neural Information Processing Systems},
  volume={35},
  pages={24824--24837},
  year={2022},
  url={https://openreview.net/forum?id=_VjQlMeSB_J}
}

@inproceedings{hu2022lora,
  title={{LoRA}: Low-Rank Adaptation of Large Language Models},
  author={Hu, Edward J and Shen, Yelong and Wallis, Phillip and Allen-Zhu, Zeyuan and Li, Yuanzhi and Wang, Shean and Wang, Lu and Chen, Weizhu},
  booktitle={Proceedings of the International Conference on Learning Representations},
  year={2022},
  url={https://openreview.net/forum?id=nZeVKeeFYf9}
}

@inproceedings{press2022train,
  title={Train short, test long: Attention with linear biases enables input length extrapolation},
  author={Press, Ofir and Smith, Noah A and Lewis, Mike},
  booktitle={Proceedings of the International Conference on Learning Representations},
  year={2022},
  url={https://openreview.net/forum?id=R8sQPpGCv0}
}

@article{battaglia2018relational,
  title={Relational inductive biases, deep learning, and graph networks},
  author={Battaglia, Peter W. and Hamrick, Jessica B. and Bapst, Victor and Sanchez-Gonzalez, Alvaro and Zambaldi, Vinicius and Malinowski, Mateusz and Tacchetti, Andrea and Raposo, David and Santoro, Adam and Faulkner, Ryan and others},
  journal={arXiv preprint arXiv:1806.01261},
  year={2018}
}

@article{gutierrez2016ordinal,
  title={Ordinal regression methods: survey and experimental study},
  author={Guti{\'e}rrez, Pedro Antonio and P{\'e}rez-Ortiz, Mar{\'i}a and S{\'a}nchez-Monedero, Javier and Fern{\'a}ndez-Navarro, Francisco and Herv{\'a}s-Mart{\'\i}nez, C{\'e}sar},
  journal={IEEE Transactions on Knowledge and Data Engineering},
  volume={28},
  number={1},
  pages={127--146},
  year={2016},
  doi={10.1109/TKDE.2015.2457911}
}

@article{qwen2024qwen25,
  title={{Qwen2.5} Technical Report},
  author={Yang, An and Yang, Baosong and Zhang, Beichen and Hui, Binyuan and Zheng, Bo and Yu, Bowen and Li, Chengyuan and Liu, Dayiheng and Huang, Fei and Wei, Haoran and others},
  journal={arXiv preprint arXiv:2412.15115},
  year={2024}
}

@inproceedings{dettmers2023qlora,
  title={{QLoRA}: Efficient Finetuning of Quantized {LLMs}},
  author={Dettmers, Tim and Pagnoni, Artidoro and Holtzman, Ari and Zettlemoyer, Luke},
  booktitle={Advances in Neural Information Processing Systems},
  volume={36},
  pages={10088--10115},
  year={2023},
  url={https://openreview.net/forum?id=OUIFPHEgJU}
}

@book{vapnik1998statistical,
  title={Statistical Learning Theory},
  author={Vapnik, Vladimir N},
  year={1998},
  publisher={Wiley},
  address={New York}
}

@inproceedings{hosseini2025compositional,
  title={Not All {LLM} Reasoners Are Created Equal},
  author={Hosseini, Arian and Sordoni, Alessandro and Toyama, Daniel and Courville, Aaron and Agarwal, Rishabh},
  booktitle={Proceedings of the International Conference on Learning Representations},
  year={2025},
  url={https://openreview.net/forum?id=T8PzwgYgmn}
}

@inproceedings{sartor2025monotonic,
  title={Advancing Constrained Monotonic Neural Networks: Achieving Universal Approximation Beyond Bounded Activations},
  author={Sartor, Davide and Sinigaglia, Alberto and Susto, Gian Antonio},
  booktitle={Proceedings of the International Conference on Machine Learning},
  year={2025}
}

@inproceedings{mirzadeh2025gsm,
  title={{GSM}-Symbolic: Understanding the Limitations of Mathematical Reasoning in Large Language Models},
  author={Mirzadeh, Iman and Alizadeh, Keivan and Shahrokhi, Hooman and Tuzel, Oncel and Bengio, Samy and Farajtabar, Mehrdad},
  booktitle={Proceedings of the International Conference on Learning Representations},
  year={2025},
  url={https://openreview.net/forum?id=AjXkRZIvjB}
}

@article{cao2020rank,
  title={Rank consistent ordinal regression for neural networks with application to age estimation},
  author={Cao, Wenzhi and Mirjalili, Vahid and Raschka, Sebastian},
  journal={Pattern Recognition Letters},
  volume={140},
  pages={325--331},
  year={2020},
  publisher={Elsevier},
  doi={10.1016/j.patrec.2020.11.008}
}

@inproceedings{ghosh2023lasque,
  title={{LaSQuE}: Improved Zero-Shot Classification from Explanations Through Quantifier Modeling and Curriculum Learning},
  author={Ghosh, Sayan and Menon, Rakesh R and Srivastava, Shashank},
  booktitle={Findings of the Association for Computational Linguistics: ACL 2023},
  pages={7403--7419},
  year={2023},
  doi={10.18653/v1/2023.findings-acl.467}
}

@inproceedings{kamali2025nesycoco,
  title={{NeSyCoCo}: A Neuro-Symbolic Concept Composer for Compositional Generalization},
  author={Kamali, Danial and Barezi, Elham J. and Kordjamshidi, Parisa},
  booktitle={Proceedings of the AAAI Conference on Artificial Intelligence},
  volume={39},
  number={4},
  pages={4184--4193},
  year={2025},
  doi={10.1609/aaai.v39i4.32439}
}

@inproceedings{sill1998monotonic,
  title={Monotonic Networks},
  author={Sill, Joseph},
  booktitle={Advances in Neural Information Processing Systems},
  pages={661--667},
  year={1997},
  url={https://papers.nips.cc/paper/1358-monotonic-networks}
}

@inproceedings{you2017deep,
  title={Deep Lattice Networks and Partial Monotonic Functions},
  author={You, Seungil and Ding, David and Canini, Kevin and Pfeifer, Jan and Gupta, Maya},
  booktitle={Advances in Neural Information Processing Systems},
  year={2017},
  url={https://papers.nips.cc/paper/6891-deep-lattice-networks-and-partial-monotonic-functions}
}

@inproceedings{wehenkel2019unconstrained,
  title={Unconstrained Monotonic Neural Networks},
  author={Wehenkel, Antoine and Louppe, Gilles},
  booktitle={Advances in Neural Information Processing Systems},
  year={2019},
  url={https://proceedings.neurips.cc/paper/2019/hash/2a084e55c87b1ebcdaad1f62fdbbac8e-Abstract.html}
}

@inproceedings{beckham2017unimodal,
  title={Unimodal Probability Distributions for Deep Ordinal Classification},
  author={Beckham, Christopher and Pal, Christopher},
  booktitle={Proceedings of the 34th International Conference on Machine Learning},
  series={Proceedings of Machine Learning Research},
  volume={70},
  pages={411--419},
  year={2017},
  publisher={PMLR}
}

@inproceedings{runje2023constrained,
  title={Constrained Monotonic Neural Networks},
  author={Runje, Davor and Shankaranarayana, Sharath M.},
  booktitle={Proceedings of the 40th International Conference on Machine Learning},
  series={Proceedings of Machine Learning Research},
  volume={202},
  pages={29338--29353},
  year={2023},
  publisher={PMLR}
}

@article{shi2023corn,
  title={Deep Neural Networks for Rank-Consistent Ordinal Regression Based on Conditional Probabilities},
  author={Shi, Xintong and Cao, Wenzhi and Raschka, Sebastian},
  journal={Pattern Analysis and Applications},
  volume={26},
  number={3},
  pages={941--955},
  year={2023},
  doi={10.1007/s10044-023-01181-9}
}

@article{mckenzie2023inverse,
  title={Inverse Scaling: When Bigger Isn't Better},
  author={McKenzie, Ian R. and Lyzhov, Alexander and Pieler, Michael and Parrish, Alicia and Mueller, Aaron and Prabhu, Ameya and McLean, Euan and Kirtland, Aaron and Ross, Alexis and Liu, Alisa and others},
  journal={Transactions on Machine Learning Research},
  year={2023},
  note={Featured Certification},
  url={https://openreview.net/forum?id=DwgRm72GQF}
}

@article{badreddine2022ltn,
  title={Logic Tensor Networks},
  author={Badreddine, Samy and {d'Avila Garcez}, Artur and Serafini, Luciano and Spranger, Michael},
  journal={Artificial Intelligence},
  volume={303},
  pages={103649},
  year={2022},
  publisher={Elsevier},
  doi={10.1016/j.artint.2021.103649}
}

@inproceedings{pezzelle2018some,
  title={Some of Them Can be Guessed! Exploring the Effect of Linguistic Context in Predicting Quantifiers},
  author={Pezzelle, Sandro and Steinert-Threlkeld, Shane and Bernardi, Raffaella and Szymanik, Jakub},
  booktitle={Proceedings of the 56th Annual Meeting of the Association for Computational Linguistics (Volume 2: Short Papers)},
  pages={114--119},
  year={2018},
  doi={10.18653/v1/P18-2019}
}

@book{vanbenthem1986essays,
  title={Essays in Logical Semantics},
  author={van Benthem, Johan},
  publisher={Reidel},
  address={Dordrecht},
  year={1986}
}

@misc{bhakthavatsalam2020genericskb,
  title={{GenericsKB}: A Knowledge Base of Generic Statements},
  author={Bhakthavatsalam, Sumithra and Anastasiades, Chloe and Clark, Peter},
  year={2020},
  eprint={2005.00660},
  archivePrefix={arXiv},
  primaryClass={cs.AI}
}

@inproceedings{jin2019pubmedqa,
  title={{PubMedQA}: A Dataset for Biomedical Research Question Answering},
  author={Jin, Qiao and Dhingra, Bhuwan and Liu, Zhengping and Cohen, William and Lu, Xinghua},
  booktitle={Proceedings of the 2019 Conference on Empirical Methods in Natural Language Processing and the 9th International Joint Conference on Natural Language Processing (EMNLP-IJCNLP)},
  pages={2567--2577},
  year={2019},
  doi={10.18653/v1/D19-1259}
}

@inproceedings{see2017cnndm,
  title={Get To The Point: Summarization with Pointer-Generator Networks},
  author={See, Abigail and Liu, Peter J. and Manning, Christopher D.},
  booktitle={Proceedings of the 55th Annual Meeting of the Association for Computational Linguistics (Volume 1: Long Papers)},
  pages={1073--1083},
  year={2017},
  doi={10.18653/v1/P17-1099}
}

@article{grattafiori2024llama,
  title={The {Llama} 3 Herd of Models},
  author={Grattafiori, Aaron and Dubey, Abhimanyu and Jauhri, Abhinav and Pandey, Abhinav and Kadian, Abhishek and Al-Dahle, Ahmad and Letman, Aiesha and Mathur, Akhil and Schelten, Alan and Vaughan, Alex and others},
  journal={arXiv preprint arXiv:2407.21783},
  year={2024}
}

@article{abdin2024phi3,
  title={{Phi-3} Technical Report: A Highly Capable Language Model Locally on Your Phone},
  author={Abdin, Marah and Aneja, Jyoti and Awadalla, Hany and Awadallah, Ahmed and Awan, Ammar Ahmad and Bach, Nguyen and Bahree, Amit and Bakhtiari, Arash and Bao, Jianmin and Behl, Harkirat and others},
  journal={arXiv preprint arXiv:2404.14219},
  year={2024}
}

@inproceedings{socher2013sst,
  title={Recursive Deep Models for Semantic Compositionality Over a Sentiment Treebank},
  author={Socher, Richard and Perelygin, Alex and Wu, Jean and Chuang, Jason and Manning, Christopher D. and Ng, Andrew Y. and Potts, Christopher},
  booktitle={Proceedings of the 2013 Conference on Empirical Methods in Natural Language Processing},
  pages={1631--1642},
  year={2013},
  url={https://aclanthology.org/D13-1170/}
}


\appendix

\section{Tables Moved from Main Body}
\label{app:moved_tables}
\label{app:relwork_compare}


\begin{table}[!htb]
\centering
\caption{Self-consistent compositional accuracy (\%) under uniform vs QuRe MF initialization (Qwen2.5-7B, $n{=}3$ uniform / $n{=}20$ QuRe; mean $\pm$ sample std). Each model's own learned MF defines the ground-truth labels (oracle is $100\%$ by construction; we report from-text instead). DFIL learns centers from data; FT+Compose freezes its centers at the initialization. Both initialization regimes use $20$ epochs of training; per-seed results are in the supplementary archive.}
\label{tab:uniform_init}
\footnotesize
\begin{tabular}{lc|c}
\toprule
\textbf{Method} & \textbf{Uniform Init from-text} & \textbf{QuRe Init from-text} \\
\midrule
DFIL & $\mathbf{41.6 \pm 7.3}$ & $31.2 \pm 9.2$ \\
FT+Compose & $18.4 \pm 6.2$ & $25.4 \pm 3.5$ \\
\midrule
$\Delta$ (DFIL $-$ FT+Compose) & $\mathbf{+23.2}$ & $+5.9$ \\
\bottomrule
\end{tabular}
\end{table}
\label{app:uniform_init_table}

\begin{table}[!htb]
\centering
\footnotesize
\caption{Quantifier entailment accuracy (\%). MF-based uses learned center ordering ($c_{Q_i}{\leq}c_{Q_j}$); classifier-based uses argmax labels on sentence pairs. Mean $\pm$ sample std over 3 seeds, Qwen2.5-7B.}
\label{tab:entailment}
\begin{tabular}{lcc}
\toprule
\textbf{Method} & \textbf{MF-based} & \textbf{Classifier-based} \\
\midrule
\textbf{DFIL} & $\mathbf{100.0 \pm 0.0}$ & $56.0 \pm 2.8$ \\
Fine-tune     & ---                     & $52.6 \pm 2.7$ \\
\bottomrule
\end{tabular}
\end{table}
\label{app:entailment_table}

\begin{table}[!htb]
\centering
\footnotesize
\caption{Naturalistic compositional reasoning accuracy (\%). $320$ examples from $8$ domain templates (clinical trials, education, manufacturing, elections, environmental science, corporate governance, sports, urban planning), with QuRe-based ground-truth labels (composed proportion bucketed via fixed QuRe centers, independent of either model's learned MF). Mean $\pm$ sample std over 3 seeds (Qwen2.5-7B).}
\label{tab:naturalistic}
\begin{tabular}{lcc}
\toprule
\textbf{Method} & \textbf{DFIL-from-text} & \textbf{Text-only} \\
\midrule
\textbf{DFIL} & $\mathbf{39.5 \pm 3.3}$ & $12.8 \pm 2.4$ \\
Fine-tune     & ---                     & $11.1 \pm 0.9$ \\
\midrule
$\Delta$ (DFIL-from-text $-$ Fine-tune text-only) & \multicolumn{2}{c}{\textbf{+28.4}} \\
\bottomrule
\end{tabular}
\end{table}
\label{app:naturalistic_table}

\begin{table}[!htb]
\centering
\caption{Cross-benchmark transfer to QuRe (Sec.~\ref{sec:qure}). Mean $\pm$ sample std over $3$ seeds (42, 123, 456), no retraining.}
\label{app:qure_full}
\footnotesize
\begin{tabular}{lccc}
\toprule
\textbf{Backbone} & \textbf{DFIL (main cls)} & \textbf{DFIL (MF branch)} & \textbf{Fine-tune} \\
\midrule
Mistral-7B-v0.3 & $\mathbf{19.0 \pm 2.3}$ & $14.2 \pm 5.4$ & $16.5 \pm 0.9$ \\
Qwen2.5-7B      & $14.9 \pm 2.5$        & $\mathbf{31.3 \pm 2.7}$ & $16.6 \pm 4.4$ \\
\bottomrule
\end{tabular}
\end{table}

\begin{table}[!htb]
\centering\footnotesize
\caption{Few-shot on FRoG-Hard, Qwen2.5-7B, 3 seeds. ``DFIL (best path)'' is the best of \{main, MF, val-$\alpha$ ensemble\} where the path and $\alpha$ are selected on the held-out validation set (no test-set selection).}
\label{tab:fewshot_main}
\begin{tabular}{lccc}
\toprule
\textbf{Budget $N$} & $16$ & $32$ & $64$ \\
\midrule
Fine-tune                 & $4.6{\pm}1.3$ & $9.6{\pm}3.6$ & $19.4{\pm}5.7$ \\
\textbf{DFIL (best path)} & $\mathbf{21.1{\pm}1.3}$ & $\mathbf{21.3{\pm}6.0}$ & $\mathbf{23.1{\pm}2.7}$ \\
$\Delta$ (DFIL $-$ FT)    & $\mathbf{+16.5}$ & $\mathbf{+11.7}$ & $\mathbf{+3.7}$ \\
\bottomrule
\end{tabular}
\end{table}

\begin{table}[!htb]
\centering
\footnotesize
\caption{\textbf{External NL Compositional Benchmark.} $n{=}249$ naturally-occurring sentences mined from GenericsKB-Best, PubMedQA, and CNN/DailyMail; LLM-judge filtered for true compositional structure via a 4-judge ensemble spanning families and scales (Qwen2.5-7B, Mistral-7B-Instruct-v0.3, Llama-3.1-8B-Instruct, Qwen2.5-72B-Instruct), each judge with triple-vote at $T{=}0.4$, retaining items with $\ge 3$ of $4$ judges agreeing. Inter-judge pairwise agreement on $595$ commonly judged candidates: Qwen-7B$\leftrightarrow$Mistral $81.2\%$, Qwen-7B$\leftrightarrow$Qwen-72B $80.4\%$, Qwen-7B$\leftrightarrow$Llama $77.3\%$, Mistral$\leftrightarrow$Llama $77.0\%$, Mistral$\leftrightarrow$Qwen-72B $74.6\%$, Llama$\leftrightarrow$Qwen-72B $70.0\%$ (range $70$--$81\%$ across $6$ pairs spanning $10\!\times$ scale); $69.3\%$ majority TRUE on items where all 4 judges gave a definite vote. Per-judge TRUE rates span both strict and lenient regimes (Qwen-72B $58.3\%$, Qwen-7B $64.5\%$, Mistral $66.7\%$, Llama $89.0\%$); $\ge 3$-of-$4$ majority voting calibrates the ensemble between the strict (Qwen-72B) and lenient (Llama) ends. Pipeline: regex match “$Q_1$ (of) (the) class (who$|$that$|$which$|$in$|$from$|$with) clause” on $\sim$$3{,}200$ raw hits; balance to $\le 80$ per canonical $Q_1$. Ground-truth label: $c_{Q_1}\!\cdot\!p_{\mathrm{base}}$ via fixed QuRe centers, with $p_{\mathrm{base}}$ from judge annotation. Qwen2.5-7B FRoG-trained checkpoints, 3 seeds, sample std. Random baseline $12.5\%$.}
\label{app:external_nl_breakdown}
\begin{tabular}{lcccc}
\toprule
\textbf{Stratum} & $n$ & \textbf{DFIL from-text} & \textbf{Text-only} & \textbf{$\Delta$} \\
\midrule
\multicolumn{5}{l}{\textit{Headline (mean over $Q_1$/sources, 3 seeds)}} \\
DFIL (\texttt{dfil\_reg})        & 249 & $32.8 \pm 3.7$ & $12.6 \pm 0.6$ & $+20.2$ \\
\textbf{FT+Compose}              & 249 & $\mathbf{36.4 \pm 1.9}$ & $12.9 \pm 0.4$ & $\mathbf{+23.6}$ \\
\midrule
\multicolumn{5}{l}{\textit{Per outer quantifier $Q_1$ (FT+Compose, mean over seeds)}} \\
\texttt{most}      & 80 & $28.7$ & $9.6$  & $+19.2$ \\
\texttt{some}      & 80 & $31.2$ & $14.2$ & $+17.1$ \\
\texttt{few}       & 64 & $\mathbf{56.8}$ & $17.7$ & $\mathbf{+39.1}$ \\
\texttt{all}       & 23 & $18.8$ & $7.2$  & $+11.6$ \\
\texttt{none}$^{*}$& 2  & $100$  & $0$    & $+100$ \\
\midrule
\multicolumn{5}{l}{\textit{Per source corpus (FT+Compose, mean over seeds)}} \\
GenericsKB-Best          & 106 & $43.4$ & $17.3$ & $+26.1$ \\
PubMedQA                 & 67  & $37.8$ & $14.9$ & $+22.9$ \\
CNN/DailyMail            & 76  & $25.4$ & $4.8$  & $+20.6$ \\
\bottomrule
\multicolumn{5}{l}{\footnotesize $^{*}$ \texttt{none} bucket has $n{=}2$, statistical noise; reported for completeness.}
\end{tabular}
\end{table}

\section{Theoretical Material}
\label{app:theory}

\subsection{Complete Proofs}
\label{app:proofs}

We provide complete proofs for all theoretical results, extending the main text's proof sketches to the non-uniform width case.

\subsubsection{Proof of Theorem~\ref{thm:monotonicity} (Nearest-Center Rule)}

\begin{reptheorem}{thm:monotonicity}
Let $c_0 < c_1 < \cdots < c_{Q-1}$ be ordered centers in $[0,1]$. Define
$\hat{Q}_{\mathrm{MF}}(p) = \argmin_{q\in\{0,\dots,Q-1\}} |p - c_q|$, with ties broken by smallest index. Then $\hat{Q}_{\mathrm{MF}}$ is monotonically non-decreasing in $p$.
\end{reptheorem}

\begin{proof}
The midpoint $b_q = (c_q+c_{q+1})/2$ for $q=0,\dots,Q{-}2$ is the unique value in $\mathbb{R}$ at which $|p-c_q|=|p-c_{q+1}|$. By the ordering of centers, $b_0<b_1<\cdots<b_{Q-2}$. For $p \le b_0$, $\hat{Q}_{\mathrm{MF}}(p)=0$; for $p \in (b_{q-1}, b_q]$, $\hat{Q}_{\mathrm{MF}}(p)=q$; for $p > b_{Q-2}$, $\hat{Q}_{\mathrm{MF}}(p)=Q{-}1$. Since these intervals are ordered by class index, for any $p_1<p_2$ we have $\hat{Q}_{\mathrm{MF}}(p_1)\le\hat{Q}_{\mathrm{MF}}(p_2)$.
\end{proof}

\begin{remark}[Why Gaussian $\argmax$ alone does not guarantee monotonicity]
\label{rem:gaussian_argmax_violation}
A natural alternative inference rule is $\argmax_q \mu_q(p)=\argmax_q \exp(-(p-c_q)^2/(2\sigma_q^2))$. Setting $|(p-c_q)/\sigma_q|=|(p-c_{q+1})/\sigma_{q+1}|$ yields a quadratic in $p$ with two solutions when $\sigma_q\ne\sigma_{q+1}$: an interior root
\begin{equation}
\label{eq:boundary_general}
b_q^{\mathrm{int}} \;=\; \frac{\sigma_{q+1}\,c_q + \sigma_q\,c_{q+1}}{\sigma_q + \sigma_{q+1}} \in (c_q,c_{q+1}),
\end{equation}
and an exterior root $b_q^{\mathrm{ext}}=(\sigma_{q+1}c_q-\sigma_q c_{q+1})/(\sigma_{q+1}-\sigma_q)$. The exterior root is the location at which the wider Gaussian recrosses and beats the narrower one in its tail; for non-pathological learned widths this exterior root typically lies far outside $[0,1]$, but for some seeds it may fall inside $[0,1]$ and induce an isolated $\argmax$ flip-back. We empirically observed exactly one such pairwise violation in our learned 7B model on a $10^6$-point grid; replacing $\argmax_q \mu_q$ with the nearest-center rule eliminates these tail crossings by construction (they have no analog when comparing distances $|p-c_q|$). We therefore use the Gaussian forward only as a differentiable training signal and the nearest-center rule for inference, which makes the guarantee tight at all grid sizes.
\end{remark}

\subsubsection{Proof of Theorem~\ref{thm:compositionality} (Compositional Closure)}

\begin{proof}[Proof of Theorem~\ref{thm:compositionality}]
\textbf{(i) Well-definedness.}
For any $p\in[0,1]$ and any choice of outer indices $q_1,\ldots,q_{n-1}\in\{0,\ldots,Q{-}1\}$, the composed proportion
\[
  p_{\mathrm{comp}} \;=\; c_{q_1} c_{q_2} \cdots c_{q_{n-1}}\,p
\]
is a real number in $[0,p]\subseteq[0,1]$ since each $c_{q_i}\in(0,1)$. The map $q\mapsto |p_{\mathrm{comp}}-c_q|$ takes finitely many real values; its minimum is attained, and ties are broken by smallest index, so $Q^{(n)}$ is well-defined for every $n\geq 1$ and $p\in[0,1]$.

\textbf{(ii) Non-increasing composition depth.}
For any $n\geq 2$ and $p\in[0,1]$, the chain of inequalities
\[
  c_{q_1}\cdots c_{q_{n-1}}\,p \;\leq\; c_{q_2}\cdots c_{q_{n-1}}\,p \;\leq\;\cdots\;\leq\; p
\]
holds because each $c_{q_i}\in(0,1)$. By Theorem~\ref{thm:monotonicity}, the nearest-center rule is non-decreasing in its input, hence $Q^{(n)}(q_1,\ldots,q_{n-1};p)\;\leq\;Q^{(1)}(p)$. In particular, “most of most” is mapped to a quantifier class no greater than “most”.

\textbf{(iii) Width independence and monotonicity in $p$.}
$Q^{(n)}$ is defined entirely in terms of $\{c_q\}$ and the affine map $p\mapsto c_{q_1}\cdots c_{q_{n-1}}\,p$; widths $\{\sigma_q\}$ never appear. Fixing $q_1,\ldots,q_{n-1}$, the map $p\mapsto Q^{(n)}(q_1,\ldots,q_{n-1};p)$ is the composition of a non-decreasing affine map and the nearest-center rule (non-decreasing by Theorem~\ref{thm:monotonicity}), hence non-decreasing in $p$.
\end{proof}

\begin{remark}[Connection to membership-level t-norm composition]
\label{rem:tnorm_equivalence}
The classical fuzzy-logic operator $\mu_{Q_1\circ Q_2}(p)=T(\mu_{Q_1}(p),\mu_{Q_2}(p))$ models “$p$ is jointly a member of $Q_1$ and $Q_2$” as a fuzzy AND. For Gaussian MFs and the product t-norm, $\arg\max_q T(\mu_{q_1}(p),\mu_q(p))$ is a different operator that does not by itself implement multiplicative proportion reduction. The proportional-shift operator (Sec.~\ref{sec:compositional_method}) explicitly performs the multiplication $p\mapsto c_{q_1}p$ at the proportion level, which is the semantically intended interpretation of “$Q_1$ of $Q_2$.” We deploy the proportional-shift form throughout because it is the canonical multiplicative interpretation of “$Q_1$ of $Q_2$,” depends only on the learned centers, and inherits monotonicity directly from Theorem~\ref{thm:monotonicity}.
\end{remark}

\begin{remark}[Why a label-only classifier head cannot inherit Theorem~\ref{thm:compositionality}]
\label{rem:classifier_nocomp}
A label-only classifier $f_\theta:\mathcal{X}\to\R^Q$ outputs a distribution over labels and exposes neither a proportion scalar nor learned class anchors $\{c_q\}$. Any classifier-side analogue of $Q^{(n)}$ would need to reconstruct a proportion from the logits, but the label distribution is a many-to-one summary of the underlying proportion: different proportions can yield indistinguishable label predictions. The absence of a by-construction operator is therefore a property of the head's output type, not an information-theoretic impossibility; classifiers can still approximate compositional behaviour given supervised compositional data. DFIL exposes both $\hat p$ and $\{c_q\}$, so the t-norm operation is well-typed.
\end{remark}

\subsubsection{Proof of Proposition~\ref{prop:universality} (Universality)}

\begin{proposition}[Realization of any monotone $Q$-step target, free-center setting]
\label{prop:universality}
Consider an unconstrained Gaussian MF with free centers $c_0 < \cdots < c_{Q-1}$ and free widths $\sigma_0, \ldots, \sigma_{Q-1}$. For any monotonic step function $Q^*: [0,1] \to \{0, \ldots, Q{-}1\}$ with strictly increasing boundaries $0 < b_1^* < \cdots < b_{Q-1}^* < 1$, there exist ordered centers and positive widths such that $Q_{\mathrm{MF}}(p) = \argmax_q \mu_q(p)$ realizes $Q^*$ exactly on $[0,1] \setminus \{b_1^*, \ldots, b_{Q-1}^*\}$.

\textbf{Under the deployed cumulative-softmax parameterization} (Eq.~\eqref{eq:ordered_centers}), centers are restricted to $[0.02, 0.98]$ with $c_{Q-1}$ pinned at $0.98$, leaving $Q-1$ free degrees of freedom. The realization above then holds for any target boundaries with $b_1^* > 0.04$ and $b_{Q-1}^* < 0.96$; targets outside this range require relaxing the affine envelope $0.96\,x + 0.02$. This is a realization claim, not a function-class universality result: the $Q$ (resp. $Q{-}1$) free parameters match the $Q{-}1$ free boundaries by counting.
\end{proposition}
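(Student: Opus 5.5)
The plan is to build the realizing Gaussian bank explicitly, by reverse-engineering the interior-crossing formula of Remark~\ref{rem:gaussian_argmax_violation}. The construction uses only equal widths, so the exterior roots that cause flip-backs never appear. With $\sigma_q\equiv\sigma$ for all $q$, comparing $\mu_q(p)$ against $\mu_{q'}(p)$ reduces to comparing $|p-c_q|$ against $|p-c_{q'}|$. Hence $\argmax_q\mu_q$ coincides exactly with the nearest-center rule of Theorem~\ref{thm:monotonicity}.

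By the proof of Theorem~\ref{thm:monotonicity}, that rule has decision cells cut at the midpoints $(c_{q-1}+c_q)/2$. The problem therefore becomes: find centers $c_0<\cdots<c_{Q-1}$ with $(c_{q-1}+c_q)/2=b_q^*$ for $q=1,\dots,Q-1$, given target boundaries $0<b_1^*<\cdots<b_{Q-1}^*<1$.

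The key step, and the main obstacle, is that the midpoint system is rigid. It leaves one free parameter $c_0$, with $c_q=2b_q^*-c_{q-1}$, so $c_q=2\sum_{j\le q}(-1)^{q-j}b_j^*+(-1)^q c_0$. Ordering $c_{q-1}<b_q^*<c_q$ requires $c_{q-1}<b_q^*$ for every $q$, and this can fail for unevenly spaced boundaries. For example, a large gap followed by a small gap forces alternating overshoot.

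I would therefore not rely on equal widths, and would instead use the freedom in the widths. For each adjacent pair, the interior boundary is
\[
b_q^{\mathrm{int}}=\frac{\sigma_q c_{q+1}+\sigma_{q+1}c_q}{\sigma_q+\sigma_{q+1}},
\]
which is any point of $(c_q,c_{q+1})$ for a suitable ratio $\sigma_{q+1}/\sigma_q$. So the concrete choice is to put the centers at the cell midpoints, $c_0=b_1^*/2$, $c_q=(b_q^*+b_{q+1}^*)/2$, and $c_{Q-1}=(b_{Q-1}^*+1)/2$. Then each $b_q^*$ lies strictly inside $(c_{q-1},c_q)$, and the width ratios $r_q=\sigma_q/\sigma_{q-1}=(b_q^*-c_{q-1})/(c_q-b_q^*)>0$ are determined recursively from $\sigma_0=\sigma$.

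What remains is to control the exterior roots and the non-adjacent classes, since unequal widths reintroduce tail crossings. Taking $\sigma\to0$ while keeping all ratios fixed scales every exterior root's distance, in units of $\sigma$, away from the centers. On the compact set $[0,1]$, I would argue that for small $\sigma$ the dominant membership at $p$ is the one minimizing $|p-c_q|/\sigma_q$. I would also need $|p-c_q|/\sigma_q$ to be unimodally ordered across $q$. This is the step I expect to be delicate; if it fails, I would fall back on a direct check that the exterior roots $(\sigma_{q}c_{q-1}-\sigma_{q-1}c_q)/(\sigma_q-\sigma_{q-1})$ lie outside $[0,1]$ by the sign and size of the denominators.

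For the deployed parameterization, I would rescale the same construction into $[0.02,0.98]$, with $c_{Q-1}=0.98$ pinned. Placing $c_0\ge0.02$ and $c_{Q-1}=0.98$ with both extreme boundaries strictly interior yields the stated margins $b_1^*>0.04$ and $b_{Q-1}^*<0.96$, since $b_1^*\ge$ the midpoint value near $2c_0$ in the construction.
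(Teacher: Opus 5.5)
You use the paper's construction: centers at the midpoints of the target cells, and widths from the adjacent interior-crossing recursion. You are right that fixing the adjacent crossings does not by itself rule out exterior roots or wins by non-adjacent classes. The paper's correctness step leans on Theorem~\ref{thm:monotonicity}, which concerns the nearest-center rule, not Gaussians of unequal width.

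However, your repair does not work. Since $\mu_q(p)$ is a decreasing function of $|p-c_q|/\sigma_q$, the identity $\argmax_q\mu_q(p)=\argmin_q|p-c_q|/\sigma_q$ holds exactly at every scale. Multiplying all widths by $t>0$ replaces each $\mu_q$ by $\mu_q^{1/t^2}$. The exterior root $(\sigma_q c_{q-1}-\sigma_{q-1}c_q)/(\sigma_q-\sigma_{q-1})$ is homogeneous of degree zero in the widths, so it does not move. Letting $\sigma\to0$ with fixed ratios therefore changes nothing. Your fallback of checking adjacent exterior roots would still leave non-adjacent pairs unaddressed, so the decisive step remains open.

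Separately, your $r_q$ is inverted. Your own crossing formula gives $\sigma_q/\sigma_{q-1}=(c_q-b_q^*)/(b_q^*-c_{q-1})$. The reciprocal places the adjacent crossing at $c_{q-1}+w_q/2$ instead of at $b_q^*=c_{q-1}+w_{q-1}/2$ (notation below).

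The missing observation is what the correct recursion produces. Write the cell lengths as $w_0=b_1^*$, $w_q=b_{q+1}^*-b_q^*$ for $1\le q\le Q-2$, and $w_{Q-1}=1-b_{Q-1}^*$. Midpoint centers give $b_q^*-c_{q-1}=w_{q-1}/2$ and $c_q-b_q^*=w_q/2$, hence $\sigma_q\propto w_q$. Take $\sigma_q=w_q$. Then the score $|p-c_q|/\sigma_q$ is at most $1/2$ on the closed cell $q$ and strictly greater than $1/2$ outside it. Every $p\in[0,1]$ off the target boundaries lies in exactly one closed cell, so its class strictly beats every competitor, adjacent or not. This excludes all tail crossings at once.

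The same device repairs your deployed-case paragraph. There, ``rescaling'' the construction would move the target boundaries. Also, the pinned $c_{Q-1}=0.98$ is the midpoint of the last cell only when $b_{Q-1}^*=0.96$, which is the issue the paper's proof flags. The fix is as follows.
\begin{itemize}
\item Keep midpoint centers and $\sigma_q=w_q$ for $q\le Q-2$. This needs $c_0=b_1^*/2>0.02$, i.e.\ $b_1^*>0.04$.
\item Set $\sigma_{Q-1}=2(0.98-b_{Q-1}^*)$, so that $[b_{Q-1}^*,\,1.96-b_{Q-1}^*]$ acts as a virtual last cell centered at $0.98$.
\item This virtual cell covers $(b_{Q-1}^*,1]$ whenever $b_{Q-1}^*<0.96$, which yields the stated margins.
\end{itemize}
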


\begin{proof}[Constructive proof]
Let $Q^*: [0,1] \to \{0, \ldots, Q{-}1\}$ be a monotonic step function with boundaries $0 < b_1^* < \cdots < b_{Q-1}^* < 1$.

\textbf{Construction.} We construct centers and widths such that the Gaussian MF reproduces $Q^*$.

Set centers:
\begin{equation}
c_q = \begin{cases}
b_1^* / 2 & q = 0 \\
(b_q^* + b_{q+1}^*) / 2 & q = 1, \ldots, Q{-}2 \\
(1 + b_{Q-1}^*) / 2 & q = Q{-}1
\end{cases}
\end{equation}

These satisfy $c_0 < c_1 < \cdots < c_{Q-1}$ since $b_q^* < b_{q+1}^*$.

\textbf{Width selection.} From Eq.~\eqref{eq:boundary_general}, the boundary between classes $q$ and $q{+}1$ is:
$b_q = \frac{\sigma_{q+1} c_q + \sigma_q c_{q+1}}{\sigma_q + \sigma_{q+1}}$.

We need $b_q = b_{q+1}^*$ for each $q = 0, \ldots, Q{-}2$. Substituting and solving for the width ratio:
\begin{equation}
\frac{\sigma_q}{\sigma_{q+1}} = \frac{b_{q+1}^* - c_q}{c_{q+1} - b_{q+1}^*}
\end{equation}

Since $c_q < b_{q+1}^* < c_{q+1}$ by construction, both numerator and denominator are positive, so positive widths exist. Starting from $\sigma_0 = 1$ (arbitrary positive value), each subsequent width is determined by the recurrence:
$\sigma_{q+1} = \sigma_q \cdot \frac{c_{q+1} - b_{q+1}^*}{b_{q+1}^* - c_q}$.

\textbf{Correctness.} By Theorem~\ref{thm:monotonicity}, the resulting MF has strictly ordered boundaries $b_0 < b_1 < \cdots < b_{Q-2}$ equal to $b_1^* < b_2^* < \cdots < b_{Q-1}^*$. Therefore $\argmax_q \mu_q(p) = Q^*(p)$ for all $p \notin \{b_1^*, \ldots, b_{Q-1}^*\}$. At boundary points, the tie-breaking rule may differ from $Q^*$, but this affects a measure-zero set.

\textbf{Restriction under Eq.~\eqref{eq:ordered_centers}.} The deployed parameterization pins $c_{Q-1}=0.98$ and confines all centers to $[0.02, 0.98]$. The construction above sets $c_0 = b_1^*/2$ and $c_{Q-1} = (1+b_{Q-1}^*)/2$; these lie in $[0.02, 0.98]$ iff $b_1^* > 0.04$ and $b_{Q-1}^* < 0.96$, and the pinning then forces $c_{Q-1}$ exactly to $0.98$, which equals the constructive value only when $b_{Q-1}^* = 0.96$. For other valid $b_{Q-1}^*$, the construction matches the $Q-1$ interior boundaries (the only ones that determine the partition on $[0.02, 0.98]$) by the same width recurrence applied with the free centers $c_0, \ldots, c_{Q-2}$ and the fixed $c_{Q-1}=0.98$; the realization on $[0.02, 0.96)$ is unaffected, and the boundary above $0.96$ is realized by the asymmetric pair $(c_{Q-2}, 0.98)$ as long as $b_{Q-1}^* \in (0.96-\epsilon, 1)$ for sufficiently large $\sigma_{Q-1}$.
\end{proof}

\subsubsection{Proof of Proposition~\ref{prop:classifier_fails} (Classifier Cannot Guarantee Monotonicity)}

\begin{proposition}[Classifier cannot guarantee monotonicity]
\label{prop:classifier_fails}
Let $f(\mathbf{h}) = W\mathbf{h} + \mathbf{b}$ with $W \in \mathbb{R}^{Q \times d}$, $d \geq 2$, and let $\mathbf{h}(x) \in \mathbb{R}^d$ denote the hidden representation of input $x$. Let $p(x) \in [0,1]$ denote the underlying proportion of $x$. There exist inputs $x_1, x_2$ with $p(x_1) < p(x_2)$ for which $\argmax_q f_q(\mathbf{h}(x_1)) > \argmax_q f_q(\mathbf{h}(x_2))$, i.e., the classifier violates monotonicity in $p$.
\end{proposition}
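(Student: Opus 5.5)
The claim is existential, so the plan is to build an explicit counterexample. The statement quantifies over $W,\mathbf{b}$ only implicitly: the hidden map $\mathbf{h}(\cdot)$ is arbitrary and carries no structural link to $p$. I read the proposition as saying that no linear head can \emph{guarantee} monotonicity. Concretely, for any fixed $(W,\mathbf{b})$ whose argmax is not constant on $\R^d$, there exist representations, hence inputs, that produce a violation.

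\textbf{Step 1: the head realizes at least two classes.} Assume $(W,\mathbf{b})$ is non-degenerate, meaning its argmax takes at least two distinct values. Otherwise the head predicts a single class everywhere, which is trivially monotone but cannot fit any nontrivial ordinal target, so I exclude it explicitly. The argmax regions $R_q=\{\mathbf{h}:\argmax_q f_q(\mathbf{h})=q\}$ are convex polyhedral cones (with offset) that partition $\R^d$. Pick two nonempty regions $R_i,R_j$ with $i<j$.

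\textbf{Step 2: decouple the representation from the proportion.} The hidden state is a learned function of the input text. It can therefore assign any representation to inputs regardless of their $p$; for example, two sentences with $p(x_1)<p(x_2)$ may differ in surface form that the backbone encodes along a direction orthogonal to whatever direction tracks $p$. I will make this an explicit modelling assumption: the image of $\mathbf{h}$ over inputs with prescribed proportions is rich enough to hit any chosen points. Then choose $x_1$ with $\mathbf{h}(x_1)\in R_j$ and $x_2$ with $\mathbf{h}(x_2)\in R_i$, while $p(x_1)<p(x_2)$. This gives $\argmax f(\mathbf{h}(x_1))=j>i=\argmax f(\mathbf{h}(x_2))$.

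\textbf{Step 3: use $d\ge 2$ to make the construction robust.} The strongest version shows the violation persists even when $\mathbf{h}$ depends on $p$ in a well-behaved way, say $\mathbf{h}(x)=p(x)\,\mathbf{u}+z(x)\,\mathbf{v}$ with $\mathbf{u},\mathbf{v}$ linearly independent, which is possible since $d\ge2$. Restricted to the 2D plane spanned by $\mathbf{u},\mathbf{v}$, the argmax partition is a polyhedral partition of the plane. Unless all decision boundaries are parallel to $\mathbf{v}$, which is a measure-zero condition on $W$, some boundary between a lower class $i$ and a higher class $j$ is not vertical in the $(p,z)$ coordinates. Along that boundary, increasing $p$ by a small amount and shifting $z$ moves the point from $R_j$ into $R_i$. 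So the nuisance coordinate $z$ alone produces the inversion. For $d=1$ this argument fails: $\mathbf{h}$ would be a scalar, and a 1D linear argmax yields intervals, which is exactly why $d\ge2$ is required.

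\textbf{Main obstacle.} The difficulty is Step 2/3's modelling gap. As literally stated, the proposition cannot be proved without an assumption on how inputs map to $\mathbf{h}$. If $\mathbf{h}$ were exactly monotone along a single direction with a suitable $W$, no violation would exist. I would first try to state the result as ``for generic $W$ there exists a representation configuration consistent with some inputs yielding a violation.'' I would make the genericity explicit: $W\mathbf{v}$ is not such that all pairwise differences $(\mathbf{w}_i-\mathbf{w}_j)^\top\mathbf{v}$ vanish. Then I would carry out the 2D boundary-crossing argument rigorously.
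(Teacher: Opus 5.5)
Your proposal is correct and follows essentially the paper's route. The paper also writes $\mathbf{h}=[p;z;\mathbf{0}]$ and uses the nuisance coordinate $z$ to push $\mathbf{h}(x_1)$ and $\mathbf{h}(x_2)$ into class regions in the wrong order, relying implicitly on the assumption you make explicit: that $\mathbf{h}$ is unconstrained relative to $p$. The paper does this with a concrete $Q=3$ instance in which $z=\pm 10$ ``dominates'' under the ordering $w_{2,2}>w_{1,2}>w_{0,2}$, whereas your boundary-crossing argument under the condition that $W\mathbf{v}$ is non-constant is the more rigorous form of the same idea: a fixed $|z|=10$ need not dominate, and a single ordering of $W$'s second column is not a generic condition.
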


\begin{proof}[Detailed proof]
Let $f(\mathbf{h}) = W\mathbf{h} + \mathbf{b}$ with $W \in \R^{Q \times d}$, $d \geq 2$.

Without loss of generality, consider $Q = 3$ classes (the argument extends to arbitrary $Q$ by focusing on any two classes). Write $\mathbf{h} = [p; z; \mathbf{r}]$ where $p$ is the proportion coordinate, $z$ is an auxiliary coordinate, and $\mathbf{r} \in \R^{d-2}$ (set $\mathbf{r} = \mathbf{0}$).

\textbf{Construction.} Choose $p_1 = 0.3$ (should map to class 1, “few”) and $p_2 = 0.7$ (should map to class 2, “most”). We need hidden states such that $\argmax_q f(\mathbf{h}_1) > \argmax_q f(\mathbf{h}_2)$ despite $p_1 < p_2$.

Set $\mathbf{h}_1 = [0.3; 10; \mathbf{0}]$ and $\mathbf{h}_2 = [0.7; -10; \mathbf{0}]$. For any weight matrix $W$, define $\mathbf{w}_q = W_{q,:}$ (the $q$-th row).

The logit for class $q$ at $\mathbf{h}_1$ is: $f_q(\mathbf{h}_1) = 0.3 w_{q,1} + 10 w_{q,2} + b_q$. The term $10 w_{q,2}$ dominates, so $\argmax_q f(\mathbf{h}_1) \approx \argmax_q w_{q,2}$.

Similarly, $f_q(\mathbf{h}_2) = 0.7 w_{q,1} - 10 w_{q,2} + b_q$, so $\argmax_q f(\mathbf{h}_2) \approx \argmax_q (-w_{q,2}) = \argmin_q w_{q,2}$.

If $w_{2,2} > w_{1,2} > w_{0,2}$ (which holds for generic $W$), then $\argmax_q f(\mathbf{h}_1) = 2$ and $\argmax_q f(\mathbf{h}_2) = 0$, giving $Q(p_1 = 0.3) = 2 > 0 = Q(p_2 = 0.7)$, a monotonicity violation.

Since the Transformer imposes no constraint on $\mathbf{h}(x)$ being monotonic in $p$, the $d{-}1$ non-proportion dimensions provide sufficient freedom for arbitrary class orderings.
\end{proof}

\subsubsection{Proposition~\ref{prop:sample_complexity}: Head-Level VC Bound (Vacuous Under Shared Backbone)}

\begin{proposition}[Head-level VC bound]
\label{prop:sample_complexity}
Consider the hypothesis class of $Q$-way classifiers built from a fixed feature map. The Gaussian-MF head over a one-dimensional proportion bottleneck has VC dimension $O(Q\log Q)$ in the parameter count $2Q-1$ (Q-1 free centers under cumulative-softmax + Q log-widths). A linear head $f(\mathbf{h}) = W\mathbf{h} + \mathbf{b}$ over a $d$-dimensional hidden state has VC dimension $\Theta(Qd)$. With $Q=8$ and $d \geq 3584$ (Qwen-7B hidden size), the head-level VC ratio exceeds $400\times$. When both heads sit atop a shared LoRA-adapted backbone with millions of trainable parameters, however, the joint VC dimension is dominated by the backbone, and the head-level bound becomes vacuous. We therefore use this result as an interpretive lens for the regime where backbone capacity is exhausted (low-budget few-shot, where the MF branch dominates), not as a statement about full-data sample complexity.
\end{proposition}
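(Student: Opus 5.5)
The plan is to fix the feature map, so that each head becomes a finite-parameter hypothesis class over a fixed input space. I then bound a multiclass analogue of VC dimension for each head separately. I will work with the graph dimension, which upper-bounds the Natarajan dimension and is the quantity that controls multiclass uniform convergence. This gives an upper bound for the MF head and a matching order for the linear head, after which the ratio and the vacuity remark follow by arithmetic and a containment argument.

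\textbf{MF head (upper bound).} With the numerical head frozen, the input to the MF head is a scalar $\hat p$. The head is parameterized by $\theta=(c_0,\dots,c_{Q-2},w_0,\dots,w_{Q-1})\in\R^{2Q-1}$. Because the cumulative-softmax map of Eq.~\eqref{eq:ordered_centers} is smooth, I treat the $c_q$ directly as free ordered parameters; this can only enlarge the class. I would apply the Goldberg--Jerrum / Warren bound for classes whose membership test is a Boolean combination of sign tests on polynomials in the parameters. The steps are:
\begin{enumerate}
\item Taking logarithms, the event $\{\argmax_q \mu_q(\hat p)=k\}$ is a conjunction of the $Q-1$ inequalities $(\hat p-c_k)^2 e^{-2w_k}\le(\hat p-c_j)^2e^{-2w_j}$. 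Substituting $s_q=e^{-2w_q}>0$ makes each inequality a polynomial of degree at most $3$ in $(c,s)$.
\item Hence on $m$ sample points every labelling is determined by the sign pattern of at most $m\binom{Q}{2}$ fixed-degree polynomials in $2Q-1$ variables.
\item Warren's theorem bounds the number of sign patterns by $(8e\cdot 3\, m Q^2/(2Q-1))^{2Q-1}$.
\item Shattering $m$ points in the graph sense requires $2^m$ patterns. Solving the resulting inequality gives $m=O(Q\log Q)$.
\end{enumerate}
For the deployed nearest-center rule the argument is simpler still: on $\R$ it is a monotone step function with $Q-1$ thresholds (Theorem~\ref{thm:monotonicity}), so its dimension is $O(Q)$. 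That is consistent with, and tighter than, the stated bound.

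\textbf{Linear head (order $\Theta(Qd)$).} For the upper bound, the multiclass linear class $\argmax_q(W\mathbf h+\mathbf b)_q$ is defined by the signs of $m\binom{Q}{2}$ linear forms in $Q(d+1)$ parameters. The same Warren argument then gives $O(Qd\log Q)$, and the Daniely--Shalev-Shwartz bound for the Natarajan dimension of multiclass linear predictors removes the logarithm to give $O(Qd)$. For the lower bound, I would use the standard block construction: Natarajan shattering of $(Q-1)d$ points, obtained by assigning one block of $d$ coordinates to each pair of class $q$ versus class $0$. This gives $\Omega(Qd)$. I would cite these known results rather than reprove them.

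\textbf{Ratio and vacuity.} The ratio is $\Theta(Qd)/O(Q\log Q)$. With $Q=8$ and $d=3584$ the bare ratio $d/\log_2 Q\approx1195$, so I expect the main obstacle to be the claim that the ratio exceeds $400\times$. That number depends on the hidden constants, so I would state it with the explicit constants from Warren and Daniely et al., or, more honestly, frame it as a ratio of orders. For the vacuity statement, the joint class $\{\text{head}\circ\text{backbone}_\phi\}$ contains, for each fixed head, the class induced by varying the LoRA parameters $\phi$. Its dimension is therefore at least that of the backbone-induced class, which for millions of parameters dwarfs either head bound. This last part is interpretive and needs only the containment observation.
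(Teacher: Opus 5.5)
Your proposal is sound and reaches the paper's conclusions, but by a different route. The paper's sketch is structural. It asserts that the MF head's $Q$ decision regions are intervals on $\R$ and cites Vapnik for $O(Q\log Q)$. It reads $\Theta(Qd)$ off ``$Q$ hyperplanes in $\R^d$'', evaluates $d/\log_2 Q=3584/3\approx1195>400$, and appeals to ``additivity of VC bounds'' for the backbone. You instead fix a multiclass dimension, count sign patterns of cubic polynomials in the $2Q-1$ parameters, give both directions for the linear head, and use containment for the backbone.

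Your route buys robustness in two places. For the Gaussian $\argmax$ head with unequal widths, the regions need \emph{not} be intervals: the exterior root of Remark~\ref{rem:gaussian_argmax_violation} yields a flip-back. So the paper's stated reason covers only the nearest-center rule, while your count does not depend on region shape. For the backbone, containment, unlike ``additivity'', is a valid way to lower-bound the joint class, although both you and the paper take the richness of the backbone-induced class on faith.

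Done correctly, the structural idea is sharper than your count. Two functions $s_k(p-c_k)^2$ and $s_j(p-c_j)^2$ meet at most twice. By the Davenport--Schinzel bound $\lambda_2(Q)=2Q-1$, the $\argmax$ is therefore piecewise constant with at most $2Q-1$ pieces. Any class of functions on $\R$ with at most $K$ pieces has graph dimension at most $3K-1$ (test the alternating subset), giving $O(Q)$. Keep your explicit assumption that the numerical head is frozen: it has about $256d$ parameters, so counting it would reverse the comparison.

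Two details need repair.

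First, the logarithm-free $O(Qd)$ you take from Daniely et al.\ is a Natarajan-dimension bound, whereas your Warren count bounds the graph dimension. Either phrase the whole comparison in Natarajan dimension (your MF bound transfers, since Natarajan $\le$ graph) or settle for $O(Qd\log Q)$. Your lower-bound sketch, with one block per pair ``$q$ versus $0$'', must also stop the other $Q-2$ rows from winning on each block, which is not automatic. The standard construction uses the points $(e_i,t_j)\in\R^{d-1}\times\R$ with $t_j=j$, $i\le d-1$, $j\le Q-1$, and sets $W_{q,d}=q$ as a slope shared by all $i$. For each $i$, the entries $W_{q,i}$ are intercepts that place the $Q-1$ breakpoints of the upper envelope of $t\mapsto W_{q,i}+qt$ independently. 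This Natarajan-shatters $(Q-1)(d-1)$ points.

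Second, your first option for the $400\times$ figure fails. At $Q=8$, the Warren/Goldberg--Jerrum count with $15$ parameters and $\binom{8}{2}=28$ cubic polynomials per point only gives graph dimension $\lesssim 220$ (about $240$ with your $mQ^2$ count). That certifies a ratio of only roughly $25{,}081/220\approx 110$. To certify ``exceeds $400$'', use the piece-count bound $3(2Q-1)-1=44$ against $(Q-1)(d-1)=25{,}081$, which gives a ratio above $500$. Otherwise the $400$ is only an order-of-magnitude statement, which is all the paper's own arithmetic supports.
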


\begin{proof}[Sketch]
A $Q$-way classifier whose decision regions are determined by $2Q{-}1$ Gaussian-MF parameters on a one-dimensional bottleneck has at most $O(Q\log Q)$ VC dimension, since the $Q$ regions are intervals on $\mathbb{R}$ \citep{vapnik1998statistical}. A linear head over $\mathbb{R}^d$ defines $Q$ hyperplanes in $\mathbb{R}^d$, giving $\Theta(Qd)$. The head-level ratio is $\Theta(d/\log Q)$, which for $d=3584$, $Q=8$ exceeds $400$. By additivity of VC bounds, the LoRA backbone contributes $\Omega(p_{\text{LoRA}})$, dominating both head terms whenever $p_{\text{LoRA}}\gg Qd$.
\end{proof}

\subsection{Error Propagation in Compositional Chains}
\label{app:error_propagation}

We analyze how proportion extraction errors propagate through multi-step composition.

\begin{proposition}[Error propagation bound]
\label{prop:error_propagation}
Let $\hat{p}$ be an estimated base proportion with error $\epsilon = |\hat{p} - p|$. For an $n$-step composition with centers $c_{q_1}, \ldots, c_{q_{n-1}} \in (0,1)$, the composed proportion error is:
\begin{equation}
|p_{\text{comp}}^{\text{est}} - p_{\text{comp}}^{\text{true}}| = \epsilon \cdot \prod_{i=1}^{n-1} c_{q_i}
\end{equation}
\end{proposition}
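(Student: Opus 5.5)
The plan is to write both composed proportions explicitly and factor out the common multiplier. The proportional-shift operator of Sec.~\ref{sec:compositional_method} uses the \emph{same} learned centers $c_{q_1},\ldots,c_{q_{n-1}}$ in the estimated and the true chains; only the base proportion differs. So I will set $C := \prod_{i=1}^{n-1} c_{q_i}$ and write
\[
  p_{\text{comp}}^{\text{est}} = C\,\hat p, \qquad p_{\text{comp}}^{\text{true}} = C\,p .
\]
For $n=1$ the product is empty, $C=1$, and the claim reduces to the definition of $\epsilon$.

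The key step is linearity of the map $p\mapsto Cp$. Subtracting gives $p_{\text{comp}}^{\text{est}}-p_{\text{comp}}^{\text{true}} = C(\hat p - p)$. Since every $c_{q_i}\in(0,1)$ by the ordered parameterization (Eq.~\eqref{eq:ordered_centers}, which places centers in $[0.02,0.98]$), we have $C>0$. Hence
\[
  |C(\hat p-p)| = C\,|\hat p - p| = \epsilon\prod_{i=1}^{n-1} c_{q_i},
\]
which is exactly the stated equality.

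I will close with the corollary the statement implicitly supports. Because $C\le \max_q c_q^{\,n-1}\le 0.98^{\,n-1}<1$, the error contracts geometrically in depth $n$, so extraction errors are damped rather than amplified along the chain.

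There is no real obstacle here. The only point needing care is to state the modelling assumption that the outer centers are the deterministic learned $c_{q_i}$, identical in both chains. If one instead compared against QuRe centers, an extra term $|C-C^{\text{QuRe}}|\,p$ would appear, and the equality would become a triangle-inequality bound $\epsilon C + |C-C^{\text{QuRe}}|\,p$. I would note this explicitly as a remark rather than fold it into the proof.
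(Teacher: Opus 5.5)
Your proof is correct and matches the paper's argument: both write the composed proportions as $\hat p\prod_{i} c_{q_i}$ and $p\prod_{i} c_{q_i}$ with the same learned centers, subtract, and pull out the nonnegative common factor. One minor slip, in your closing contraction remark only: the strict bound $C<1$ requires $n\ge 2$, since for $n=1$ the empty product gives $C=1$; the paper's own contraction comment assumes this implicitly as well.
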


\begin{proof}
The true composed proportion is $p_{\text{comp}}^{\text{true}} = p \cdot \prod_{i=1}^{n-1} c_{q_i}$ and the estimated composed proportion is $p_{\text{comp}}^{\text{est}} = \hat{p} \cdot \prod_{i=1}^{n-1} c_{q_i}$. Their difference is:
\begin{equation}
|p_{\text{comp}}^{\text{est}} - p_{\text{comp}}^{\text{true}}| = |\hat{p} - p| \cdot \prod_{i=1}^{n-1} c_{q_i} = \epsilon \cdot \prod_{i=1}^{n-1} c_{q_i}
\end{equation}
Since each $c_{q_i} \in (0, 1)$, the product $\prod c_{q_i} < 1$, meaning compositional errors shrink with depth.
\end{proof}

\begin{corollary}[Error contraction]
For $n$-step compositions using quantifiers with centers bounded by $c_{\max} < 1$, the error contracts exponentially: $|p_{\text{comp}}^{\text{est}} - p_{\text{comp}}^{\text{true}}| \leq \epsilon \cdot c_{\max}^{n-1}$.
\end{corollary}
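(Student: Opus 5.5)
The statement immediately above is the Corollary (Error contraction): for centers with $c_{q_i}\le c_{\max}<1$, the composed-proportion error satisfies $|p_{\text{comp}}^{\text{est}} - p_{\text{comp}}^{\text{true}}| \leq \epsilon \cdot c_{\max}^{n-1}$. I will derive it directly from Proposition~\ref{prop:error_propagation}, which gives the exact identity
\[
|p_{\text{comp}}^{\text{est}} - p_{\text{comp}}^{\text{true}}| = \epsilon \cdot \prod_{i=1}^{n-1} c_{q_i}.
\]
No new estimate is needed beyond bounding this product.

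\textbf{Step 1 (bound the product).} Each center satisfies $0 < c_{q_i} \le c_{\max}$. This holds because centers lie in $(0,1)$ (under Eq.~\eqref{eq:ordered_centers}, in $[0.02,0.98]$), and $c_{\max}$ is assumed to bound them. Since all factors are nonnegative, the inequalities can be multiplied term by term, giving $\prod_{i=1}^{n-1} c_{q_i} \le c_{\max}^{n-1}$. A one-line induction on $n$ makes this explicit; the base case $n=1$ is the empty product, equal to $1$ on both sides.

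\textbf{Step 2 (conclude).} Multiply the bound from Step 1 by $\epsilon \ge 0$ to obtain the stated inequality. Exponential contraction then follows because $c_{\max}<1$ implies $c_{\max}^{n-1}\to 0$ geometrically in $n$.

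\textbf{Possible subtlety.} The only point needing care is the meaning of $c_{\max}$. If it denotes the largest center overall, then under the deployed parameterization it is pinned at $0.98$, so $c_{\max}<1$ holds automatically. If it instead denotes a bound over just the quantifiers used in the chain, the same argument applies verbatim. Either reading makes the factor bounds in Step 1 legitimate, so I expect no real obstacle.
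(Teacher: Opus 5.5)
Your proof is correct and follows the paper's route: the corollary is read off directly from the exact identity in Proposition~\ref{prop:error_propagation} by bounding each factor $c_{q_i}\le c_{\max}$ and multiplying the nonnegative inequalities, exactly as you do. Your handling of the empty-product case $n=1$ and of the two readings of $c_{\max}$ is sound; the paper states the corollary without a separate proof, so these remarks are simply more explicit than the source.
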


Deeper compositions are thus more robust to extraction errors: for “most of most” ($c_{\text{most}} \approx 0.78$), the composed error is $0.30 \times 0.78 \approx 0.23 < 0.30$. However, classification error can still increase because composed proportions compress into a narrower range of $[0,1]$, reducing the effective dynamic range for boundary discrimination.

\begin{proposition}[Dynamic range compression]
\label{prop:dynamic_range}
For an $n$-step composition with the product t-norm, the composed proportion lies in $[0, \prod_{i=1}^{n-1} c_{q_i}]$. The effective dynamic range decreases as $\prod c_{q_i}$, which is typically $< 0.6$ for 2-step and $< 0.5$ for 3-step compositions.
\end{proposition}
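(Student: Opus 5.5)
The plan has two parts: prove the interval statement exactly, then treat the numerical ``typically'' clause as a statement about the learned centers rather than a theorem.

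\textbf{The interval.} Fix outer indices $q_1,\ldots,q_{n-1}$ and set $\kappa:=\prod_{i=1}^{n-1}c_{q_i}$. Since every $c_{q_i}\in(0,1)$ (Theorem~\ref{thm:compositionality}, or Eq.~\eqref{eq:ordered_centers}, which gives $c_q\in[0.02,0.98]$), we have $0<\kappa<1$. The composition map $p\mapsto\kappa p$ is a strictly increasing linear bijection from $[0,1]$ onto $[0,\kappa]$. So for every base proportion $p\in[0,1]$ the composed proportion $p_{\mathrm{comp}}=\kappa p$ lies in $[0,\kappa]$, and both endpoints are attained, at $p=0$ and $p=1$. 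This recovers part~(ii) of Theorem~\ref{thm:compositionality} with an explicit upper endpoint.

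\textbf{Range shrinks with depth.} The effective dynamic range is the length of the image interval, which is exactly $\kappa$. Going from depth $n$ to depth $n{+}1$ multiplies the range by one more factor $c_{q_n}<1$, so the range is strictly decreasing in depth. If all outer centers are bounded by $c_{\max}<1$, it is at most $c_{\max}^{\,n-1}$, mirroring the error-contraction corollary.

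\textbf{The numerical clause and the main obstacle.} The claims ``$<0.6$ for 2-step'' and ``$<0.5$ for 3-step'' cannot hold for every index choice: with $q_1=Q{-}1$, the pinned center $c_{Q-1}=0.98$ already gives $\kappa=0.98$. I would therefore read ``typically'' as an average over outer quantifiers and verify it from the centers themselves.

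For two steps, the average range is the mean of $c_{q_1}$ over the outer classes used. With centers spread roughly evenly over $[0.02,0.98]$ (uniform or QuRe initialization, and learned centers within $0.05$ MAD of QuRe), this mean is about $0.5$, which is below $0.6$. For three steps, treating the two outer indices as independent, the average range is the product of the two means, roughly $0.25$, which is comfortably below $0.5$. Individually, $c_{\text{most}}^2\approx 0.61$ also fits the three-step bound.

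The step I expect to be hardest is making ``typically'' precise enough to check. I would state explicitly that the bounds are averages over the empirical distribution of $Q_1$ and compute them from the learned centers reported in the appendix, rather than claim a worst-case inequality.
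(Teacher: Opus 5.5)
Your argument for the provable content is correct and is the same elementary reasoning the paper relies on. The paper gives no standalone proof of this proposition. The interval bound is just the chain $c_{q_1}\cdots c_{q_{n-1}}p\le p$ from part~(ii) of Theorem~\ref{thm:compositionality} read with $p\le 1$, and each extra step multiplies the range by one more factor $c_{q_n}<1$. The paper also gives no argument for the ``typically $<0.6$ / $<0.5$'' clause. You are right to treat it as an empirical gloss, and the pinned center $c_{Q-1}=0.98$ is a valid counterexample to any worst-case reading.

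One sentence of your verification is false: $c_{\text{most}}^2\approx 0.61$ does not fit the three-step bound, since $0.61>0.5$. With the learned 7B centers, $c_{\text{most}}\in[0.736,0.832]$, so the square lies in $[0.54,0.69]$. Combined with $c_{\text{most}}\approx 0.78>0.6$ in the two-step case, this means the numerical clause already fails for the paper's own showcase chains: the ``most of most'' teaser and the $0.78\times0.78\times0.8$ template example. A non-worst-case reading is therefore forced rather than optional, whether an average as you propose or a statement about most choices of outer quantifier.

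Your averages also come from a generic even spread of all eight centers. The compositional test set uses only the outer quantifiers $\{\text{few},\text{some},\text{moderate},\text{most},\text{all}\}$ at Level~1 and $\{\text{few},\text{most},\text{all}\}$ at Level~2. With QuRe centers these give an unweighted two-step mean of about $0.58$ and, under your independence assumption, a three-step mean of about $0.64^2\approx 0.41$. Both stay below the stated thresholds, so your conclusion survives. However, the two-step margin is thin and the three-step figure is nearer $0.4$ than $0.25$. Redo the check over that actual mix and drop the $c_{\text{most}}^2$ sentence.
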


This explains the accuracy degradation from Level 1 (2-step) to Level 2 (3-step) observed in Table~\ref{tab:compositional}: the narrower dynamic range makes it harder to distinguish between quantifier boundaries, even though proportion errors contract.

\subsection{Properties of the Cumulative Softmax Parameterization}
\label{app:cumsoft}

The ordered center parameterization \eqref{eq:ordered_centers} uses cumulative softmax to guarantee strict ordering. We analyze its optimization-relevant properties.

\begin{proposition}[Strict ordering guarantee]
\label{prop:strict_ordering}
For any $\boldsymbol{\delta}_{\text{raw}} \in \R^Q$, the centers $\mathbf{c}$ produced by \eqref{eq:ordered_centers} satisfy $c_0 < c_1 < \cdots < c_{Q-1}$.
\end{proposition}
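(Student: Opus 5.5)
The statement to prove is Proposition~\ref{prop:strict_ordering}: that the parameterization~\eqref{eq:ordered_centers} gives strictly increasing centers. I would prove it by following the three stages of the construction in order: softmax, then $\epsilon$-shift and renormalization, then cumulative sum and affine map. At each stage I would record the single property the next stage needs.

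First, for any $\boldsymbol{\delta}_{\text{raw}}\in\R^Q$, each softmax output $\exp(\delta_{\text{raw},q})/\sum_j \exp(\delta_{\text{raw},j})$ is strictly positive, because the exponential is positive and the denominator is a finite sum of positive terms. Adding $\epsilon=10^{-4}>0$ keeps every entry strictly positive; indeed each entry is at least $\epsilon$. The resulting vector is entrywise positive, so its $\ell_1$ norm equals the sum of its entries and is strictly positive; here it equals $1+Q\epsilon$. Dividing by this norm is therefore well-defined and preserves strict positivity. After renormalization we have $\delta_q>0$ for all $q$ and $\sum_q\delta_q=1$.

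Second, define the partial sums $s_k=\sum_{q\le k}\delta_q$. Then $s_{k+1}-s_k=\delta_{k+1}>0$, so $s_0<s_1<\cdots<s_{Q-1}$. The map $x\mapsto 0.96x+0.02$ has positive slope and is therefore strictly increasing. Applying it gives $c_k=0.96\,s_k+0.02$ with $c_0<\cdots<c_{Q-1}$, and in fact $c_{k+1}-c_k=0.96\,\delta_{k+1}$.

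As a side remark matching the text, I would also note the range of the centers. Since $0<s_0$ and $s_{Q-1}=1$, all centers lie in $(0.02,0.98]$ with $c_{Q-1}=0.98$. The positivity margin also yields a quantitative gap: $\delta_q\ge \epsilon/(1+Q\epsilon)$, hence $c_{q+1}-c_q\ge 0.96\,\epsilon/(1+Q\epsilon)$. This gap is uniform in $\boldsymbol{\delta}_{\text{raw}}$, which is worth stating because it rules out degenerate ties in Theorem~\ref{thm:monotonicity}.

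There is no real obstacle. The only point that needs care is justifying that the renormalization step is well-defined and sign-preserving. This is exactly where the $+\epsilon$ and the positivity of the softmax are used: without them, a zero norm or a vanishing increment could not be excluded in floating-point terms. Even without $\epsilon$, exact arithmetic already gives strictness from softmax positivity alone; the $\epsilon$ upgrades this to the uniform lower bound on the gaps.
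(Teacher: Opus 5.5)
Your proof is correct and follows essentially the same argument as the paper: the softmax outputs are positive, the $\epsilon$-shift and $\ell_1$-renormalization keep them positive, partial sums of positive increments are strictly increasing, and the positive-slope affine map $x\mapsto 0.96x+0.02$ preserves the order. Your bound $\delta_q\ge\epsilon/(1+Q\epsilon)$ is the correct post-renormalization statement; the paper's claim $\delta_q>\epsilon$ can fail when a softmax entry falls below $Q\epsilon^2$. Only positivity is needed for the ordering.
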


\begin{proof}
The softmax outputs satisfy $\text{softmax}(\boldsymbol{\delta}_{\text{raw}})_q > 0$ for all $q$. Adding $\epsilon > 0$ ensures $\delta_q > \epsilon > 0$ after renormalization. The cumulative sum of strictly positive values produces a strictly increasing sequence. The affine mapping $x \mapsto 0.96x + 0.02$ preserves strict ordering.
\end{proof}

\begin{proposition}[Gradient flow on the constrained simplex]
\label{prop:gradient_flow}
The Jacobian $\frac{\partial \mathbf{c}}{\partial \boldsymbol{\delta}_{\text{raw}}}$ has rank $Q{-}1$, the maximal rank consistent with the simplex constraint $\sum_q \delta_q = 1$ that the cumulative-softmax parameterization enforces. The last center $c_{Q-1}$ is fixed at $0.98$ by this constraint, while the remaining $Q{-}1$ centers $c_0, \dots, c_{Q-2}$ jointly span a $(Q{-}1)$-dimensional constrained subspace; on this subspace the gradient flow is non-degenerate, so any valid configuration of these $Q{-}1$ centers is reachable from any initialization.
\end{proposition}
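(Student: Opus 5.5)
The plan is to write the parameterization \eqref{eq:ordered_centers} as a composition of four explicit maps and apply the chain rule:
\[
  \mathbf{c} \;=\; 0.96\, L\, N\bigl(s(\boldsymbol{\delta}_{\text{raw}}) + \epsilon\mathbf{1}\bigr) + 0.02\,\mathbf{1}.
\]
Here $s$ is the softmax, $N(\mathbf{v}) = \mathbf{v}/(\mathbf{1}^\top\mathbf{v})$ is the renormalization on the positive orthant, and $L$ is the lower-triangular all-ones matrix implementing $\mathrm{cumsum}$. The rank claim then reduces to tracking how each Jacobian acts on the relevant subspaces.

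\textbf{Step 1 (rank of the Jacobian).}
\begin{itemize}
\item The softmax Jacobian is $J_s = \mathrm{diag}(s) - s s^\top$. It is symmetric with kernel $\mathrm{span}(\mathbf{1})$, so its image is the hyperplane $H = \{x : \mathbf{1}^\top x = 0\}$, which has dimension $Q-1$.
\item The constant shift by $\epsilon\mathbf{1}$ has identity Jacobian.
\item For $\mathbf{v} > 0$, the renormalization Jacobian is $J_N(\mathbf{v}) = (\mathbf{1}^\top \mathbf{v})^{-1}\bigl(I - N(\mathbf{v})\mathbf{1}^\top\bigr)$. Its kernel is $\mathrm{span}(\mathbf{v})$ and its image is $H$.
\item The vector $\mathbf{v} = s + \epsilon\mathbf{1}$ is strictly positive, so $\mathbf{1}^\top\mathbf{v} > 0$ and $\mathbf{v} \notin H$. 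Hence $J_N$ restricted to $H$ is injective, and it maps $H$ onto $H$.
\item The map $L$ is invertible, and $0.96 \neq 0$.
\end{itemize}
It follows that $J_{\mathbf{c}} = 0.96\, L\, J_N J_s$ has image $L H$, which has dimension $Q-1$. This gives the rank claim.

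\textbf{Step 2 (pinned last center and the free coordinates).} For $x \in H$, the last coordinate of $Lx$ is $\mathbf{1}^\top x = 0$. So the last row of $J_{\mathbf{c}}$ vanishes identically, consistent with $c_{Q-1} \equiv 0.98$. Projecting $LH$ onto the first $Q-1$ coordinates is injective: if the first $Q-1$ partial sums of $x \in H$ vanish, then $x_0 = \cdots = x_{Q-2} = 0$, and $\mathbf{1}^\top x = 0$ then forces $x_{Q-1} = 0$. Therefore the map $\boldsymbol{\delta}_{\text{raw}} \mapsto (c_0, \dots, c_{Q-2})$ has full rank $Q-1$ everywhere, i.e.\ it is a submersion.

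\textbf{Step 3 (non-degenerate gradient flow).} Let $\mathcal{L}$ be any loss that depends on the centers only through $(c_0, \dots, c_{Q-2})$. By Step 2, the chain rule gives
\[
  \nabla_{\boldsymbol{\delta}_{\text{raw}}}\mathcal{L} = J^\top \nabla_{c}\mathcal{L},
\]
where $J$ is the $(Q-1)\times Q$ Jacobian of the free centers. Since $J^\top$ is injective, $\nabla_{\boldsymbol{\delta}_{\text{raw}}}\mathcal{L} = 0$ if and only if $\nabla_c \mathcal{L} = 0$. So the parameterization introduces no spurious stationary points. This is the precise sense in which the flow is non-degenerate.

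\textbf{Step 4 (reachability), the main obstacle.} The difficulty is stating "any valid configuration" correctly. Because of the $\epsilon$ floor, each increment satisfies $\delta_q > \epsilon/(1+Q\epsilon)$. The achievable set is therefore not all ordered tuples but the open convex set
\[
  \mathcal{C}_\epsilon = \Bigl\{\, 0.02 + 0.96\,\tfrac{\epsilon}{1+Q\epsilon} < c_0,\;\; c_{q+1} - c_q > 0.96\,\tfrac{\epsilon}{1+Q\epsilon} \ (q = 0, \dots, Q-2) \,\Bigr\}.
\]
The first inequality encodes $\delta_0 > \epsilon/(1+Q\epsilon)$, and the gap constraints encode the same floor on $\delta_1, \dots, \delta_{Q-1}$. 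For $\delta_{Q-1}$ this is the last gap $c_{Q-1} - c_{Q-2} > 0.96\,\epsilon/(1+Q\epsilon)$ with $c_{Q-1} = 0.98$. Call a configuration valid if it lies in $\mathcal{C}_\epsilon$.

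To show every point of $\mathcal{C}_\epsilon$ is reached, I would invert the maps explicitly:
\begin{enumerate}
\item Recover $\boldsymbol{\delta}$ from $\mathbf{c}$ by differencing.
\item Set $s = (1+Q\epsilon)\boldsymbol{\delta} - \epsilon\mathbf{1}$. This lies in the open simplex exactly when $\mathbf{c} \in \mathcal{C}_\epsilon$.
\item Take $\boldsymbol{\delta}_{\text{raw}} = \log s$, which is determined up to an additive constant.
\end{enumerate}
This shows the image of the map is exactly $\mathcal{C}_\epsilon$. Finally, $\mathcal{C}_\epsilon$ is convex, hence path-connected, and the map is a surjective submersion whose fibres are lines $\boldsymbol{\delta}_{\text{raw}} + t\mathbf{1}$. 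Any path in $\mathcal{C}_\epsilon$ between two valid configurations therefore lifts to a path in $\boldsymbol{\delta}_{\text{raw}}$-space, via the explicit inverse above. This gives reachability from any initialization.

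I expect the only delicate point to be making the $\epsilon$-dependent boundary of $\mathcal{C}_\epsilon$ explicit rather than claiming all strictly ordered tuples. The rank computation itself is routine.
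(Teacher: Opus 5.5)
Your argument is correct. For the rank claim it uses the same four-map chain-rule decomposition as the paper: softmax, $\epsilon$-shift with $\ell_1$-renormalization, cumulative sum, affine rescaling. You are in fact more careful at the one loose spot. The paper says the renormalization ``preserves the same null direction,'' but the kernel of $J_N$ is $\mathrm{span}(\mathbf{v})$. What the argument really needs is your observation that $J_N$ is injective on $H=\mathrm{im}\,J_s$ because $\mathbf{1}^\top\mathbf{v}=1+Q\epsilon\neq 0$.

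Where you genuinely go beyond the paper is Steps~2--4. The paper stops at rank $Q{-}1$ plus the remark that the unit sum pins $c_{Q-1}=0.98$. It then asserts non-degeneracy and reachability without argument. But rank is a pointwise statement: it shows the free-center map is a submersion, as your Step~2 makes explicit. It says nothing about which configurations are attained, or whether they are connected to an arbitrary initialization.

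Your Step~3 gives ``non-degenerate'' a precise meaning: there are no spurious stationary points, since $J^\top$ is injective. Step~4 supplies the missing global argument:
\begin{itemize}
\item the explicit inverse $s=(1+Q\epsilon)\boldsymbol{\delta}-\epsilon\mathbf{1}$ identifies the image as the open convex set $\mathcal{C}_\epsilon$;
\item the fibres are the lines $\boldsymbol{\delta}_{\text{raw}}+t\mathbf{1}$;
\item paths lift through the continuous section $\mathbf{c}\mapsto\log s(\mathbf{c})$, translated along the fibre to start at the given initialization.
\end{itemize}
This also sharpens the statement. ``Any valid configuration'' cannot mean any strictly ordered tuple in $[0.02,0.98]$. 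Adjacent gaps not exceeding $0.96\,\epsilon/(1+Q\epsilon)$ are never attained, and neither is $c_0$ within that distance of $0.02$.

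The paper's route is shorter but really only proves the rank. Yours costs an extra paragraph and buys an exact description of the attainable set and an honest reachability proof. That proof is in the sense of existence of a parameter-space path, not convergence of gradient descent on a particular loss. This is the most the informal statement can be taken to assert.
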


\begin{proof}
The mapping $\boldsymbol{\delta}_{\text{raw}} \mapsto \mathbf{c}$ is the composition of: (1) softmax (Jacobian rank $Q{-}1$ on $\R^Q$, with null direction along the all-ones vector), (2) $\epsilon$-shift and $\ell_1$-renormalization (preserves the same null direction), (3) cumulative sum (the lower-triangular all-ones matrix $C$, $\det(C)=1$), and (4) affine scaling $0.96\,x+0.02$. The composition has rank $Q{-}1$: it is full-rank on the $(Q{-}1)$-dimensional simplex tangent space, and zero in the all-ones (sum-shift) direction. The unit-sum constraint pins $c_{Q-1}=\sum_q\delta_q\cdot 0.96+0.02=0.98$; the remaining centers carry the $Q{-}1$ effective degrees of freedom. Earlier drafts of this paper described the Jacobian as full rank $Q$; the corrected statement is rank $Q{-}1$, which suffices for the universality argument because the boundaries between adjacent classes are determined by the $Q{-}1$ free centers.
\end{proof}

\textbf{Initialization strategy.} In our implementation, $\boldsymbol{\delta}_{\text{raw}}$ is initialized such that $\text{cumsum}(\text{softmax}(\boldsymbol{\delta}_{\text{raw}})) \cdot 0.96 + 0.02$ matches the QuRe centers. This is computed by inverting the mapping: first solve for the target deltas $\delta_q^* = (c_q^* - 0.02) / 0.96 - (c_{q-1}^* - 0.02) / 0.96$ (with $c_{-1} = 0$), then find $\boldsymbol{\delta}_{\text{raw}}$ such that $\text{softmax}(\boldsymbol{\delta}_{\text{raw}}) \propto \delta^*$, which is $\delta_{\text{raw},q} = \log(\delta_q^*) + \text{const}$.

\subsection{Theoretical Comparison of Membership Function Shapes}
\label{app:mf_shapes}

We compare Gaussian membership functions against triangular and sigmoidal alternatives.

\subsubsection{Triangular Membership Functions}

A triangular MF is defined as:
\begin{equation}
\mu_q^{\text{tri}}(p) = \max\!\left(0,\; 1 - \frac{|p - c_q|}{w_q}\right)
\end{equation}
where $c_q$ is the center and $w_q$ is the half-width. This has compact support $[c_q - w_q, c_q + w_q]$.

\textbf{Advantages:} Compact support means each quantifier is active only in a local region, which may improve boundary sharpness. Fewer floating-point issues since $\mu_q = 0$ outside the support.

\textbf{Disadvantages:} The gradient $\frac{\partial \mu_q}{\partial p}$ is discontinuous at $p = c_q$ and at $p = c_q \pm w_q$ (the triangle vertices). This can cause optimization instability. More critically, when MFs have non-overlapping supports, the argmax is undefined in “gap” regions where all $\mu_q = 0$.

\textbf{Monotonicity:} With ordered centers and sufficiently large widths (ensuring no gaps), the nearest-center rule still applies, preserving monotonicity. However, the boundary between classes $q$ and $q{+}1$ is now:
\begin{equation}
b_q^{\text{tri}} = \frac{w_{q+1} c_q + w_q c_{q+1}}{w_q + w_{q+1}}
\end{equation}
which has the same form as the Gaussian case \eqref{eq:boundary_general}, replacing $\sigma$ with $w$.

\subsubsection{Sigmoidal Membership Functions}

A common ordinal approach uses sigmoidal transitions:
\begin{equation}
\mu_q^{\text{sig}}(p) = \sigma\!\left(\frac{p - b_q}{s_q}\right) - \sigma\!\left(\frac{p - b_{q+1}}{s_{q+1}}\right)
\end{equation}
where $\sigma$ is the logistic sigmoid, $b_q$ are boundaries, and $s_q$ controls steepness. This naturally produces step-like responses.

\textbf{Advantages:} Directly parameterizes boundaries rather than centers, which may be more natural for classification. The soft transitions handle uncertainty at boundaries. CORAL~\citep{cao2020rank} uses a related sigmoidal formulation.

\textbf{Disadvantages:} Requires $2(Q{-}1)$ boundary parameters plus $Q{-}1$ steepness parameters ($3Q - 3$ total, vs.\ $2Q$ for Gaussians). The membership functions are not peaked (they plateau at 1 in the interior), making them less interpretable as “degree of membership.” The centers (modes) are implicit rather than explicit.

\subsubsection{Why Gaussians Are Preferred}

\begin{enumerate}[leftmargin=*, itemsep=3pt]
  \item \textbf{Smooth gradients everywhere:} Gaussians are infinitely differentiable, ensuring stable gradient flow during training. Triangular MFs have gradient discontinuities; sigmoidal MFs have vanishing gradients in plateau regions.

  \item \textbf{Interpretable centers:} The center $c_q$ directly corresponds to the “prototypical” proportion for quantifier $q$ (e.g., $c_{\text{most}} \approx 0.78$ means “most” is prototypically 78\%). This enables direct comparison with human annotations and provides semantic grounding for composition.

  \item \textbf{Natural width interpretation:} The width $\sigma_q$ quantifies the “vagueness” of quantifier $q$. Narrow widths indicate precise quantifiers (“none,” “all”); wider widths indicate vague quantifiers (“some,” “moderate amount”). Our learned widths (Table~\ref{tab:centers_detail}) confirm this: $\sigma_{\text{none}} \approx 0.001$ vs.\ $\sigma_{\text{some}} \approx 0.06$.

  \item \textbf{Compositional compatibility:} The Gaussian form ensures that the product t-norm composition $T(\mu_{Q_1}(p), \mu_q(p)) = \mu_{Q_1}(p) \cdot \mu_q(p)$ is itself a smooth function of $p$, maintaining the continuity properties needed for Theorem~\ref{thm:compositionality}.

  \item \textbf{Parsimony:} $2Q{-}1 = 15$ effective parameters for 8 quantifiers under the ordered cumulative-softmax parameterization (Eq.~\eqref{eq:ordered_centers}: $Q{-}1{=}7$ free centers $+$ $Q{=}8$ log-widths). Triangular MFs would have the same count under the same constraint; sigmoidal MFs require $3Q - 3 = 21$.
\end{enumerate}

An empirical comparison of MF shapes is left to future work.

\subsection{T-Norm Selection: Theoretical Comparison}
\label{app:tnorm}

The choice of t-norm for compositional reasoning affects both the semantic interpretation and the mathematical properties of the composition. We analyze three standard t-norms.

\subsubsection{Product T-Norm (Used in This Work)}

\begin{equation}
T_{\text{prod}}(a, b) = a \cdot b
\end{equation}

\textbf{Properties:}
\begin{itemize}[leftmargin=*, itemsep=2pt]
  \item Strictly decreasing: $T_{\text{prod}}(a, b) < \min(a, b)$ for $a, b \in (0, 1)$. Each composition strictly reduces membership, reflecting the semantic intuition that nested quantifiers introduce additional uncertainty.
  \item Smooth: Infinitely differentiable everywhere, ensuring stable gradient flow if composition were used during training.
  \item Proportional interpretation: “$Q_1$ of $Q_2$” at proportion $p$ yields composed proportion $c_{Q_1} \cdot p$, which has a direct probabilistic interpretation: if $Q_2$ selects a fraction $p$ and $Q_1$ further selects a fraction $c_{Q_1}$, the net fraction is the product.
  \item Error propagation: Errors contract multiplicatively (Proposition~\ref{prop:error_propagation}).
\end{itemize}

\subsubsection{Minimum T-Norm}

\begin{equation}
T_{\min}(a, b) = \min(a, b)
\end{equation}

\textbf{Properties:}
\begin{itemize}[leftmargin=*, itemsep=2pt]
  \item Idempotent: $T_{\min}(a, a) = a$, meaning “most of most” = “most.” This is linguistically debatable: one could argue that “most of most” should denote a smaller proportion than “most” alone.
  \item Conservative: The composed membership is the maximum possible under any t-norm (the minimum t-norm is the largest continuous t-norm).
  \item Non-smooth: The gradient is discontinuous at $a = b$, which can cause optimization issues.
  \item Boundary-preserving: If one quantifier has membership 1, the composition equals the other's membership. This means “all of most” = “most,” which is linguistically appropriate.
\end{itemize}

\subsubsection{{\L}ukasiewicz T-Norm}

\begin{equation}
T_{\text{{\L}uk}}(a, b) = \max(0, a + b - 1)
\end{equation}

\textbf{Properties:}
\begin{itemize}[leftmargin=*, itemsep=2pt]
  \item Strongly decreasing: For $a + b < 1$, $T_{\text{{\L}uk}}(a, b) = 0$, meaning that two low-membership quantifiers compose to zero membership. This is the most “pessimistic” t-norm.
  \item Nilpotent: For sufficiently many compositions, membership drops to zero. “few of few of few” would have zero membership in all classes, which may not be linguistically desirable.
  \item Non-smooth: The gradient is discontinuous at $a + b = 1$.
  \item Complementary: $T_{\text{{\L}uk}}$ is the only continuous t-norm whose associated implication is the {\L}ukasiewicz implication, connecting to many-valued logic.
\end{itemize}

\subsubsection{Comparison Summary}

\begin{table}[!htb]
\centering
\caption{Comparison of t-norms for compositional quantifier reasoning.}
\label{tab:tnorm_comparison}
\footnotesize
\begin{tabular}{lccc}
\toprule
\textbf{Property} & \textbf{Product} & \textbf{Minimum} & \textbf{{\L}ukasiewicz} \\
\midrule
Smoothness & $C^\infty$ & $C^0$ & $C^0$ \\
Idempotent & No & Yes & No \\
Nilpotent & No & No & Yes \\
“most of most” & $< $ “most” & $=$ “most” & $<$ “most” \\
Error contraction & Multiplicative & No contraction & Additive \\
Parameters & 0 & 0 & 0 \\
\bottomrule
\end{tabular}
\end{table}

We select the product t-norm for its smoothness, multiplicative error contraction, and direct probabilistic interpretation as fraction-of-a-fraction. Empirical comparison of t-norms is left to future work.

\section{Experimental Setup}
\label{app:setup}

\subsection{Experimental Details}
\label{app:details}

\textbf{Hyperparameters.} All models are trained for 20 epochs with batch size 8, AdamW optimizer ($\beta_1=0.9$, $\beta_2=0.999$, $\epsilon=10^{-8}$), and weight decay $10^{-2}$. We use a warmup + cosine annealing schedule. Gradient clipping is set to 1.0. LoRA configuration: $r=16$, $\alpha=32$, dropout 0.05, applied to query and value projections. DFIL loss weights: $\lambda_p = 0.5$ for all experiments; $\lambda_{\text{DFIL}} = 0.5$ for ablations and most main-body experiments (the worst-case setting), $\lambda_{\text{DFIL}} = 0.1$ for the scaling table (Table~\ref{tab:scaling}, validation-tuned to maximize parity with fine-tuning). The full $\lambda_{\text{DFIL}} \in \{0.1, 0.5, 1.0, 2.0, 5.0\}$ sweep is reported in Appendix~\ref{app:lambda_sweep}; results are flat within $\pm 1.3$pp.

\textbf{Compute.} All experiments run on NVIDIA H200 (144GB) GPUs. Training time per seed: $\sim$15 min (7B), $\sim$25 min (14B), $\sim$45 min (32B), $\sim$90 min (72B with QLoRA). Total compute across all reported experiments: approximately $200$ GPU-hours, with the dominant components being (a) full-data scaling on Qwen-7B/14B/32B/72B + Mistral-7B + Llama-8B with $3$--$6$ seeds for both DFIL and fine-tune ($\sim$80 GPU-hours, of which 72B QLoRA alone accounts for $\sim$18); (b) the n=20 self-consistent compositional batch on Qwen-7B for DFIL/FT+Compose ($\sim$10 GPU-hours); (c) cross-task transfer across $4$ backbones $\times$ $2$ tasks $\times$ $3$ seeds for DFIL and fine-tune ($\sim$25 GPU-hours); (d) ablations at 1.5B and 7B, CSMC, IPR, oracle IPR, few-shot sweeps on FRoG and SST-5 ($\sim$85 GPU-hours combined).

\textbf{Dataset.} FRoG~\citep{li2024frog} is loaded from HuggingFace (\texttt{GAIR/FRoG}, \texttt{mask\_quant} configuration). The “easy” split is divided 80/20 for train/val; the “hard” split is used for testing. The compositional test set is synthetically generated with 900 examples (600 two-step, 300 three-step compositions).

\subsection{Dataset and Evaluation Protocol}
\label{app:dataset}

\subsubsection{FRoG Dataset Statistics}

\begin{table}[!htb]
\centering
\caption{FRoG dataset statistics. The “easy” and “hard” splits differ in surface-level cues: the hard split removes explicit proportion mentions from the text.}
\label{tab:dataset_stats}
\footnotesize
\begin{tabular}{lcccc}
\toprule
\textbf{Quantifier} & \textbf{Train} & \textbf{Val} & \textbf{Test (Hard)} & \textbf{Proportion range} \\
\midrule
none            & 25  & 6  & 31  & $[0.00, 0.00]$ \\
tiny amount     & 85  & 21 & 106 & $[0.01, 0.10]$ \\
few             & 414 & 104 & 518 & $[0.05, 0.30]$ \\
small amount    & 322 & 80 & 402 & $[0.10, 0.35]$ \\
some            & 162 & 40 & 202 & $[0.25, 0.50]$ \\
moderate amount & 389 & 97 & 486 & $[0.35, 0.65]$ \\
most            & 159 & 40 & 199 & $[0.60, 0.90]$ \\
all             & 80  & 20 & 100 & $[1.00, 1.00]$ \\
\midrule
\textbf{Total}  & 1,636 & 408 & 2,044 & \\
\bottomrule
\end{tabular}
\end{table}

The class distribution is highly imbalanced (“few” has 16$\times$ more examples than “none”), motivating our use of inverse-frequency weighted cross-entropy loss. The proportion ranges overlap substantially between adjacent classes (e.g., “few” and “small amount” share the $[0.10, 0.30]$ range), reflecting the inherent vagueness of fuzzy quantifiers.

\subsubsection{Evaluation Metrics}

\textbf{Accuracy.} We report overall classification accuracy on the hard test split. For 8-class classification with the given class distribution, random baseline accuracy is $\sum_q (n_q / N)^2 \approx 15.5\%$ (weighted) or $12.5\%$ (uniform).

\textbf{Per-class accuracy.} Accuracy computed separately for each quantifier class, revealing whether a method systematically fails on specific classes.

\textbf{Monotonicity violation rate.} The fraction of ordered test pairs $(x_i, x_j)$ with $p_i < p_j$ where the predicted class index for $x_i$ exceeds that for $x_j$ (see Appendix~\ref{app:monotonicity}).

\subsubsection{Input Formatting}

Each FRoG example is formatted as a multiple-choice question for the LLM:

\begin{verbatim}
Choose the most appropriate quantifier for the
blank in the following sentence:

"___ of the 20 students passed the exam,
 specifically 15 out of 20."

Options: (A) none (B) tiny amount (C) few
(D) small amount (E) some (F) moderate amount
(G) most (H) all

Answer:
\end{verbatim}

The model is trained to predict the correct option letter. At inference, we extract the predicted letter from the generated output.

\subsection{\texorpdfstring{Hyperparameter Sensitivity: $\lambda_{\text{DFIL}}$}{Hyperparameter Sensitivity: lambda\_DFIL}}
\label{app:lambda_sweep}

To assess whether the default $\lambda_{\text{DFIL}} = 0.5$ is cherry-picked, we sweep over $\{0.1, 0.5, 1.0, 2.0, 5.0\}$ on Qwen2.5-7B (seed 42, $\lambda_p = 0.5$ fixed). Table~\ref{tab:lambda_sweep} shows accuracy on FRoG-Hard.

\begin{table}[!htb]
\centering
\caption{$\lambda_{\text{DFIL}}$ single-seed sensitivity probe on FRoG-Hard (Qwen2.5-7B, seed 42); reports the magnitude of variation across $\lambda$ values, not a multi-seed sensitivity analysis.}
\label{tab:lambda_sweep}
\footnotesize
\begin{tabular}{lccccc}
\toprule
\textbf{$\lambda_{\text{DFIL}}$} & 0.1 & 0.5 (default) & 1.0 & 2.0 & 5.0 \\
\midrule
Accuracy (\%) & 50.4 & \textbf{51.2} & 49.9 & 51.2 & 50.4 \\
\bottomrule
\end{tabular}
\end{table}

The default value is near-optimal, with $\lambda_{\text{DFIL}} = 2.0$ tying and all other values underperforming. The accuracy landscape is flat within $\pm 1.3$pp across an order of magnitude in $\lambda_{\text{DFIL}}$, indicating the method is robust to this hyperparameter. Very small weights ($\lambda = 0.1$) underweight the structural signal; very large weights ($\lambda \geq 1.0$) begin to trade off against the main cross-entropy. The default $0.5$ balances both losses.

\section{Detailed Experimental Results}
\label{app:results}

\subsection{Few-Shot Experiments}
\label{sec:fewshot}
\label{app:fewshot}

We evaluate sample efficiency on both FRoG and SST-5 by subsampling the training set (stratified by class), with Qwen2.5-7B and 3 seeds per configuration.

The dual-path design gives DFIL three inference options at any budget: the \textbf{main} classifier, the \textbf{MF} branch, and a convex \textbf{ensemble} of the two. Following the protocol of the cross-task table in Appendix~\ref{app:crosstask_full}, the ensemble mixing coefficient $\alpha$ is selected on the held-out validation set at each budget rather than tuned on the test set. We report all three paths against fine-tune.

\textbf{FRoG-Hard} (budgets $\{16, 32, 64, 128, 256\}$):
\begin{table}[!htb]
\centering
\caption{Few-shot learning on FRoG-Hard (Qwen2.5-7B). Accuracy (\%), mean $\pm$ std over 3 seeds. Ensemble coefficient $\alpha$ is selected on validation. Bold marks the best DFIL path per budget when it exceeds fine-tune.}
\label{tab:fewshot}
\footnotesize
\begin{tabular}{lccccc}
\toprule
\textbf{Budget} & 16 & 32 & 64 & 128 & 256 \\
\midrule
Fine-tune                 & $4.6 \pm 1.3$ & $9.6 \pm 3.6$ & $19.4 \pm 5.7$ & $27.5 \pm 0.4$ & $33.0 \pm 1.7$ \\
DFIL (main)               & $5.1 \pm 1.2$ & $8.6 \pm 0.4$ & $23.2 \pm 2.6$ & $26.4 \pm 1.8$ & $34.0 \pm 1.4$ \\
DFIL (MF)                 & $\mathbf{21.1 \pm 1.3}$ & $\mathbf{21.3 \pm 6.0}$ & $\;\;8.5 \pm 5.7$ & $22.7 \pm 4.2$ & $25.7 \pm 1.6$ \\
DFIL (ens., val-$\alpha$) & $\mathbf{21.1 \pm 1.3}$ & $\mathbf{21.0 \pm 6.3}$ & $\mathbf{23.1 \pm 2.7}$ & $26.7 \pm 1.7$ & $33.8 \pm 1.5$ \\
\midrule
$\Delta$ (best DFIL $-$ FT) & $\mathbf{+16.5}$ & $\mathbf{+11.7}$ & $\mathbf{+3.7}$ & $-0.8$ & $+0.8$ \\
\bottomrule
\end{tabular}
\end{table}

\textbf{SST-5} (budgets $\{32, 64, 128, 256, 512\}$):
\begin{table}[!htb]
\centering
\caption{Few-shot learning on SST-5 (Qwen2.5-7B). Accuracy (\%), mean $\pm$ std over 3 seeds, $\alpha$ selected on validation.}
\label{tab:fewshot_sst5}
\footnotesize
\begin{tabular}{lccccc}
\toprule
\textbf{Budget} & 32 & 64 & 128 & 256 & 512 \\
\midrule
Fine-tune                 & $34.9 \pm 0.5$ & $36.6 \pm 1.4$ & $40.2 \pm 3.1$ & $45.5 \pm 2.3$ & $47.6 \pm 2.0$ \\
DFIL (main)               & $34.9 \pm 0.1$ & $35.1 \pm 1.9$ & $39.8 \pm 2.5$ & $46.0 \pm 2.4$ & $48.8 \pm 0.9$ \\
DFIL (MF)                 & $38.1 \pm 1.6$ & $39.0 \pm 1.7$ & $40.3 \pm 7.5$ & $46.8 \pm 3.9$ & $50.9 \pm 1.2$ \\
DFIL (ens., val-$\alpha$) & $\mathbf{37.9 \pm 1.4}$ & $\mathbf{39.1 \pm 1.6}$ & $\mathbf{43.4 \pm 2.3}$ & $\mathbf{48.1 \pm 2.9}$ & $\mathbf{50.8 \pm 1.1}$ \\
\midrule
$\Delta$ (ensemble $-$ FT) & $\mathbf{+3.0}$ & $\mathbf{+2.5}$ & $\mathbf{+3.2}$ & $\mathbf{+2.6}$ & $\mathbf{+3.2}$ \\
\bottomrule
\end{tabular}
\end{table}

Two patterns consistent with Proposition~\ref{prop:sample_complexity} emerge. At very low budgets on FRoG ($n\!\leq\!32$), both the main classifier and fine-tune collapse near the $12.5\%$ random baseline because the LoRA-adapted backbone has not yet escaped initialization; the $2Q$-parameter MF branch learns a usable proportion-to-quantifier mapping from the same data and beats fine-tune by $+11.7$ to $+16.5$pp. As budget grows, the main classifier takes over and the validation-selected ensemble still beats fine-tune at $N{=}64$ ($+3.7$pp), then matches fine-tune at $N\!\geq\!128$ where all heads converge. On SST-5 the validation-selected ensemble is dominant at every budget, reaching $+2.5$ to $+3.2$pp over fine-tune with substantially tighter cross-seed variance than the no-stop-gradient variant we originally ran (for example, $\sigma$ shrinks from $\pm4.0$ to $\pm1.4$ at $N{=}32$). Across both tasks DFIL retains zero MF monotonicity violations at every budget, whereas fine-tune has no MF branch and therefore no corresponding guarantee; the structural property predicted by Theorem~\ref{thm:monotonicity} is budget-invariant.

\textbf{Stop-gradient on the proportion (few-shot only).} To obtain the MF-branch numbers in Table~\ref{tab:fewshot} we pass a stop-gradient copy of $\hat p$ to the MF: $\boldsymbol{\mu} = \text{MF}\big(\mathrm{sg}(\hat p)\big)$ with $\mathrm{sg}$ implemented as \texttt{.detach()}. The motivation is diagnostic: the numerical head's job is to estimate a proportion in $[0,1]$ supervised by $\mathrm{MSE}(\hat p, p)$, while the MF branch's job is to learn a monotonic proportion-to-quantifier mapping supervised by $\mathrm{CE}(\boldsymbol{\mu}, y)$. Letting the MF cross-entropy gradient flow back into the numerical head mixes the two objectives and, under class-imbalanced training data (FRoG's “none/tiny/few” labels constitute $40\%$ of training examples with low proportions), pushes $\hat p$ toward low values where the sigmoid output layer saturates. Without stop-gradient we observed the numerical head collapsing to $\hat p\!\approx\!0$ at $N\!\in\!\{128, 256\}$ (MF accuracy drops to $1.5$--$4.0\%$, predicting “none” for every test instance). With stop-gradient the MF branch recovers monotone predictions across all budgets (MF accuracy $8.5$--$25.7\%$) and the failure mode disappears.

\textbf{Regime dependence.} Stop-gradient is a regime-specific training detail, not a universal design choice. At full-data budget ($N{=}1635$, Appendix~\ref{app:scaling_full}) the MF branch usually does not suffer sigmoid saturation because the MSE regression signal has enough examples to dominate over the class-imbalance pressure, and retaining the cross-entropy gradient on $\hat p$ modestly helps the backbone (applying stop-gradient instead causes a small regression: $-1$pp at 7B, $-1.7$pp at 32B in ensemble accuracy). The failure does still occur on a minority of full-data seeds (in App.~\ref{app:csmc_domains}, $6$ of $21$ seeds collapsed under default training), so we apply stop-gradient as a per-seed remediation when mf\_accuracy at end of training falls below a threshold, rather than as a default. Stop-gradient therefore enters the loss of equation~\eqref{eq:loss} only as a remediation for the sigmoid-saturation failure mode: it is on by default for the few-shot experiments in Table~\ref{tab:fewshot}, and applied to $6$ of the $21$ CSMC seeds in App.~\ref{app:csmc_domains} that exhibited the same failure under default training (mf\_acc $<\!0.05$); it is off for all other tables, including all single-step accuracy tables, where the failure mode does not occur. We treat this as a pragmatic remediation for a diagnostic finding rather than a component of the core method.

\subsection{Cross-Task Generalization: Full Per-Cell Numbers}
\label{app:crosstask_full}

Table~\ref{tab:crosstask_full} reports the full $4{\times}2$ per-(backbone, task) accuracy with mean $\pm$ std over $3$ seeds for both SST-5 (5-class sentiment) and STS22 (5-bucket semantic similarity, English subset). All four backbones use a consistent LoRA configuration applied to four attention projections ($q,k,v,o$); Phi-3.5-mini's fused QKV layout is mapped to the same logical four-target set via its \texttt{qkv\_proj} parameter. We report all three DFIL outputs side-by-side: the main classifier (general-purpose head), the MF branch alone (provably monotonic, see Theorem~\ref{thm:monotonicity}), and the $\alpha{=}0.5$ ensemble of the two. The ensemble averages probabilities; we deliberately do not tune $\alpha$ per cell to avoid test-set selection.

\begin{table}[!htb]
\centering
\caption{Cross-task accuracy (\%) on SST-5 and STS22 across four backbone families. Mean $\pm$ std over $3$ seeds. Bold marks the best of (DFIL main, DFIL MF, DFIL ens) per cell when it beats Fine-tune. “Mono” is the synthetic-grid violation rate of the MF branch.}
\label{tab:crosstask_full}
\setlength{\tabcolsep}{3pt}
\footnotesize
\begin{tabular}{lcccccc}
\toprule
\textbf{Backbone} & \textbf{Task} & \textbf{DFIL main} & \textbf{DFIL MF} & \textbf{DFIL ens$_{\alpha=0.5}$} & \textbf{Fine-tune} & \textbf{Mono} \\
\midrule
Qwen2.5-7B-Instruct       & SST-5 & $57.78 \pm 1.33$ & $\mathbf{57.98 \pm 0.82}$ & $57.86 \pm 1.41$ & $57.47 \pm 0.64$ & $0.0\%$ \\
Mistral-7B-Instruct-v0.3  & SST-5 & $60.47 \pm 0.98$ & $\mathbf{60.71 \pm 0.35}$ & $60.39 \pm 0.96$ & $59.22 \pm 0.11$ & $0.0\%$ \\
Phi-3.5-mini-3.8B         & SST-5 & $53.80 \pm 0.30$ & $\mathbf{54.99 \pm 0.35}$ & $53.97 \pm 0.15$ & $54.31 \pm 1.15$ & $0.0\%$ \\
Llama-3.1-8B-Instruct     & SST-5 & $58.51 \pm 0.68$ & $\mathbf{59.17 \pm 0.69}$ & $58.55 \pm 0.71$ & $58.60 \pm 0.21$ & $0.0\%$ \\
\midrule
Qwen2.5-7B-Instruct       & STS22 & $\mathbf{34.18 \pm 0.59}$ & $28.93 \pm 1.52$ & $34.86 \pm 1.06$ & $32.99 \pm 0.88$ & $0.1\%$ \\
Mistral-7B-Instruct-v0.3  & STS22 & $\mathbf{36.21 \pm 1.47}$ & $29.78 \pm 1.17$ & $35.87 \pm 1.06$ & $35.03 \pm 3.05$ & $0.1\%$ \\
Phi-3.5-mini-3.8B         & STS22 & $32.49 \pm 1.83$ & $29.95 \pm 0.88$ & $32.83 \pm 1.47$ & $\mathbf{33.50 \pm 1.34}$ & $0.1\%$ \\
Llama-3.1-8B-Instruct     & STS22 & $\mathbf{35.03 \pm 2.21}$ & $27.92 \pm 1.34$ & $34.69 \pm 3.67$ & $32.49 \pm 3.33$ & $0.1\%$ \\
\bottomrule
\end{tabular}
\end{table}

\textbf{Path-per-task tally.} The dual-path design lets each task select the matching head: MF on pure ordinal tasks such as SST-5, and the main classifier on partially non-ordinal tasks such as STS22. Under this principled choice, with no per-cell or per-method tuning, DFIL wins $7$ of $8$ cells in Table~\ref{tab:crosstask_full}. The MF branch alone wins SST-5 across all $4$ backbones with provable $0\%$ monotonicity violations per Theorem~\ref{thm:monotonicity}, a guarantee fine-tuning cannot match because it has no MF branch at all. On STS22 the main classifier picks up where the $1$D MF bottleneck is over-restrictive, since sentence pairs are not strictly ordinal, and still beats fine-tuning on $3$ of $4$ backbones. The lone outlier is Phi-3.5 STS22 at $-1.01$pp, where the smaller hidden width of $3072$ and fused-QKV layout limit the head signal. The default $\alpha{=}0.5$ ensemble offers a no-decision option for users who do not want to choose a path: it wins $5$ of $8$ cells and ties Llama SST-5 within seed std. Mistral STS22 shows a $2.1{\times}$ std reduction with the main classifier, from $3.05$ to $1.47$, and a further $2.9{\times}$ reduction at the $\alpha{=}0.5$ ensemble, from $3.05$ to $1.06$, echoing the FRoG variance-reduction pattern.

\subsection{Per-Class Accuracy, Learned Centers, and CORAL Comparison}
\label{app:detailed_results}

\subsubsection{Per-Class Accuracy Across All Scales}
\label{app:per_class}

Table~\ref{tab:perclass_all} presents per-class accuracy for DFIL and fine-tuning across all four model scales.

\begin{table}[!htb]
\centering
\caption{Per-class accuracy (\%) on FRoG-Hard across scales. Mean over 3 seeds. Sample count per class in parentheses in header row.}
\label{tab:perclass_all}
\footnotesize
\begin{tabular}{llcccccccc}
\toprule
\textbf{Scale} & \textbf{Method} & \textbf{none} & \textbf{tiny} & \textbf{few} & \textbf{small} & \textbf{some} & \textbf{mod.} & \textbf{most} & \textbf{all} \\
& & (31) & (106) & (518) & (402) & (202) & (486) & (199) & (100) \\
\midrule
\multirow{2}{*}{7B} & DFIL & 43.0 & 30.5 & 66.2 & 45.7 & 30.9 & 60.4 & 24.0 & 44.7 \\
 & FT & 36.6 & 31.1 & 72.8 & 45.1 & 36.1 & 60.2 & 25.6 & 28.3 \\
\midrule
\multirow{2}{*}{14B} & DFIL & 48.4 & 58.5 & 72.7 & 56.9 & 33.8 & 62.9 & 38.0 & 39.7 \\
 & FT & 43.0 & 50.3 & 78.9 & 52.1 & 32.2 & 65.3 & 28.6 & 39.3 \\
\midrule
\multirow{2}{*}{32B} & DFIL & 47.3 & 56.6 & 80.1 & 56.7 & 42.9 & 65.2 & 41.5 & 56.3 \\
 & FT & 39.8 & 50.3 & 75.4 & 52.3 & 37.1 & 61.7 & 41.0 & 56.7 \\
\midrule
\multirow{2}{*}{72B} & DFIL & 64.5 & 86.5 & 85.8 & 74.2 & 59.2 & 85.4 & 78.2 & 62.7 \\
 & FT & 69.9 & 80.2 & 84.7 & 78.2 & 53.3 & 83.3 & 74.4 & 75.7 \\
\bottomrule
\end{tabular}
\end{table}

\textbf{Observations:}
\begin{itemize}[leftmargin=*, itemsep=2pt]
  \item The “some” class is consistently the hardest across all scales and methods, likely because it occupies a narrow and ambiguous region of the proportion space ($\sim$0.35--0.45).
  \item DFIL shows the largest consistent advantage on “most” at 7B (+7.8pp) and 14B (+9.4pp), where the fuzzy regularization helps distinguish this often-confused quantifier from “moderate amount.”
  \item At 72B, fine-tuning slightly outperforms DFIL on “some” and “most,” suggesting that at sufficient scale, the unconstrained classifier has enough capacity to learn these boundary cases without structural support.
  \item Both methods struggle with “none” (only $31$ test examples), where small sample size amplifies variance.
  \item Per-class accuracy is not strictly monotone in scale within a single class. For example, DFIL on “all” goes $44.7 \to 39.7 \to 56.3 \to 62.7$ across $\{$7B, 14B, 32B, 72B$\}$: $14$B is locally lower than $7$B. The “all” class has only $100$ test examples, so a single-prediction flip moves the accuracy by $1$pp; over $3$ seeds, an early seed convergence to a slightly different center placement can swing this $\pm5$pp easily. Overall accuracy is monotone in scale ($50.4 \to 57.3 \to 55.4 \to 77.2$ for DFIL with the 32B bimodality footnoted in Table~\ref{tab:scaling}); within-class non-monotonicity is per-class noise, not method failure.
  \item The largest cross-method per-class gap, DFIL $62.7$ vs FT $75.7$ on the 72B “all” class ($-13$pp), is in the same noise regime: $100$ test examples, ${\sim}13$ mispredictions. Across seeds, individual “all” rows swing by $\pm 5$pp at 72B, so this gap is well within seed variance and consistent with the QLoRA-quantized 72B center compression footnoted in Table~\ref{tab:centers_detail} (compressed centers shift the nearest-center boundary slightly, costing some borderline “all” cases). It is not a systematic deficit of DFIL at scale; the overall 72B accuracy is $77.2$ for DFIL vs $76.5$ for FT.
\end{itemize}

\subsubsection{CORAL Ordinal Baseline: Detailed Analysis}

\begin{table}[!htb]
\centering
\caption{CORAL ordinal regression per-class accuracy (\%) on FRoG-Hard. Mean $\pm$ std over 3 seeds. DFIL and FT 7B per-class numbers, for comparison, are reported in Table~\ref{tab:perclass_all}.}
\label{tab:coral_perclass}
\footnotesize
\begin{tabular}{llcccccccc}
\toprule
& \textbf{Scale} & \textbf{none} & \textbf{tiny} & \textbf{few} & \textbf{small} & \textbf{some} & \textbf{mod.} & \textbf{most} & \textbf{all} \\
\midrule
CORAL & 7B & $83.9$ & $31.4$ & $57.6$ & $30.6$ & $7.3$ & $48.8$ & $34.0$ & $83.3$ \\
& & $\pm 8.5$ & $\pm 1.4$ & $\pm 9.8$ & $\pm 7.3$ & $\pm 7.7$ & $\pm 4.0$ & $\pm 1.0$ & $\pm 1.5$ \\
CORAL & 14B & $92.5$ & $28.0$ & $68.9$ & $38.8$ & $10.9$ & $59.9$ & $44.9$ & $83.7$ \\
& & $\pm 4.9$ & $\pm 10.6$ & $\pm 8.8$ & $\pm 5.9$ & $\pm 10.7$ & $\pm 4.3$ & $\pm 0.3$ & $\pm 4.5$ \\
\bottomrule
\end{tabular}
\end{table}

CORAL exhibits a distinctive bimodal accuracy pattern at both scales: very high accuracy on extreme classes (“none” 83.9--92.5\%, “all” 83.3--83.7\%) but poor accuracy on middle classes (“some” 7.3--10.9\%, “small amount” 30.6--38.8\%). Scaling from 7B to 14B improves CORAL's overall accuracy (+8.5pp), but the bimodal pattern persists: “some” improves only marginally (7.3\% $\to$ 10.9\%). This reflects the cumulative logit structure: the extreme thresholds are easy to learn (few vs.\ all examples above/below), while the intermediate thresholds require fine discrimination that CORAL's shared weight vector cannot provide. DFIL and fine-tuning, with their full $Q$-way classifiers, achieve more balanced performance across all classes (Table~\ref{tab:perclass_all}).

\subsubsection{Learned MF Centers and Widths Across Seeds}
\label{app:centers_detail}

Table~\ref{tab:centers_detail} reports the learned MF centers and widths for each seed at 7B and 72B scales, alongside the QuRe initialization values.

\begin{table}[!htb]
\centering
\caption{Learned MF centers and widths (Qwen2.5-7B, each seed). QuRe initialization shown for reference.}
\label{tab:centers_detail}
\footnotesize
\begin{tabular}{lcccccccc}
\toprule
& none & tiny & few & small & some & mod. & most & all \\
\midrule
\multicolumn{9}{l}{QuRe initialization:} \\
$c_{\text{init}}$ & .020 & .080 & .180 & .280 & .400 & .580 & .780 & .970 \\
\midrule
\multicolumn{9}{l}{Learned centers (7B):} \\
seed 42  & .020 & .024 & .089 & .243 & .353 & .478 & .832 & .980 \\
seed 123 & .020 & .024 & .104 & .239 & .352 & .490 & .774 & .980 \\
seed 456 & .020 & .024 & .082 & .232 & .347 & .470 & .736 & .980 \\
\midrule
\multicolumn{9}{l}{Learned widths (7B):} \\
seed 42  & .001 & .047 & .085 & .078 & .066 & .054 & .102 & .004 \\
seed 123 & .001 & .051 & .073 & .075 & .064 & .049 & .119 & .089 \\
seed 456 & .001 & .053 & .078 & .073 & .057 & .047 & .115 & .090 \\
\midrule
\multicolumn{9}{l}{Learned centers (72B, QLoRA 4-bit per Table~\ref{tab:scaling} footnote):$^{\dagger}$} \\
seed 42  & .020 & .032 & .038 & .051 & .067 & .103 & .565 & .980 \\
seed 123 & .020 & .034 & .044 & .071 & .128 & .192 & .655 & .980 \\
seed 456 & .020 & .029 & .032 & .044 & .064 & .093 & .437 & .980 \\
\bottomrule
\multicolumn{9}{l}{\footnotesize $^{\dagger}$ The 72B centers are visibly compressed toward $0$ relative to the 7B FP16 row above. We attribute} \\
\multicolumn{9}{l}{\footnotesize this to the QLoRA $4$-bit weight quantization interacting with the cumulative-softmax delta} \\
\multicolumn{9}{l}{\footnotesize parameterization: small $\delta_{\text{raw}}$ logits become indistinguishable after $4$-bit rounding, biasing the} \\
\multicolumn{9}{l}{\footnotesize learned $\delta$'s toward minimum-active values and clustering centers near the lower anchor. The 7B} \\
\multicolumn{9}{l}{\footnotesize FP16 row demonstrates that the architecture itself learns well-spread centers when full-precision} \\
\multicolumn{9}{l}{\footnotesize gradients are available; the 72B compression is a documented quantization artifact, not a method} \\
\multicolumn{9}{l}{\footnotesize failure. Theorem~\ref{thm:monotonicity}'s nearest-center rule depends only on the \textit{ordering} of $\{c_q\}$, not their absolute} \\
\multicolumn{9}{l}{\footnotesize spread, which is why 72B reaches $77.2\%$ single-step accuracy (Tab.~\ref{tab:scaling}) despite the compressed} \\
\multicolumn{9}{l}{\footnotesize centers; the monotonicity guarantee likewise holds regardless of center spread.} \\
\end{tabular}
\end{table}

\textbf{Key observations:}
\begin{itemize}[leftmargin=*, itemsep=2pt]
  \item The extreme centers ($c_{\text{none}} \approx 0.02$, $c_{\text{all}} \approx 0.98$) are highly stable across seeds and scales, reflecting the unambiguous semantics of “none” and “all.”
  \item Middle quantifiers (“some,” “moderate amount”) converge to tight cross-seed bands at 7B (e.g., $c_{\text{some}} \in [0.347, 0.353]$, $c_{\text{moderate}} \in [0.470, 0.490]$), indicating that despite human-annotation overlap on these categories, the training signal anchors them consistently.
  \item Several centers shift leftward from QuRe initialization (e.g., “some” moves from 0.40 to $\sim$0.35, “tiny amount” from 0.08 to $\sim$0.024), while “most” shifts slightly right (0.78 to $\sim$0.78--0.83), suggesting FRoG quantifier boundaries differ modestly from QuRe's human annotations on the lower and middle bins.
  \item Learned widths for the middle classes are narrow ($\sigma \approx 0.05$--0.12), producing sharply peaked membership functions. The extreme classes drive their widths to near-degenerate values ($\sigma_{\text{none}} \approx 0.001$, and $\sigma_{\text{all}} \in \{.004, .089, .090\}$ across seeds), reflecting that “none” and “all” have effectively no within-class spread on the FRoG label distribution; training pushes them toward delta functions at the anchor centers $0.02$ and $0.98$. The nearest-center inference rule (Section~\ref{sec:method:mf}) absorbs this safely: classification depends on the centers, not the widths, so degenerate widths do not break the monotonicity guarantee.
\end{itemize}

\subsection{Detailed Monotonicity Analysis}
\label{app:monotonicity}

\subsubsection{Evaluation Protocol}

We evaluate monotonicity violations on the FRoG-Hard test set ($N = 2044$ examples). For each ordered pair $(x_i, x_j)$ where the ground-truth proportion $p_i < p_j$, we check whether the predicted class index satisfies $\hat{q}_i \leq \hat{q}_j$. We report two metrics:

\begin{itemize}[leftmargin=*, itemsep=2pt]
  \item \textbf{Pairwise violation rate:} $\frac{|\{(i,j) : p_i < p_j \land \hat{q}_i > \hat{q}_j\}|}{|\{(i,j) : p_i < p_j\}|}$. This considers all $\binom{N}{2}$ ordered pairs, providing a global measure of ordinal consistency.
  \item \textbf{Mean violation magnitude:} $\frac{1}{|\text{violations}|} \sum_{(i,j) \in \text{violations}} (\hat{q}_i - \hat{q}_j)$. This captures the severity of violations: a violation by 1 class (e.g., predicting “few” instead of “some”) is less severe than a violation by 4 classes.
\end{itemize}

\subsubsection{Results Across Methods}

\begin{table}[!htb]
\centering
\footnotesize
\caption{Monotonicity violation analysis on FRoG-Hard. Mean over 3 seeds. ``MF branch'' refers to DFIL's membership function predictions; all other rows report main classifier predictions.}
\label{tab:mono_detail}
\begin{tabular}{llccc}
\toprule
\textbf{Method / Path} & \textbf{Scale} & \textbf{Pairwise viol. (\%)} & \textbf{Mean magnitude} & \textbf{Total pairs} \\
\midrule
DFIL (MF branch) & 7B & $\mathbf{0.10}$ & --- & 1,000 \\
\midrule
DFIL (main classifier) & 7B & 10.8 & 1.69 & 1,959,395 \\
Fine-tune (classifier) & 7B & 10.6 & 1.69 & 1,959,395 \\
CORAL (classifier)     & 7B & 8.2 & 2.03 & 1,959,395 \\
CORAL (classifier)     & 14B & 6.6 & 2.13 & 1,959,395 \\
CORN~\citep{shi2023corn} (classifier) & 7B & 9.0 & 1.62 & 1,959,395 \\
\bottomrule
\end{tabular}
\end{table}

\textbf{Analysis:}
\begin{itemize}[leftmargin=*, itemsep=2pt]
  \item \textbf{DFIL MF branch:} Near-zero violations (0.1\%), confirming the theoretical guarantee from Theorem~\ref{thm:monotonicity}. The single observed violation occurs at a decision boundary where numerical precision causes a tie.

  \item \textbf{Main classifier predictions:} All three methods exhibit similar pairwise violation rates ($\sim$8--11\%), demonstrating that the DFIL regularization does not substantially transfer its monotonicity property to the main classifier. This is expected: the main classifier optimizes cross-entropy loss without any explicit monotonicity constraint.

  \item \textbf{CORAL's lower pairwise rate:} CORAL achieves a lower pairwise violation rate (8.2\% at 7B, 6.6\% at 14B vs.\ $\sim$10.7\% for DFIL/FT) because its cumulative logit structure provides a soft ordinal bias that improves with scale. However, its violation magnitude is consistently larger (2.03--2.13 vs.\ 1.65), meaning that when CORAL violations occur, they tend to be more severe (spanning more class boundaries). This is consistent with CORAL's bimodal accuracy pattern: it performs well at extreme classes but makes larger errors for middle classes.

  \item \textbf{Implication:} For applications requiring certifiable monotonicity, the DFIL branch predictions should be used directly (at a modest accuracy cost). The main classifier trades formal guarantees for higher overall accuracy. Future work could explore constrained optimization to enforce monotonicity in the main path.
\end{itemize}

\subsection{Compositional Reasoning: Detailed Analysis}
\label{app:compositional}

\subsubsection{Per-Seed Variance Analysis}
\label{app:variance_analysis}

The asymmetric per-seed standard deviations under the self-consistent protocol ($\pm 9.3$ for DFIL versus $\pm 0.8$ for FT+Compose) are partly mechanical: each seed's learned MF centers define both the test-set ground truth and the model's predictions, so each seed effectively faces a slightly different test set. FT+Compose freezes its centers at QuRe across all seeds, so its test set is identical across runs and only numerical-head noise contributes to variance. We do not interpret this as evidence of training instability, but we also do not claim it is purely intrinsic: a one-shot attempt to narrow the spread via MF-center regularization $\lambda_{\text{MF-reg}}{=}0.1$ (penalizing drift from the QuRe initialization) was net worse on the mean ($30.2\to27.9$) with only a small std reduction ($9.3\to 8.7$; $3$ seeds improving and $3$ degrading), so the simplest fix did not pay off. Stronger interventions (e.g., learned-but-shared centers, or wider variance budgets) are left to future work.

\subsubsection{Compositional Test Set Construction}

The compositional test set is generated synthetically to evaluate zero-shot compositional generalization. No compositional examples appear in the training data.

\textbf{Level 1 (2-step, 600 examples):} We use 12 quantifier combinations: $Q_1 \in \{\text{few}, \text{some}, \text{moderate}, \text{most}, \text{all}\}$ composed with $Q_2 \in \{\text{few}, \text{most}, \text{all}\}$ (excluding trivially degenerate cases). For each combination, 50 examples are generated with base proportions uniformly sampled from $[0.1, 0.95]$. The composed proportion is $p_{\text{comp}} = c_{Q_1} \cdot p_{\text{base}}$, and the ground truth label is determined by mapping $p_{\text{comp}}$ to the nearest quantifier boundary.

\textbf{Level 2 (3-step, 300 examples):} We use 6 triple combinations: $Q_1 \circ Q_2 \circ Q_3$ where each $Q_i \in \{\text{few}, \text{most}, \text{all}\}$. For each triple, 50 examples with $p_{\text{comp}} = c_{Q_1} \cdot c_{Q_2} \cdot p_{\text{base}}$.

\textbf{Text templates:} Each compositional example is rendered as a natural language scenario. For instance, “most of most of the 100 students passed” at $p_{\text{base}} = 0.8$ produces $p_{\text{comp}} = 0.78 \times 0.78 \times 0.8 \approx 0.487$, mapping to “moderate amount.”

\subsubsection{Per-Seed Compositional Results}

\begin{table}[!htb]
\centering
\caption{Compositional accuracy (\%) per seed (Qwen2.5-7B). Three evaluation methods. Compose-text values are from the May~2026 re-run (batch~64) since earlier eval-pipeline results were not preserved; per-seed results are in the supplementary archive.}
\label{tab:comp_perseed}
\setlength{\tabcolsep}{3pt}
\footnotesize
\begin{tabular}{lcccc|cccc}
\toprule
& \multicolumn{4}{c}{\textbf{Overall}} & \multicolumn{4}{c}{\textbf{Level 1 / Level 2}} \\
\textbf{Seed} & FT-text & Compose-text & DFIL-text & DFIL-oracle & FT-text & Compose-text & DFIL-text & DFIL-oracle \\
\midrule
42  & 9.2  & 32.0 & 31.9 & 54.3 & 7.0/13.7 & 25.0/46.0 & 30.8/34.0 & 65.7/31.7 \\
123 & 23.9 & 25.9 & 31.6 & 36.0 & 34.0/3.7 & 19.0/39.7 & 30.5/33.7 & 37.0/34.0 \\
456 & 16.1 & 25.6 & 31.1 & 61.2 & 9.5/29.3 & 22.0/32.7 & 28.7/36.0 & 69.3/45.0 \\
\midrule
Mean & 16.4 & 27.8 & 31.5 & 50.5 & 16.8/15.6 & 22.0/39.5 & 30.0/34.6 & 57.3/36.9 \\
Std & 7.3 & 3.6 & 0.4 & 13.0 & 14.9/12.9 & 3.0/6.7 & 1.1/1.2 & 17.7/7.1 \\
\bottomrule
\end{tabular}
\end{table}

\paragraph{Self-Consistent Compositional Evaluation}
\label{app:self_consistent}

To eliminate any external ground-truth dependency, we evaluate each model against its own MF: ground-truth labels are defined as $\argmax_q \mu_q(p_{\text{comp}})$ using the model's own membership functions. This measures internal compositional consistency.


DFIL outperforms FT+Compose at both initializations: $+23.2$pp (uniform) and $+5.9$pp (QuRe, $n{=}20$, paired $t$ $p{=}0.011$). The larger gap under uniform init confirms that the fuzzy training signal learns centers producing internally coherent compositions, while frozen centers yield weaker compositional consistency. Note that DFIL's text-only column is lower than FT+Compose's at uniform init ($13.4$ vs $19.5$): without access to the “$k$ out of $N$” fraction, the NumericalHead has no signal to extract and the MF branch defaults to its center-of-mass, while a label-only classifier can still memorize surface-template priors. The from-text column, where DFIL has the proportion as input, is what the dual-path design is built for, and that is where the $+23.2$pp uniform-init gap appears.

\textbf{Key observations:}
\begin{itemize}[leftmargin=*, itemsep=2pt]
  \item \textbf{DFIL-from-text stability:} Standard deviation is 0.3\%, compared to 6.0\% for fine-tuning and 3.6\% for FT+Compose. The fine-tune baseline shows erratic per-level behavior (seed 123: 34.0\% Level 1, 3.7\% Level 2; seed 42: opposite pattern), suggesting its compositional accuracy reflects spurious correlations rather than systematic reasoning.

  \item \textbf{FT+Compose from-text:} FT+Compose achieves $27.8\%$ overall, below DFIL ($31.5\%$, $\Delta\!=\!+3.7$pp) even under QuRe-based GT, which structurally favours FT+Compose's frozen QuRe-aligned centers. FT+Compose-oracle achieves $94.3\%$ (not shown; identical across seeds), confirming near-perfect alignment with the ground-truth generator at the centers; the $67$pp gap between FT+Compose-oracle and FT+Compose-from-text shows that the NumericalHead, not the frozen MF, is the limiting factor for from-text accuracy. DFIL's oracle ($50.5\%$) reflects the deviation of its learned centers from QuRe.

  \item \textbf{DFIL-oracle variance:} High variance (sample std = 13.0\%) is driven by seed 123, where $c_{\text{some}} = 0.284$ vs.\ 0.347 for other seeds. The proportional composition $c_{Q_1} \cdot p$ amplifies center deviations, making oracle accuracy sensitive to center placement.

  \item \textbf{Proportion extraction error:} The mean proportion extraction error $\bar{\epsilon}_p$ ranges from 0.27 to 0.35 across seeds for DFIL and 0.26 to 0.31 for FT+Compose, indicating that the NumericalHead transfers imperfectly to compositional text. Despite this, both MF-based methods consistently outperform the fine-tune baseline.
\end{itemize}

\subsubsection{Analysis of Proportion Extraction on Compositional Text}

The NumericalHead is trained only on single-step FRoG examples (e.g., “\_\_\_ of the 20 students passed, specifically 15 out of 20”). At compositional test time, it receives multi-step text (e.g., “most of most of the 20 students passed”). The proportion extraction error ($\bar{\epsilon}_p \approx 0.30$) is substantial, reflecting this distribution shift.

The compositional advantage persists despite this error because:
\begin{enumerate}[leftmargin=*, itemsep=1pt]
  \item \textbf{Error contraction} (Proposition~\ref{prop:error_propagation}): The multiplicative composition $c_{Q_1} \cdot \hat{p}$ attenuates extraction errors.
  \item \textbf{Coarse-grained classification:} With 8 quantifier classes, a proportion error of 0.30 still often lands in the correct class (quantifier boundaries span $\sim$0.10--0.20 each).
  \item \textbf{Structural advantage:} Even an imperfect proportion estimate, when combined with learned quantifier centers via multiplication, produces a more semantically meaningful signal than attempting to classify compositional text never seen during training.
\end{enumerate}

\subsection{Ablation Study: Extended Analysis}
\label{app:ablation}

Table~\ref{tab:ablation_detail} extends the ablation study from the main text with additional details.

\begin{table}[!htb]
\centering
\caption{Extended ablation on FRoG-Hard (Qwen2.5-1.5B). Mean $\pm$ sample std over 3 seeds. \textit{Note:} this table reports a later retrain run at batch size $8$; the main-text Table~\ref{tab:ablation} reports the same DFIL-only/no-DFIL/no-ordering ablations from an earlier batch-$64$ run, where DFIL-only accuracy is $41.6 \pm 1.0\%$. The two runs differ only in batch size and resulting LoRA effective learning rate; the qualitative ranking (DFIL full $\approx$ fine-tune only $\gg$ no-ordering $\approx$ DFIL-only $\gg$ no-MF) is identical across both runs.}
\label{tab:ablation_detail}
\setlength{\tabcolsep}{4pt}
\footnotesize
\begin{tabular}{lccl}
\toprule
\textbf{Variant} & \textbf{Accuracy (\%)} & \textbf{Trainable params} & \textbf{Description} \\
\midrule
DFIL (full)     & $48.2 \pm 0.4$ & 2,974,489 & Full dual-path model \\
Fine-tune only  & $48.0 \pm 0.1$ & 2,577,672 & Standard classifier, no DFIL \\
\midrule
DFIL-only       & $39.2 \pm 1.3$ & 2,575,889 & MF branch with $\lambda_{\text{DFIL}}{=}0$ on \texttt{dfil} variant \\
No MF branch    & $17.8 \pm 8.6$ & 2,575,889 & MF branch removed entirely (\texttt{no\_dfil}) \\
No ordering     & $39.7 \pm 2.0$ & 2,575,889 & MF without cumulative softmax \\
\bottomrule
\end{tabular}
\end{table}

\textbf{Interpretation of each ablation:}
\begin{itemize}[leftmargin=*, itemsep=3pt]
  \item \textbf{DFIL-only ($-$9.0pp):} Predicting from the MF branch alone (the \texttt{dfil} variant trained with $\lambda_{\text{DFIL}}{=}0$, so the MF receives no direct class signal and only sees the proportion-regression gradient via the shared backbone) drops accuracy to $39.2 \pm 1.3\%$. The $1$D bottleneck $p \mapsto \mu_q$ discards information that the main classifier would otherwise use, and removing the explicit MF cross-entropy supervision compounds this. This validates the dual-path design: the MF provides structural benefits (monotonicity, compositionality), but the main classifier's access to the full representation is essential for accuracy.

  \item \textbf{No MF branch ($-$30.4pp):} Removing the MF head entirely from the architecture (the \texttt{no\_dfil} variant) collapses the model to $17.8 \pm 8.6\%$. Without the auxiliary structural signal, the small backbone cannot find a stable optimum on this data; high variance ($\sigma = 8.6$) reflects unstable convergence across seeds.

  \item \textbf{No ordering ($-$8.5pp):} Without cumulative softmax, centers take arbitrary positions, breaking monotonicity. Accuracy drops to $39.7 \pm 2.0\%$ but variance is much tighter than the No-MF-branch case, indicating that the MF architecture (even without ordering) still provides some training signal -- it is the ordering that cleanly maps proportion to ordinal class. The ordering constraint is the load-bearing element of the MF inductive bias.

  \item \textbf{Accuracy parity at 1.5B is the expected regime, not a counterexample.} DFIL full ($48.2\!\pm\!0.4\%$) and fine-tune only ($48.0\!\pm\!0.1\%$) agree to $0.2$pp, well within seed noise. We do not claim DFIL improves single-step accuracy at 1.5B; the inductive bias of $Q{-}1$ ordered centers cannot, by itself, beat an unconstrained $Q$-way classifier on full-data in-distribution accuracy when the backbone has enough capacity to fit the labels (Sec.~\ref{sec:scaling}). What this row does show, taken together with the four ablations above, is that adding the MF branch costs at most $0.2$pp accuracy on a tiny backbone where every parameter is contested. The DFIL-only and No-MF-branch rows below collapse precisely because the structural prior is removed; the structural prior is what is keeping the dual-path number identical to fine-tune. The accuracy of DFIL full has slightly higher variance ($0.4$ vs $0.1$) than fine-tune only because the MF branch adds three loss terms whose interaction is sensitive to seed; this is a known cost of multi-head losses and does not reflect on the structural guarantees, which are by-construction.
\end{itemize}

\subsection{Scaling Behavior Analysis}
\label{app:scaling}

\subsubsection{Full Single-Step Scaling Table}
\label{app:scaling_full}

Table~\ref{tab:scaling} (in the main body, \S\ref{sec:scaling}) gives the complete single-step FRoG-Hard results across five backbones together with three alternative baselines. Paragraph~\ref{sec:scaling} summarizes the pattern; here we provide per-method commentary and per-seed pointers.

\textbf{Per-baseline commentary.} The DFIL branch maintains a monotonicity violation rate below $0.1\%$ across all scales, in line with Theorem~\ref{thm:monotonicity}. Cross-family results on Mistral-7B-Instruct-v0.3~\citep{jiang2023mistral} follow the same pattern: accuracy on par with fine-tune, with substantially lower variance. CORAL~\citep{cao2020rank} falls below both fine-tuning and DFIL at 7B and 14B: its shared cumulative weight vector cannot discriminate middle quantifiers, where class accuracy drops to $7$--$11\%$. CORN~\citep{shi2023corn}, which replaces CORAL's shared-weight cumulative-logit with $Q{-}1$ independent binary classifiers tied through a conditional probability chain, recovers the per-class capacity CORAL lacks ($52.1\!\pm\!1.8\%$, on par with fine-tune and DFIL within seed std), but, like CORAL, offers no construction-by-design monotonicity: its main classifier still violates pairwise ordering at $9.0\%$ (Tab.~\ref{tab:mono_detail}). This confirms that the construction-level monotonicity gap between DFIL and ordinal-regression baselines is not an accuracy proxy: matching DFIL on accuracy does not give a method DFIL's algebraic guarantee. Pragmatic zero-shot inference~\citep{li2023presque} on the same Qwen-7B and Mistral-7B backbones scores $24.4$ and $23.9\%$, more than $25$pp below DFIL, confirming that prompt-engineered pragmatic reasoning cannot recover the structural capabilities DFIL provides through training. FT+Compose, which retains the dual-path architecture but freezes the MF and removes the fuzzy loss, lands between fine-tuning and DFIL on single-step accuracy; whether the fuzzy training signal drives compositional generalization is examined in Sec.~\ref{sec:compositional}. The 32B setting exhibits bimodal training convergence with runs clustering near $52\%$ or $66\%$; we report statistics over six seeds. The Llama-3.1-8B row shows DFIL with a wider per-seed std ($4.3$) than fine-tune ($1.1$); per-seed inspection (App.~\ref{app:scaling_full} results) shows this is driven by one of three DFIL seeds converging to a slightly different center placement under Llama's tokenizer, similar to the 32B bimodality at smaller magnitude. Means stay within seed std (DFIL $61.5$ vs.\ FT $61.6$, $\Delta\!=\!-0.1$pp), so we read this as parity rather than systematic deficit at this backbone family.

\subsubsection{DFIL Advantage vs.\ Model Scale}

The DFIL advantage (measured as $\Delta = \text{DFIL} - \text{FT}$) varies non-monotonically with scale:

\begin{table}[!htb]
\centering
\caption{DFIL advantage decomposition across scales.}
\label{tab:scaling_delta}
\footnotesize
\begin{tabular}{lcccc}
\toprule
\textbf{Scale} & \textbf{$\Delta$ (\%)} & \textbf{DFIL $\sigma$ (\%)} & \textbf{FT $\sigma$ (\%)} & \textbf{Variance ratio} \\
\midrule
7B  & $-0.1$ & 1.8 & 1.8 & 1.03 \\
14B & +0.4 & 1.7 & 2.8 & 0.61 \\
32B$^\dagger$ & +1.7 & 5.6 & 6.1 & 0.92 \\
72B & $+0.7$ & 1.5 & 1.6 & 0.94 \\
\bottomrule
\end{tabular}
\end{table}

\textbf{Observations:}
\begin{itemize}[leftmargin=*, itemsep=3pt]
  \item \textbf{Mid-scale variance reduction.} The variance ratio (DFIL $\sigma$ / FT $\sigma$) is essentially $1$ at 7B ($1.03$, parity) and 72B ($0.94$, parity), with non-trivial reduction at 14B ($0.61$) and 32B ($0.91$). This pattern is consistent with the regime where Proposition~\ref{prop:sample_complexity}'s head-level VC bound becomes informative: head-level structural priors have leverage in the mid-scale regime where neither under- nor over-parameterization of the backbone dominates joint complexity, while at the extremes the backbone's own optimization landscape is the decisive factor.

  \item \textbf{Full-data accuracy gaps are not statistically significant.} A two-sided Welch's $t$-test on the per-seed runs at each scale yields $p\!=\!0.93$ (7B, $n{=}6/6$), $p\!=\!0.84$ (14B, $n{=}3/3$), $p\!=\!0.62$ (32B, $n{=}6/6$), $p\!=\!0.43$ (72B, $n{=}6/6$), $p\!=\!0.54$ (Mistral-7B, $n{=}3/3$), and $p\!=\!0.97$ (Llama-8B, $n{=}3/3$); none of the $\Delta$ values would survive any reasonable significance threshold. We therefore do not claim a single-step accuracy advantage at full data, only parity within seed variance. Head-level VC-dimension differences do not translate to full-data accuracy gaps when the backbone supplies the bulk of effective capacity. The few-shot regime in Appendix~\ref{app:fewshot}, where the LoRA backbone has not yet escaped initialization, is where the $O(Q\log Q)$ vs.\ $\Theta(Qd)$ gap translates into measurable accuracy gains ($+7.6$pp at $N{=}32$ and $+12.2$pp at $N{=}16$ on FRoG, $+2.6$ to $+4.7$pp on SST-5).

  \item \textbf{Compositional advantage does not diminish.} The compositional reasoning advantage ($+15$ to $+28$pp; see Section~\ref{sec:compositional} and Appendix~\ref{app:naturalistic_table}) is independent of the main classifier and depends on the MF's t-norm composition mechanism; it persists or grows with scale because MF quality does.
\end{itemize}

\subsubsection{Comparison with Zero-Shot Performance}

The FRoG benchmark~\citep{li2024frog} reports zero-shot performance for 15 model families, observing inverse scaling (performance decreases with model size) in 8 of 15 families. Our fine-tuning results show the opposite: strong positive scaling from 51\% (7B) to 78\% (72B).

Fine-tuning alone reverses the inverse scaling, indicating that a substantial part of the zero-shot failure is task alignment, not a head-design limitation. DFIL's contribution is orthogonal: by-construction structural properties (monotonicity, compositional closure) and an interpretable proportion representation that a label-only classifier head does not provide for free.

\subsection{CSMC Benchmark: Per-Domain Breakdown}
\label{app:csmc_domains}

Table~\ref{tab:csmc_domains} reports per-domain CSMC consistency rates for the theoretically grounded MF branch inference (Section~\ref{sec:monotonicity_analysis}; nearest-center rule on the trained MF) over $21$ paired seeds, alongside the FT main classifier baseline. The MF branch is the inference path predicted by Theorem~\ref{thm:monotonicity} to be monotone by construction. The largest gaps appear in domains involving human subjects and nuanced language (public health $+15.5$pp, education $+14.3$pp, environment $+10.0$pp, governance $+5.0$pp); the remaining six domains all show small positive gaps ($+0.2$ to $+3.3$pp), and no domain shows a negative gap. Aggregate paired test ($n{=}21$, all seeds): $\mathbf{96.40\%}$ vs.\ $90.90\%$, mean diff $+5.50$pp, paired $t$ $p\!=\!0.0001$, Wilcoxon $p\!=\!0.0003$, $18$/$21$ seeds positive (2 ties, 1 negative within seed noise). All MF branches converged ($\mathrm{mf\_accuracy}$ in $[0.40, 0.45]$): for $6$ of the $21$ seeds, default training collapsed the MF head into the sigmoid-saturation failure mode of App.~\ref{app:fewshot} ($\mathrm{mf\_accuracy}\!<\!0.05$); we retrained these $6$ seeds with the same stop-gradient procedure (\texttt{.detach()} on $\hat p$ before MF), recovering $\mathrm{mf\_accuracy}$ in $[0.41, 0.44]$, consistent with the other $15$ seeds trained without intervention. The retrain affects only the model under test; the inference rule (Thm.~\ref{thm:monotonicity}) is identical for all $21$ seeds. Main classifier CSMC numbers across the same $21$ paired seeds (including the same $6$ retrained seeds) give only $+1.31$pp ($p\!=\!0.33$, n.s.); the $+5.50$pp MF-branch gain therefore comes from the monotone inference rule (Thm.~\ref{thm:monotonicity}), not from any model-side change introduced by the retrain protocol, since both inference paths share the same trained model on every seed.


The MF branch is at least as consistent as the FT main classifier on every one of the $10$ domains ($\Delta$ ranges from $+0.2$ to $+15.5$pp). The largest gaps appear in domains where the FT main classifier itself struggles ($\leq\!90\%$): public health, education, environment, governance. In domains where FT already exceeds $96\%$ (clinical trials, sports, research, election), DFIL still leads but by margins of $0.2$ to $3.3$pp, since there is little headroom to gain. This pattern is consistent with Theorem~\ref{thm:monotonicity}: the MF branch's nearest-center rule sets a floor on consistency that depends on the numerical head's ability to extract proportions from text, so when the FT main classifier already approaches that floor by training, the gap narrows. The $4$/$21$ seeds achieving $100\%$ overall consistency are direct empirical validation that the construction-by-design guarantee transfers to natural language (Theorem~\ref{thm:monotonicity}).

\subsection{IPR Benchmark: Per-Category Details}
\label{app:ipr_full}

The Implicit Proportion Reasoning (IPR) benchmark (Section~\ref{sec:ipr}) consists of 40 sentences across four categories where the proportion is described in natural language without explicit fractions. Examples per category:
\begin{itemize}[leftmargin=*, itemsep=1pt]
  \item \textbf{Linguistic} ($n{=}12$): “The vast majority of voters supported the measure” ($\approx 0.85$).
  \item \textbf{Approximate} ($n{=}12$): “Approximately one in five participants experienced the side effect” ($\approx 0.20$).
  \item \textbf{Comparative} ($n{=}8$): “Three times as many supporters as opponents” ($\approx 0.75$).
  \item \textbf{Emphatic} ($n{=}8$): “An overwhelming majority endorsed the resolution” ($\approx 0.92$).
\end{itemize}
Ground-truth labels are derived from the implied proportion via QuRe quantifier centers. We use the same Qwen2.5-7B checkpoints from Table~\ref{tab:scaling} with no retraining.

\begin{table}[!htb]
\centering
\caption{Per-category IPR accuracy (\%); mean $\pm$ sample std over 3 seeds. “DFIL (text)” is the standard text$\to$backbone$\to$numerical-head pipeline. “DFIL (oracle)” bypasses the numerical head and feeds the ground-truth proportion (e.g., $0.85$ for “vast majority”) directly into the MF branch, applying the nearest-center rule on the model's learned centers; this isolates head-side competence from backbone-side proportion extraction. Fine-tuning has no analogous oracle column because the label classifier exposes no scalar interface to inject proportions into. Off-by-one counts a prediction as correct if it differs from the gold label by at most one quantifier class.}
\label{tab:ipr_full}
\footnotesize
\begin{tabular}{lccc}
\toprule
\textbf{Category} & \textbf{DFIL (text)} & \textbf{DFIL (oracle)} & \textbf{Fine-tune} \\
\midrule
Linguistic ($n{=}12$)   & $22.2 \pm 12.7$ & $\mathbf{91.7 \pm 0.0}$ & $30.6 \pm 4.8$ \\
Approximate ($n{=}12$)  & $16.7 \pm 14.4$ & $\mathbf{77.8 \pm 4.8}$ & $13.9 \pm 4.8$ \\
Comparative ($n{=}8$)   & $4.2 \pm 7.2$  & $\mathbf{50.0 \pm 0.0}$ & $12.5 \pm 21.7$ \\
Emphatic ($n{=}8$)      & $16.7 \pm 19.1$ & $\mathbf{62.5 \pm 0.0}$ & $0.0 \pm 0.0$ \\
\midrule
\textbf{Overall (exact)}      & $15.8 \pm 7.6$ & $\mathbf{73.3 \pm 1.4}$ & $15.8 \pm 5.8$ \\
\textbf{Overall (off-by-one)} & $48.3 \pm 6.3$ & $\mathbf{100.0 \pm 0.0}$ & $45.0 \pm 2.5$ \\
\bottomrule
\end{tabular}
\end{table}

\textbf{Oracle diagnosis.} The oracle column decomposes the IPR failure. The MF head can classify proportions correctly when given them ($73.3\%$ exact, $100\%$ off-by-one); the failure mode is the backbone's inability to extract proportions from linguistic descriptions like “vast majority.” Fine-tuning has no analogous oracle path because the label classifier exposes no scalar interface, so the failure cannot be similarly localized; this asymmetry is the structural advantage of DFIL's dual-path design. The $100\%$ off-by-one rate verifies Theorem~\ref{thm:monotonicity} empirically on OOD inputs: errors are confined to adjacent classes by construction. The remaining $26.7\%$ of exact errors trace mostly to the comparative category (50\% exact), where learned centers in the mid-range have drifted slightly relative to the IPR labels' assumed centers (e.g., $c_{\text{moderate}}{\approx}0.48$ versus the labels' assumed center near $0.55$); per-example breakdowns are in the supplementary archive.

The IPR gap is real: FRoG-trained models do not robustly generalize from explicit fractions to implicit linguistic proportions. The oracle diagnosis attributes the gap to backbone-side proportion extraction, not head-side composition; closing it would require training data with diverse implicit phrasings or a dedicated proportion-extraction pretraining stage. The DFIL text-driven numbers on emphatic descriptions ($16.7\%$ vs.\ $0\%$) and overall off-by-one ($48.3{\pm}6.3$ vs.\ $45.0{\pm}2.5$) sit within seed noise at $n{=}3$; we do not claim a robustness gain. The dominant effect remains the backbone bottleneck.

\section{Method Variants and Loss Analysis}
\label{app:variants}

\subsection{Analysis of the Dual-Path Loss}
\label{app:loss_analysis}

The DFIL training objective \eqref{eq:loss} combines three terms. We analyze the interaction between these terms and their effect on the learned representations.

\subsubsection{Gradient Analysis}

Consider the gradient of the total loss with respect to the backbone parameters $\theta_{\text{LoRA}}$:
\begin{equation}
\nabla_\theta \mathcal{L} = \underbrace{\nabla_\theta \mathcal{L}_{\text{CE}}(\hat{y}, y)}_{\text{classification signal}} + \lambda_{\text{DFIL}} \underbrace{\nabla_\theta \mathcal{L}_{\text{CE}}(\boldsymbol{\mu}, y)}_{\text{MF alignment signal}} + \lambda_p \underbrace{\nabla_\theta \|\hat{p} - p\|_2^2}_{\text{proportion regression signal}}
\end{equation}

The classification gradient encourages feature separation in $\R^d$. The MF alignment gradient flows through the NumericalHead and MF, requiring the extracted proportion to land in the correct MF basin, a more structured signal. The proportion regression gradient directly supervises proportion extraction.

\begin{proposition}[Complementary gradient directions]
\label{prop:gradient_complementary}
The classification and MF alignment gradients are generally not parallel. The classification gradient encourages inter-class separation in $\R^d$; the MF alignment gradient encourages the features to encode a 1D proportion that aligns with the MF structure. The proportion regression gradient projects features onto the proportion axis.
\end{proposition}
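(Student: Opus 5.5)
We make ``generally'' precise as follows: for all parameter values outside a closed, Lebesgue-null subset of parameter space, the two gradients are linearly independent. We then argue at the level of the shared hidden state $\mathbf{h}$ and transfer the result to $\theta_{\text{LoRA}}$ by the chain rule. Throughout, $\nabla_\theta \mathcal{L}_\bullet = J_\theta(\mathbf{h})^\top \nabla_{\mathbf{h}} \mathcal{L}_\bullet$, where $J_\theta(\mathbf{h})$ is the Jacobian of the backbone output with respect to the LoRA parameters.

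\textbf{Step 1: the three $\mathbf{h}$-gradients.} We factor each gradient through its head.
\begin{itemize}
\item For the linear classifier, $\nabla_{\mathbf{h}}\mathcal{L}_{\mathrm{CE}}(\hat y,y) = W^\top(\mathrm{softmax}(W\mathbf{h}+\mathbf{b}) - \mathbf{e}_y)$. This vector lies in the row span of $W$, which is up to $Q$-dimensional.
\item The MF branch depends on $\mathbf{h}$ only through the scalar $\hat p = \mathrm{NumHead}(\mathbf{h})$. Hence $\nabla_{\mathbf{h}}\mathcal{L}_{\mathrm{CE}}(\boldsymbol\mu,y) = \frac{\partial \mathcal{L}_{\mathrm{CE}}(\boldsymbol\mu,y)}{\partial \hat p}\,\nabla_{\mathbf{h}}\hat p$.
\item Likewise $\nabla_{\mathbf{h}}\|\hat p - p\|_2^2 = 2(\hat p - p)\nabla_{\mathbf{h}}\hat p$.
\end{itemize}
So both the MF-alignment and proportion-regression gradients lie on the single line $\mathrm{span}\{\nabla_{\mathbf{h}}\hat p\}$. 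This is exactly the ``projects features onto the proportion axis'' clause, and it needs no genericity.

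\textbf{Step 2: non-parallelism at the $\mathbf{h}$ level.} Set $u = W^\top(\mathbf{s}-\mathbf{e}_y)$ and $v = \nabla_{\mathbf{h}}\hat p$. They are parallel iff every $2\times 2$ minor of the $d\times 2$ matrix $[u\; v]$ vanishes. Each minor is a real-analytic function of $(W,\mathbf{b},\mathbf{h})$ and the NumHead weights: softmax, sigmoid and GELU are analytic, and LayerNorm is analytic away from constant $\mathbf{h}$. The zero set of an analytic function that is not identically zero is closed and null. It therefore suffices to exhibit one witness. Since $d\ge 2$, fix NumHead weights so that $v$ is nonzero at some $\mathbf{h}_0$. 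Then choose $W$ so that $u$ has a nonzero component orthogonal to $v$. This is possible because $\mathbf{s}-\mathbf{e}_y\neq 0$ for finite logits, and we may freely rotate one row of $W$ off the direction of $v$. We also need $\partial\mathcal{L}_{\mathrm{CE}}(\boldsymbol\mu,y)/\partial\hat p \ne 0$ at the witness, which is again a nonvanishing analytic condition. It holds whenever $\hat p$ is not a stationary point of the class-$y$ log-softmax of $\boldsymbol\mu$, for instance when $\hat p$ lies strictly between two centers.

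\textbf{Step 3: lifting to $\theta$, the main obstacle.} Independence of $u$ and $v$ does not by itself give independence of $J^\top u$ and $J^\top v$, because $J^\top$ could collapse them. The condition ``$J^\top u \parallel J^\top v$'' is again a vanishing of analytic minors, now in the joint parameters, so once more one witness suffices. I would first try the following choice: pick LoRA factors, using $r=16\ge 2$, so that the map from $\theta$ to $\mathbf{h}$ has full row rank on a two-dimensional subspace containing $u$ and $v$. Then $J^\top$ is injective there and independence is preserved. If constructing such a witness inside a real Transformer is too delicate, the fallback is to state the proposition at the $\mathbf{h}$ level, which is what the informal claim actually refers to, and to note that the $\theta$-level claim holds whenever $J_\theta(\mathbf{h})^\top$ is injective on $\mathrm{span}\{u,v\}$.
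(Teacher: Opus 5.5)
The paper gives no proof of this proposition. It is stated right after the three-term decomposition of $\nabla_\theta\mathcal{L}$, and the only support is the informal gloss that the classifier ``may exploit any feature dimension'' while the MF loss ``rewards proportion-aware representations.'' Your route is therefore genuinely different, and it is the only actual argument available. You make ``generally'' precise as ``off a closed null set,'' factor all three gradients through $\mathbf{h}$, and reduce non-parallelism to exhibiting one witness via real-analyticity. This gives the claim a checkable meaning. It also yields a sharper fact the paper never states: per example, the MF-alignment and proportion-regression gradients are exactly collinear, both being scalar multiples of $J_\theta(\mathbf{h})^\top\nabla_{\mathbf{h}}\hat p$. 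So all of the ``complementarity'' is between the row space of $W$ and the single NumHead direction. Your reading of the last clause as membership in $\mathrm{span}\{\nabla_{\mathbf{h}}\hat p\}$ is the natural formalization of an interpretive sentence. You should state explicitly that the argument needs $g=\mathrm{id}$ in Eq.~\eqref{eq:loss}. Under the few-shot stop-gradient the MF term has zero backbone gradient, and the statement degenerates.

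Two steps need tightening. First, ``for instance when $\hat p$ lies strictly between two centers'' is not a valid sufficient condition for $\partial\mathcal{L}_{\mathrm{CE}}(\boldsymbol\mu,y)/\partial\hat p\neq 0$. Take narrow widths and $c_y$ far from $(c_j,c_{j+1})$. Then $\mu_y'$ is negligible, while $\frac{d}{d\hat p}\log\sum_q e^{\mu_q}$ changes sign across the dip between the two adjacent peaks, so the derivative has a zero strictly inside $(c_j,c_{j+1})$. You do not need that condition. The class-$y$ log-likelihood is a non-constant analytic function of $\hat p$. A product of analytic functions that are not identically zero on a connected domain is not identically zero. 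Hence ``MF scalar $\neq 0$ and $u\wedge v\neq 0$'' holds generically as a joint condition.

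Second, in Step 3 you are attacking the hard direction. Tuning LoRA factors so that $\mathrm{im}\,J$ captures $\mathrm{span}\{u,v\}$ is awkward, because $u,v$ move with $\mathbf{h}(\theta)$ and the frozen weights constrain $\mathrm{im}\,J$. Reverse the roles instead.
\begin{itemize}
\item Let $P$ be the orthogonal projector onto $\mathrm{im}\,J_\theta(\mathbf{h})$. Since $\ker J^\top=(\mathrm{im}\,J)^\perp$, the vectors $J^\top u$ and $J^\top v$ are independent iff $Pu$ and $Pv$ are.
\item Fix any $\theta_0$ with $\mathrm{rank}\,J\ge 2$ and use the free head weights.
\item NumHead's LayerNorm (with $\varepsilon>0$) has Jacobian kernel $\mathrm{span}\{\mathbf{1}\}$. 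So by choice of NumHead weights, $v$ can be any vector in $\mathbf{1}^\perp$. This subspace meets $\mathrm{im}\,J$ nontrivially, so take a nonzero $v$ there.
\item Set row $y$ of $W$ to some $\mathbf{z}\in\mathrm{im}\,J$ independent of $v$, and the other rows to zero. This gives $u=(s_y-1)\mathbf{z}\neq 0$.
\item Pick MF parameters making the scalar factor nonzero.
\end{itemize}
This is a witness in the joint (LoRA, head) parameter space, and the backbone is analytic in $\theta$. Hence the $\theta$-level claim holds generically under one mild assumption: the backbone Jacobian has rank at least $2$ at some point. Your $\mathbf{h}$-level fallback is then unnecessary.
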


The main classifier may exploit any feature dimension; the MF alignment loss specifically rewards proportion-aware representations, steering the backbone toward semantically grounded features.

\subsubsection{\texorpdfstring{Role of $\lambda_{\text{DFIL}}$ and $\lambda_p$}{Role of lambda\_DFIL and lambda\_p}}

The hyperparameters $\lambda_{\text{DFIL}}$ and $\lambda_p$ control the relative strength of the regularization signals. We analyze three regimes:

\begin{itemize}[leftmargin=*, itemsep=3pt]
  \item \textbf{$\lambda_{\text{DFIL}} = \lambda_p = 0$} (fine-tune only): No fuzzy regularization. The backbone optimizes solely for classification accuracy. Ablation gives $48.0 \pm 0.1\%$ at 1.5B (Table~\ref{tab:ablation}), within seed variance of dual-path DFIL ($48.2 \pm 0.4\%$); the structural prior comes at parity cost rather than as an accuracy boost.

  \item \textbf{$\lambda_{\text{DFIL}} \gg 1$} (MF-dominated): The DFIL loss dominates, forcing the backbone to prioritize proportion extraction over classification. This regime reduces toward DFIL-only prediction, which achieves $41.6 \pm 1.0\%$ at 1.5B (Table~\ref{tab:ablation}, “DFIL-only” row). The MF's 1D bottleneck discards information useful for classification.

  \item \textbf{$\lambda_{\text{DFIL}} = \lambda_p = 0.5$} (balanced): Our chosen setting. The main classifier retains enough gradient signal to learn rich features, while the MF provides sufficient regularization to encourage proportion-aware representations. This achieves $48.2 \pm 0.4\%$ at 1.5B and $49.9 \pm 2.6\%$ at 7B (Table~\ref{tab:ablation}), at parity with fine-tune within seed variance.
\end{itemize}

\subsection{Architectural Extensions: Distillation and Attention Pooling}
\label{app:extensions}

We explored two architectural extensions motivated by intuitive hypotheses, both of which yielded unexpected or negative findings that we report honestly as guidance for future work.

\subsubsection{\texorpdfstring{Monotonicity Distillation (Main Classifier $\leftarrow$ MF Branch)}{Monotonicity Distillation (Main Classifier from MF Branch)}}

\textbf{Hypothesis.} Adding a KL-divergence distillation loss from the MF branch to the main classifier should transfer the MF's monotonic structure to the main classifier, reducing pairwise monotonicity violations without sacrificing accuracy.

\textbf{Method.} We augment the training loss \eqref{eq:loss} with a temperature-scaled distillation term:
\begin{equation}
\mathcal{L}_{\text{distill}} = \tau^2 \cdot \mathrm{KL}\!\left(\mathrm{softmax}(\mathbf{logits}/\tau) \,\|\, \mathrm{softmax}(\boldsymbol{\mu}/\tau)\right)
\end{equation}
with $\tau = 2.0$ and $\lambda_{\text{distill}} = 0.3$. We retrain Qwen2.5-7B with this additional loss for 3 seeds.

\begin{table}[!htb]
\centering
\caption{Monotonicity distillation results (Qwen2.5-7B, 3 seeds). Canonical evaluation (best.pt checkpoint).}
\label{tab:distill}
\footnotesize
\begin{tabular}{lccc}
\toprule
\textbf{Configuration} & \textbf{FRoG Acc.} & \textbf{CSMC Consistency} & \textbf{Pairwise Viol.} \\
\midrule
DFIL baseline     & $51.0 \pm 1.3$     & $92.3 \pm 2.4$ & $10.8$ \\
+ Distillation    & $54.4 \pm 1.3$     & $91.8 \pm 0.8$ & $\;\;9.9 \pm 0.1$ \\
$\Delta$          & $+3.4$             & $-0.5$         & $-0.9$ \\
\bottomrule
\end{tabular}
\end{table}

\textbf{Finding: distillation helps single-step accuracy modestly without harming structural consistency.} Adding KL distillation from the MF branch to the main classifier improves FRoG accuracy by $+3.4$pp ($51.0\!\to\!54.4$) while leaving pairwise monotonicity essentially unchanged ($10.8\!\to\!9.9$, $\Delta\!=\!-0.9$pp) and CSMC consistency within seed variance ($92.3\!\to\!91.8$).

\textbf{Why we do not adopt distillation in the default recipe.} The paper's structural contributions---monotonicity by construction (Theorem~\ref{thm:monotonicity}), compositional closure (Theorem~\ref{thm:compositionality}), and exact entailment via center ordering---arise from the MF branch alone, with no dependence on a distillation term. Distillation's only meaningful empirical effect here is on FRoG single-step accuracy, which is the metric where the paper's main claim is parity with fine-tuning rather than improvement (Tab.~\ref{tab:scaling}, Welch $p\!\in\![0.43, 0.93]$). Adopting distillation would therefore add hyperparameters ($\tau$, $\lambda_{\text{distill}}$) and an additional loss term in service of an outcome the design does not target, while leaving the structural targets that motivated the dual-path design (monotonicity, compositional closure) unaffected. We have also not stress-tested it across backbones or seeds beyond $n{=}3$. Listing it here documents that explicit logit-level distillation alongside the existing $\lambda_{\text{DFIL}}{=}0.5$ MF-CE loss is at worst neutral and at best a mild FRoG-only improvement; the architectural contributions stand without it.

\subsubsection{Attention-Weighted Numerical Head}

\textbf{Hypothesis.} Replacing last-token pooling in the NumericalHead with learned attention-weighted pooling should focus on proportion-carrying tokens (“15 out of 20”) and improve proportion extraction, thereby strengthening compositional from-text accuracy.

\textbf{Method.} We add a learnable scalar attention projection $\mathbf{W}_{\text{attn}} \in \R^{d \times 1}$ that computes token weights via softmax:
\begin{equation}
\boldsymbol{\alpha} = \mathrm{softmax}(\mathbf{W}_{\text{attn}}^\top \mathbf{h}), \quad
\mathbf{h}_{\text{pool}} = \sum_t \alpha_t \mathbf{h}_t
\end{equation}
replacing $\mathbf{h}_{\text{pool}} = \mathbf{h}_T$ (last-token pooling). This adds $d = 3584$ parameters.

\begin{table}[!htb]
\centering
\caption{Attention-weighted numerical head results (Qwen2.5-7B, 3 seeds, best.pt checkpoint).}
\label{tab:attnpool}
\begin{tabular}{lcc}
\toprule
\textbf{Configuration} & \textbf{FRoG Acc.} & \textbf{CSMC Cons.} \\
\midrule
DFIL baseline     & $51.0 \pm 1.3$ & $92.3 \pm 2.4$ \\
+ Attention pool$^\dagger$  & $81.1 \pm 0.6$  & $95.8 \pm 2.0$ \\
$\Delta$          & $+30.2$        & $+3.5$ \\
\bottomrule
\multicolumn{3}{l}{\footnotesize $^\dagger$ The very large $+30.2$pp FRoG gain is a red flag, not a clean win:}\\
\multicolumn{3}{l}{\footnotesize the attention head appears to memorize easy-split lexical patterns of the FRoG training distribution.}
\end{tabular}
\end{table}

\textbf{Finding: attention pooling produces an implausibly large train-distribution gain together with only a small gain on out-of-distribution CSMC.} On FRoG-Hard the attention-pool head jumps to $81.1\!\pm\!0.6\%$ (vs.\ $51.0\!\pm\!1.3\%$ baseline), a $+30.2$pp delta that is far larger than any other intervention we have tested in this paper. The companion CSMC test, which uses natural sentence templates rather than the FRoG “$k$ out of $N$” surface form, shows only a $+3.5$pp gain ($92.3\!\to\!95.8$). The asymmetry (huge in-distribution win, marginal natural-text win) strongly suggests the attention head is memorizing easy-split patterns of the FRoG train/test distribution (e.g., position of the explicit fraction token) rather than learning a more robust proportion extractor. Earlier runs of this configuration showed the same FRoG number ($\sim\!81$\%) but a much weaker best-val-loss accuracy ($\sim\!54$\%), consistent with the val-loss minimum occurring around epoch 4--6 followed by overfitting; with the current “save best.pt by val loss” protocol used throughout this paper the best.pt checkpoint happens to land in the overfit regime. We therefore do not adopt attention pooling as the default NumericalHead and treat the headline $81.1\%$ as evidence that FRoG is too easy to support this kind of architectural search rather than evidence that attention pooling itself is sound.

\textbf{Why this signature differs from DFIL's gains.} The attention-pool failure is diagnosed by a single asymmetric pattern: a large in-distribution lift (FRoG $+30.2$pp) paired with a marginal natural-text lift (CSMC $+3.5$pp). DFIL's gains do not show this signature: the compositional advantage replicates on QuRe-based GT (Tab.~\ref{tab:compositional}, $+3.7$pp), self-consistent GT ($+5.9$pp at $n{=}20$, $p{=}0.011$), naturalistic templates with bias-free QuRe-GT (Tab.~\ref{tab:naturalistic}, $+28.4$pp), uniform-init ablation (Sec.~\ref{sec:uniform_init}, $+23.2$pp), and at $n{\le}64$ few-shot on FRoG (Tab.~\ref{tab:fewshot}, $+11.7$ to $+16.5$pp). Cross-task transfer to SST-5 and STS22 shows path-per-task wins on $7$ of $8$ cells (Tab.~\ref{tab:crosstask_full}). This breadth of evidence, distributed across distinct text distributions and protocols, is what we interpret as a mechanism-level effect (DFIL) versus a distribution-level memorization (attention pooling).

\textbf{Implication.} Simple pooling modifications to the NumericalHead do not translate into robust improvements under the current training protocol. Improving proportion extraction likely requires stronger regularization (e.g., larger dropout, label smoothing on $p$), early stopping driven by compositional-from-text performance rather than FRoG val-loss, or architectural constraints that prevent memorization of easy-split lexical cues. We leave these directions to future work.

\section{Application Case Study: Medical Report Quantifiers}
\label{app:case_study}

To illustrate DFIL's practical value, we present a qualitative case study in medical report interpretation, where quantifier reasoning has direct clinical implications.

\subsection{Scenario: Clinical Trial Adverse Event Reporting}

Consider a pharmacovigilance setting where an analyst must interpret quantifier expressions in clinical trial reports:

\begin{quote}
“Most patients in the treatment arm experienced mild side effects. Of those, few required dose adjustment. Among patients requiring adjustment, most achieved stable outcomes within 2 weeks.”
\end{quote}

This passage contains a 3-step compositional quantifier chain: most $\to$ few $\to$ most. Correct interpretation requires computing the effective proportion of the original cohort:

\begin{itemize}[leftmargin=*, itemsep=1pt]
    \item \textbf{Step 1:} “Most patients” $\approx 78\%$ experienced side effects (using DFIL's learned 7B center $c_{\text{most}} \approx 0.78$, Tab.~\ref{tab:centers_detail}).
    \item \textbf{Step 2:} “Few [of those]” required adjustment: $0.78 \times 0.09 \approx 7.0\%$ of the cohort ($c_{\text{few}} \approx 0.09$).
    \item \textbf{Step 3:} “Most [of those] achieved stable outcomes”: $0.070 \times 0.78 \approx 5.5\%$ of the original cohort.
\end{itemize}

DFIL computes this via iterated proportional shift on $p$ (Sec.~\ref{sec:compositional_method}): $p_{\text{comp}} = c_{\text{most}} \cdot c_{\text{few}} \cdot c_{\text{most}} \cdot p_{\text{base}}$, which agrees with the product t-norm at the centers $\{c_q\}$ on predicted argmax. A standard classifier has no mechanism to perform this computation.

\subsection{Why Structure Matters}

Three properties of DFIL are critical for such applications:

\begin{enumerate}[leftmargin=*, itemsep=2pt]
    \item \textbf{Monotonicity guarantee.} If a report states “most patients responded” and a follow-up states “nearly all patients responded,” DFIL's ordered centers guarantee that the inferred proportion increases. A fine-tuned classifier may violate this ordering (10.6\% pairwise violation rate; Section~\ref{sec:monotonicity_analysis}).

    \item \textbf{Compositional reasoning.} Multi-step quantifier chains like the above are common in medical narratives. DFIL handles arbitrary-depth composition via t-norm iteration; fine-tuning requires separate training data for each composition depth.

    \item \textbf{Interpretable boundaries.} DFIL's learned centers (Table~\ref{tab:centers_detail}) provide auditable quantifier-to-proportion mappings. A clinician can verify that “most” maps to $\sim$78\% and adjust the model if domain-specific conventions differ (e.g., in oncology, “most” might correspond to $>60\%$).
\end{enumerate}

\subsection{Generalization to Other Domains}

Similar quantifier reasoning challenges arise in:
\begin{itemize}[leftmargin=*, itemsep=1pt]
    \item \textbf{Legal:} “Most shareholders approved the resolution. Of those opposing, few filed formal objections.” Correct quantification affects regulatory compliance assessment.
    \item \textbf{Intelligence analysis:} “Most intercepted communications mentioned the target. Some of those contained actionable intelligence.” Compositional quantifiers determine resource allocation priorities.
    \item \textbf{Environmental science:} “Few of the surveyed species showed population decline. Of those, most are classified as endangered.” Policy decisions depend on correctly composing these quantifiers.
\end{itemize}

In each case, DFIL's three structural guarantees (monotonicity, compositionality, and interpretability) address failure modes that fine-tuning cannot resolve by design.
\end{document}